\def\eqref#1{equation~\ref{#1}}
\def\1{\bm{1}}
\DeclareMathAlphabet{\mathsfit}{\encodingdefault}{\sfdefault}{m}{sl}
\SetMathAlphabet{\mathsfit}{bold}{\encodingdefault}{\sfdefault}{bx}{n}
\theoremstyle{plain}
\newtheorem{theorem}{Theorem}[section]
\newtheorem{corollary}[theorem]{Corollary}
\theoremstyle{definition}
\newtheorem{definition}[theorem]{Definition}
\newtheorem{example}[theorem]{Example}
\theoremstyle{remark}
\newtheorem*{remark}{Remark}
\renewcommand{\le}{\leqslant}
\renewcommand{\ge}{\geqslant}
\renewcommand{\leq}{\leqslant}
\renewcommand{\geq}{\geqslant}
\newcommand{\Img}{\operatorname{Im}}
\icmltitlerunning{Tree-Sliced Wasserstein Distance: A Geometric Perspective}
\begin{document}

\twocolumn[
\icmltitle{Tree-Sliced Wasserstein Distance: A Geometric Perspective}



\icmlsetsymbol{equal}{*}
\icmlsetsymbol{equaladv}{\dag}
\begin{icmlauthorlist}
\icmlauthor{Viet-Hoang Tran}{equal,nus}
\icmlauthor{Trang Pham}{equal,qualcomm}
\icmlauthor{Tho Tran}{nus}
\icmlauthor{Khoi N.M Nguyen}{nus}
\icmlauthor{Thanh T. Chu}{nus}
\icmlauthor{Tam Le}{equaladv,ism}
\icmlauthor{Tan M. Nguyen}{equaladv,nus}
\end{icmlauthorlist}

\icmlaffiliation{nus}{National University of Singapore}
\icmlaffiliation{qualcomm}{Movian AI}
\icmlaffiliation{ism}{The Institute of Statistical Mathematics}

\icmlcorrespondingauthor{Viet-Hoang Tran}{hoang.tranviet@u.nus.edu}
\icmlkeywords{tree-sliced Wasserstein distance, tree Wasserstein distance, optimal transport}

\vskip 0.3in
]



\printAffiliationsAndNotice{$^*$Equal contribution $\,^{\dagger}$Co-last authors}

\begin{abstract}
Many variants of Optimal Transport (OT) have been developed to address its heavy computation. Among them, notably, Sliced Wasserstein (SW) is widely used for application domains by projecting the OT problem onto one-dimensional lines, and leveraging the closed-form expression of the univariate OT to reduce the computational burden. However, projecting measures onto low-dimensional spaces can lead to a loss of topological information. To mitigate this issue, in this work, we propose to replace one-dimensional lines with a more intricate structure, called \emph{tree systems}. This structure is metrizable by a tree metric, which yields a closed-form expression for OT problems on tree systems. We provide an extensive theoretical analysis to formally define tree systems with their topological properties, introduce the concept of splitting maps, which operate as the projection mechanism onto these structures, then finally propose a novel variant of Radon transform for tree systems and verify its injectivity. This framework leads to an efficient metric between measures, termed Tree-Sliced Wasserstein distance on Systems of Lines (TSW-SL). By conducting a variety of experiments on gradient flows, image style transfer, and generative models, we illustrate that our proposed approach performs favorably compared to SW and its variants.
\end{abstract}

\section{Introduction}
Optimal transport (OT) \citep{villani2008optimal, peyre2019computational} is a naturally geometrical metric for comparing probability distributions. Intuitively, OT lifts the ground cost metric among supports of input measures into the metric between two input measures. OT has been applied in many research fields, including 
machine learning~\citep{nguyen2021optimal, bunne2022proximal, fan2022complexity, hua2023curved, le2024generalized, le2024noisy, nguyen2024energy, kessler2025sava, chapel2025differentiable, chapel2025one}, statistics~\citep{mena2019statistical, pmlr-v99-weed19a, liu2022entropy, nguyen2022many, nietert2022outlier, pmlr-v151-wang22f, pham2024scalable}, multimodal~\citep{park2024bridging, luong2024revisiting},  computer vision and graphics~\citep{rabin2011wasserstein, solomon2015convolutional, lavenant2018dynamical, nguyen2021point, Saleh_2022_CVPR, vu2025few}. 



However, OT has a supercubic computational complexity concerning the number of supports in input measures \citep{peyre2019computational}. To address this issue, Sliced-Wasserstein (SW)~\citep{rabin2011wasserstein, bonneel2015sliced} exploits the closed-form expression of the one-dimensional OT to reduce its computational complexity. More concretely, SW projects supports of input measures onto a random line and leverage the fast computation of the OT on one-dimensional lines. SW is widely used in various applications, such as gradient flows \citep{bonet2021efficient, liutkus2019sliced}, clustering \citep{kolouri2018sliced, ho2017multilevel}, domain adaptation \citep{courty2017joint}, generative models \citep{deshpande2018generative, wu2019sliced, nguyen2022amortized}, thanks to its computational efficiency. Due to relying on one-dimensional projection, SW limits its capacity to capture the topological structures of input measures, especially in high-dimensional domains. 

\paragraph{Related work.} Prior studies have aimed to enhance the Sliced Wasserstein (SW) distance \citep{nguyen2024quasimonte, nguyen2020distributional, nguyen2024energy} or explore variants of SW \citep{bai2023sliced, kolouri2019generalized, quellmalz2023sliced}. These works primarily concentrate on improving existing components within the SW framework, including the sampling process \citep{nguyen2024quasimonte, nguyen2020distributional, nadjahi2021fast}, determining optimal lines for projection \citep{deshpande2019max}, and modifying the projection mechanism \citep{kolouri2019generalized, bonet2023hyperbolic}. However, few studies have focused on replacing one-dimensional lines, which play the role of integration domains, with more complex domains such as one-dimensional manifolds~\citep{kolouri2019generalized}, or low-dimensional subspaces \citep{alvarez2018structured, bonet2023hyperbolic, paty2019subspace, niles2022estimation, lin2021projection, huang2021riemannian, muzellec2019subspace}. In this paper, we concentrate on the latter approach, aiming to discover novel geometrical domains that meet \textit{two key criteria}: (i) pushing forward of high-dimensional measures onto these domains can be processed in a meaningful manner, and (ii) OT problems on these domains can be efficiently solved, ideally with a closed-form solution.

\paragraph{Contribution.} Our contributions are three-fold:
\begin{itemize}[leftmargin=22pt]
    \item We introduce the concept of \emph{tree systems}, which consist of copies of the real line equipped with additional structures, and study their topology. A key property of tree systems is that they form well-defined metric spaces, with metrics being tree metrics. This property is sufficient to guarantee that OT problems on tree systems admit closed-form solutions.
    \item We define the space of integrable functions and probability measures on a tree system, and introduce a novel transform, called \textit{Radon Transform on Systems of Lines}. This transform naturally transforms measures supported in high-dimensional space onto tree systems, and is a generalization of the original Radon transform. The injectivity of this variant holds, similar to other Radon transform variants in the literature. 
    \item We propose the Tree-Sliced Wasserstein distance on Systems of Lines (TSW-SL), and analyze its efficiency through the closed-form solution for the OT problem on tree systems, achieving a similar computational cost as the traditional SW.
\end{itemize}

\paragraph{Organization.} The remainder of the paper is organized as follows. Section~\ref{sec:preliminaries} provides necessary backgrounds of SW distance and Wasserstein distance on tree metric spaces. Section~\ref{sec:tree-system} provides a brief and intuitive introduction of tree systems and studies its properties, and Section~\ref{sec:space-of-functions} introduces the Radon Transform on System of Lines. The novel Tree-Sliced Wasserstein distance on Systems of Lines is proposed in Section~\ref{sec:TSW-SL}. Finally, Section~\ref{sec:experimental_results} contains empirical results for TSW-SL. Formal constructions, theoretical proofs of key results, and additional materials are presented in Appendix.
\section{Preliminaries}
\label{sec:preliminaries}

In this section, we review Sliced Wasserstein (SW) distance and Wasserstein distances on metric spaces with tree metrics (TW).


\paragraph{Wasserstein Distance.} 
Let $\Omega$ be a measurable space with a metric $d$ on $\Omega$, and let $\mu$, $\nu$ be two probability distributions on $\Omega$. Let $\mathcal{P}(\mu, \nu)$ be the set of probability distributions $\pi$ on the product space $\Omega \times \Omega$ such that $\pi(A \times \Omega) = \mu(A)$, $\pi(\Omega \times B) = \nu(B)$ for all measurable sets $A$, $B$. For $p \ge 1$, the $p$-Wasserstein distance $\text{W}_{p}$ between $\mu$ and $\nu$ \citep{villani2008optimal} is defined as:
\begin{equation}\label{equation:OTproblem}
\text{W}_{p}(\mu, \nu) = \underset{\pi \in \mathcal{P}(\mu, \nu)}{\inf}   \left( \int_{\Omega \times \Omega} d(x,y)^p  ~ d\pi(x,y) \right)^{\frac{1}{p}}.
\end{equation}

\paragraph{Sliced Wasserstein Distance.}  For $\mu,\nu \in \mathcal{P}(\mathbb{R}^d)$, the Sliced $p$-Wasserstein distance (SW)~\citep{rabin2011wasserstein, bonneel2015sliced} between $\mu,\nu$ is defined by:
\begin{align}
\label{eq:SW}
    \text{SW}_p(\mu,\nu)  =  \left(\int_{\mathbb{S}^{d-1}} \text{W}_p^p (\mathcal{R}f_{\mu}(\cdot, \theta), \mathcal{R}f_{\nu}(\cdot, \theta)) ~ d\sigma(\theta) \right)^{\frac{1}{p}},
\end{align}
where $\sigma = \mathcal{U}(\mathbb{S}^{d-1})$ is the uniform distribution on $\mathbb{S}^{d-1}$, operator $\mathcal{R} ~ \colon ~ L^1(\mathbb{R}^d) \rightarrow L^1(\mathbb{R} \times \mathbb{S}^{d-1})$ is the Radon Transform \citep{helgason2011radon}: 
\begin{align}
    \mathcal{R}f(t,\theta) = \int_{\mathbb{R}^d} f(x) \cdot \delta(t-\left<x,\theta\right>) ~ dx,
\end{align}
and $f_\mu, f_\nu$ are the probability density functions of $\mu, \nu$, respectively. Max Sliced Wasserstein (MaxSW) distance is discussed in Appendix~\ref{appendix:MaxTSW-SL}. 


\paragraph{Monte Carlo estimation for SW.} The Monte Carlo method is usually employed to approximate the intractable integral in Equation (\ref{eq:SW}) as follows:
\begin{align}
\label{eq:MCSW}
    \widehat{\text{SW}}_{p}(\mu,\nu) = \left (\dfrac{1}{L} \sum_{l=1}^L\text{W}_p^p (\mathcal{R}f_{\mu}(\cdot, \theta_l), \mathcal{R}f_{\nu}(\cdot, \theta_l)) \right)^{\frac{1}{p}},
\end{align}
where $\theta_{1},\ldots,\theta_{L}$ are drawn independently from $\mathcal{U}(\mathbb{S}^{d-1})$. Using the closed-form expression of one-dimensional Wasserstein distance, when $\mu$ and $\nu$ are discrete measures that have supports of at most $n$ supports, the computational complexity of $\widehat{\text{SW}}_{p}$ is $\mathcal{O}(L n \log n + Ldn)$ \citep{peyre2019computational}.

\paragraph{Tree Wasserstein Distances.}  Given a rooted tree $(\mathcal{T},r)$ ($\mathcal{T}$ is a tree as a graph, with one certain node $r$ called root) with non-negative edge lengths, and the ground metric $d_\mathcal{T}$, i.e. the length of the unique path between two nodes. For two distributions $\mu, \nu$ supported on nodes of $\mathcal{T}$, the Wasserstein distance with ground cost $d_\mathcal{T}$, i.e., tree-Wasserstein (TW)~\citep{le2019tree, indyk2003fast}, admits a closed-form expression:
\begin{align}
\label{eq:tw-formula}
    \text{W}_{d_\mathcal{T},1}(\mu,\nu) = \sum_{e \in \mathcal{T}} w_e \cdot \bigl| \mu(\Gamma(v_e)) - \nu(\Gamma(v_e)) \bigr|,
\end{align}
where $v_e$ is the farther endpoint of edge $e$ from $r$, $w_e$ is the length of $e$, and $\Gamma(v_e)$ is the subtree of $\mathcal{T}$ rooted at $v_e$, i.e. the subtree consists of all node $x$ that the unique path from $x$ to $r$ contains $v_e$.

\begin{figure*}[h]
  \begin{center}
    \includegraphics[width=\textwidth]{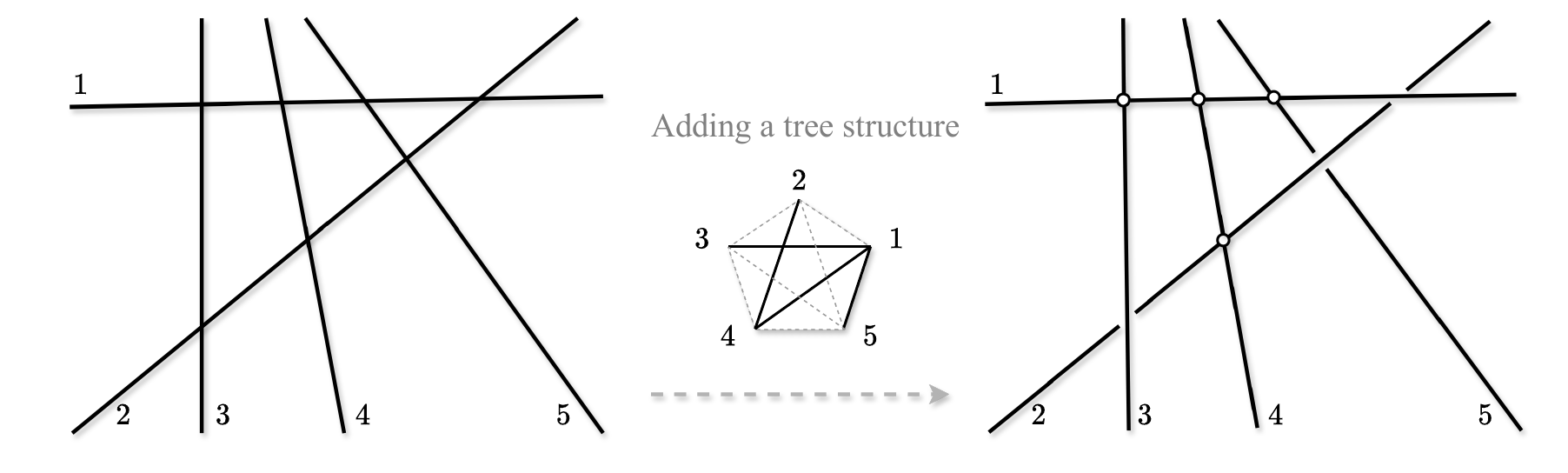}
  \end{center}
  \vskip-0.1in
  \caption{This illustration demonstrates the process of adding a tree structure to a system of lines. \textit{Left}: An example of a system of $5$ lines in $\mathbb{R}^2$, where the lines intersect, making the system connected. \textit{Right}: Adding a tree structure to the connected system. In this example, only four pairs of lines are adjacent, shown by intersections, while the remaining pairs are disconnected, represented by gaps. This structure is derived by taking a spanning tree from a graph with five nodes (representing the five lines), with edges connecting nodes where lines intersect.}
  \label{figure:system-of-lines-to-tree-system}
  \vskip-0.1in
\end{figure*}

\section{System of Lines with Tree Structures}
\label{sec:tree-system}

This section provides an \emph{intuitive and brief} introduction of systems of lines and their additional tree structures. These structures form metric spaces, called tree systems, which serve as a \emph{generalization} of one-dimensional lines within the framework of the Sliced-Wasserstein distance. We then explore the topological properties and the construction of tree systems. The ideas are illustrated in Figures~\ref{figure:system-of-lines-to-tree-system}, \ref{figure:unique-path}, \ref{figure:sampling-tree}, and a \emph{complete formal construction} with theoretical proofs are presented in Appendix~\ref{appendix:tree-system}.

\subsection{System of Lines and Tree System}
A line in $\mathbb{R}^d$ can be fully described by specifying its direction and a point it passes through. Specifically, a line is determined by $(x,\theta) \in \mathbb{R}^d \times \mathbb{S}^{d-1}$, and is parameterized as $x+t\cdot\theta$ for $t \in \mathbb{R}$.

\begin{definition}[Line and System of Lines in $\mathbb{R}^d$]
A \textit{line in $\mathbb{R}^d$} is an element $(x,\theta)$ of $\mathbb{R}^d \times \mathbb{S}^{d-1}$. For $k \ge 1$, a \textit{system of $k$ lines in $\mathbb{R}^d$} is a set of $k$ lines in $\mathbb{R}^d$. 
\end{definition}

We denote a line in $\mathbb{R}^d$ as $l = (x_l,\theta_l)$. Here, $x_l$ and $\theta_l$ are called \textit{source} and \textit{direction} of $l$, respectively. Denote $(\mathbb{R}^d \times \mathbb{S}^{d-1})^k$ by $\mathbb{L}^d_k$, which is the \textit{space of systems of $k$ lines in $\mathbb{R}^d$}, and an element of $\mathbb{L}^d_k$ is usually denoted by $\mathcal{L}$. The \textit{ground set} of a system of lines $\mathcal{L}$ is defined by:
\begin{align*}
   \bar{\mathcal{L}} &\coloneqq \left \{ (x,l ) \in  \mathbb{R}^d \times \mathcal{L}~ \colon ~ x = x_l+t_x\cdot\theta_l \text{ for some } t_x \in \mathbb{R} \right\}.
\end{align*}
For each element $\bar{\mathcal{L}}$, we sometimes write $(x,l)$ as $(t_x,l)$, where $t_x \in \mathbb{R}$ presents the parameterization of $x$ on $l$ as $x = x_l +  t_x \cdot \theta_l$. By a point of $\mathcal{L}$, we refer to a point of the ground set $\bar{\mathcal{L}}$. Now consider a system of distinct lines $\mathcal{L}$ in $\mathbb{R}^d$. $\mathcal{L}$ is said to be \textit{connected} if its points form a connected set in $\mathbb{R}^d$. In this case, $\mathcal{L}$ naturally has certain tree structures. Figure~\ref{figure:system-of-lines-to-tree-system} gives an example of a system of lines with an added tree structure. A pair $(\mathcal{L}, \mathcal{T})$ consists of a connected system of lines $\mathcal{L}$ and its tree structure $\mathcal{T}$ of $\mathcal{L}$, is called a \textit{tree system}. We also denote it as $\mathcal{L}$ for short.

\subsection{Topological Properties of Tree Systems}

\begin{figure*}
  \begin{center}
    \includegraphics[width = \textwidth]{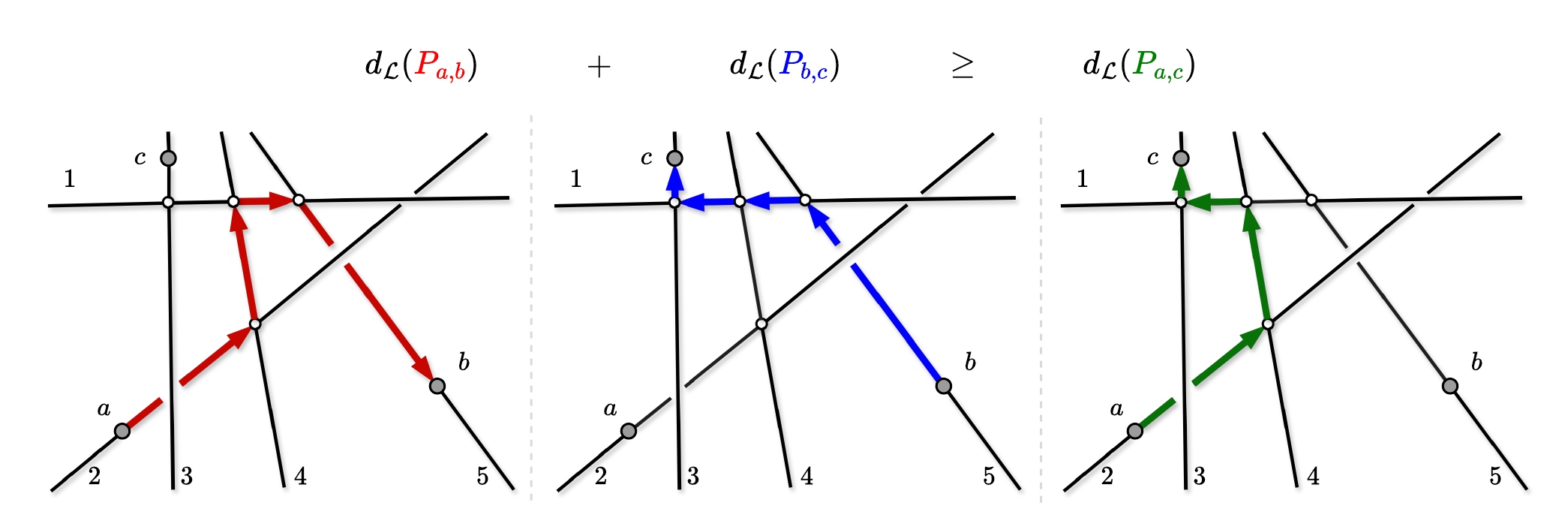}
  \end{center}
  \vskip-0.1in
  \caption{The same tree system $\mathcal{L}$ shown in Figure~\ref{figure:system-of-lines-to-tree-system}, naturally has a topology derived from five copies of $\mathbb{R}$. Consider three points $a, b, c$. The red zigzag line presents the unique path from $a$ to $b$. Here the distance between $a,b$, i.e. $d_{\mathcal{L}}(a,b)$, is the sum of four red line segments. Similar for paths between $b$ and $c$; $a$ and $c$. This demonstrates that the triangle inequality is satisfied for $d_{\mathcal{L}}$.}
  \label{figure:unique-path}
\end{figure*}

A tree system $\mathcal{L}$ can be intuitively understood as a system of lines that are connected in certain ways. It naturally forms a topological space by \textit{taking disjoint union copies of $\mathbb{R}$} and then \textit{taking the quotient at intersections of these copies}. The disjoint union is straightforward, and the quotient follows the tree structure of $\mathcal{L}$. The topological space resulting from these actions is called the  \textit{topological space of a tree system $\mathcal{L}$}, and is denoted by $\Omega_{\mathcal{L}}$. By its construction, $\Omega_{\mathcal{L}}$ naturally carries a measure induced from the standard measure on each copy of $\mathbb{R}$. This measure is denoted by $\mu_\mathcal{L}$. Notice that, due to the tree structure, a \textit{unique} path exists between any two points of $\Omega_{\mathcal{L}}$. This leads to an important result regarding the metrizability of $\Omega_{\mathcal{L}}$.

\begin{theorem}[$\Omega_\mathcal{L}$ is metrizable by a tree metric]
\label{thm:topology-of-tree-system}
    Consider $d_{\mathcal{L}} \colon \Omega_\mathcal{L} \times \Omega_\mathcal{L} \rightarrow [0,\infty)$ defined by:
    \begin{equation}
        d_{\mathcal{L}}(a,b) \coloneqq \mu_{\mathcal{L}}\left(P_{a,b}\right) ~ , ~ \forall a,b \in \Omega_\mathcal{L},
    \end{equation}
    where $P_{a,b}$ is the unique path between $a$ and $b$ in $\Omega_\mathcal{L}$.
    Then $d_{\mathcal{L}}$ is a metric on $\Omega_\mathcal{L}$, which makes $(\Omega_\mathcal{L}, d_{\mathcal{L}})$ a metric space. Moreover, $d_{\mathcal{L}}$ is a tree metric, and the topology on $\Omega_\mathcal{L}$ induced by $d_{\mathcal{L}}$ is identical to the topology of $\Omega_\mathcal{L}$.
\end{theorem}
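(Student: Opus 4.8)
The plan is to verify the metric axioms, then upgrade ``metric'' to ``tree metric'', and finally compare the two topologies on $\Omega_\mathcal{L}$; throughout, the engine is that $\Omega_\mathcal{L}$ is uniquely arcwise connected, so between any two points $a,b$ there is a single arc $P_{a,b}$ whose total $\mu_\mathcal{L}$-length is $d_\mathcal{L}(a,b)$. I would first record that $P_{a,b}$ decomposes into finitely many segments, each lying on one of the $k$ component lines, so it has finite length and $d_\mathcal{L}$ indeed maps into $[0,\infty)$. Symmetry is then immediate from $P_{a,b}=P_{b,a}$; non-negativity is clear; and for positivity I would note that when $a\neq b$ the arc $P_{a,b}$ contains a non-degenerate segment of strictly positive Lebesgue length, while $P_{a,a}=\{a\}$ is $\mu_\mathcal{L}$-null. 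The one substantial axiom is the triangle inequality, where the tree structure does the work: since $P_{a,b}\cup P_{b,c}$ is a connected subset containing $a$ and $c$, unique arcwise connectedness forces $P_{a,c}\subseteq P_{a,b}\cup P_{b,c}$, and monotonicity and subadditivity of $\mu_\mathcal{L}$ give
\begin{equation*}
d_\mathcal{L}(a,c)=\mu_\mathcal{L}(P_{a,c})\le \mu_\mathcal{L}(P_{a,b}\cup P_{b,c})\le \mu_\mathcal{L}(P_{a,b})+\mu_\mathcal{L}(P_{b,c}).
\end{equation*}

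To show $d_\mathcal{L}$ is a \emph{tree metric} I would exhibit $(\Omega_\mathcal{L},d_\mathcal{L})$ as an $\mathbb{R}$-tree by proving additivity along arcs: whenever $b\in P_{a,c}$ one has $P_{a,c}=P_{a,b}\cup P_{b,c}$ with the pieces meeting only at $b$, so the overlap is $\mu_\mathcal{L}$-null and $d_\mathcal{L}(a,c)=d_\mathcal{L}(a,b)+d_\mathcal{L}(b,c)$. Combined with uniqueness of arcs, this yields for any three points a unique median $m$ at which $P_{a,b}$, $P_{b,c}$, $P_{a,c}$ meet, splitting them into three legs; expressing the three pairwise distances through the leg-lengths then gives the four-point condition that characterizes tree metrics and underlies the closed-form tree-Wasserstein formula.

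For the topological claim I would show the identity map from $\Omega_\mathcal{L}$ with its quotient topology to $(\Omega_\mathcal{L},d_\mathcal{L})$ is a homeomorphism through a local analysis at two kinds of points. At an interior point of a single line, a small quotient-neighborhood is an open interval of that copy of $\mathbb{R}$ on which $d_\mathcal{L}$ restricts to the ordinary Euclidean distance, so the topologies agree there; at a gluing point the neighborhood is a finite star of segments on which $d_\mathcal{L}$ is the through-the-center distance, again producing the same basic neighborhoods. Verifying that small $d_\mathcal{L}$-balls coincide with these standard neighborhoods, and conversely, shows the two topologies share a basis and hence are identical.

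I expect the main obstacle to lie in making the informal notion of ``unique path'' and its $\mu_\mathcal{L}$-length precise enough to justify both the containment $P_{a,c}\subseteq P_{a,b}\cup P_{b,c}$ and the additivity along arcs; once the arc structure and the null overlaps are extracted from the formal construction of $\Omega_\mathcal{L}$ in the appendix, the triangle inequality, the four-point condition and the topological comparison all follow, with the fiddliest bookkeeping being the local model at gluing points where several half-lines are identified.
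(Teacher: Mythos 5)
Your proposal is correct, and on the one axiom the paper actually proves in detail it takes essentially the same route: the paper's proof (Theorem~\ref{theorem:metrizable}) dismisses positive definiteness and symmetry as straightforward and establishes the triangle inequality by concatenating a path from $a$ to $b$ with one from $b$ to $c$ and invoking the minimal-source argument of Theorem~\ref{theorem:existence-and-uniqueness-of-path} to conclude $P_{a,c}\subseteq P_{a,b}\cup P_{b,c}$ --- exactly your containment, though justified there by tracking the set of sources any curve from $a$ to $c$ must traverse rather than by abstract unique arcwise connectedness. (One small precision in your version: ``connected subset'' should be ``arcwise connected subset''; this is immediate here since the two arcs share $b$, and in a Hausdorff space the image of the concatenated path contains an arc from $a$ to $c$, which by uniqueness must be $P_{a,c}$.) Where you genuinely depart from the paper is that you prove the two ``moreover'' clauses, which the paper's proof never addresses: the tree-metric property, which you derive from additivity along arcs (the overlap $\{b\}$ being $\mu_{\mathcal{L}}$-null), the median point, and the four-point condition --- the paper only gestures at this via the unproved Corollary~\ref{cor:make-a-tree} --- and the identification of the $d_{\mathcal{L}}$-topology with the quotient topology, which you handle by local models (open intervals at interior points, finite stars with through-the-center distance at gluing points). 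So the comparison is: the paper's argument is shorter because it silently restricts itself to the metric axioms, while your route buys completeness, filling precisely the omitted claims; your closing worry is well placed, since the only real work beyond the paper's proof is extracting from the formal construction of $\Omega_{\mathcal{L}}$ that each arc decomposes into finitely many line segments with null pairwise overlaps, after which your additivity, four-point condition, and basis-matching arguments all go through.
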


The proof is presented in Theorem~\ref{theorem:metrizable}. Figure~\ref{figure:unique-path} illustrates an example of a unique path between two points on a tree system, providing an intuitive explanation of why $d_{\mathcal{L}}$ is indeed a metric.

\subsection{Construction of Tree Systems and Sampling Process} 
\label{subsection:construction-of-tree-system}

A tree system can be built inductively by sampling lines, ensuring that each new line intersects one of the previously sampled lines. We introduce a straightforward method to construct a tree system: start by sampling a line, and at each subsequent step, sample a new line that intersects the previously selected line. Specifically, the process is as follows:

\begin{itemize}
    \item \textit{Step $1$.} Sampling $x_1 \sim \mu_1$ for an $\mu_1 \in \mathcal{P}(\mathbb{R}^d)$, then $\theta_1 \sim \nu_1$ for an $\nu_1 \in \mathcal{P}(\mathbb{S}^{d-1})$. The pair $(x_1,\theta_1)$ forms the first line;
    \item \textit{Step $i$.} At step $i$, sampling $x_i = x_{i-1} + t_i \cdot \theta_{i-1}$ where $t_i \sim \mu_i$ for an $\mu_i \in \mathcal{P}(\mathbb{R})$, then $\theta_i \sim \nu_i$ for an $\nu_i \in \mathcal{P}(\mathbb{S}^{d-1})$. The pair $(x_i,\theta_i)$ forms the $i^\text{th}$ line.
\end{itemize}
The tree system produced by this construction has a \textit{chain-like tree structure}, where the $i^\text{th}$ line intersects the $(i+1)^\text{th}$ line. A \textit{general approach} for sampling tree systems is provided in Appendix~\ref{appendix:construction-of-tree-systems}. In practice, we simply assume all the distributions $\mu$'s and $\nu$'s to be independent, and let:

\begin{itemize}
    \item  $\mu_1$ to be a distribution on a bounded subset of $\mathbb{R}^d$, for instance, the uniform distribution on the $d$-dimensional cube $[-1,1]^d$, i.e. $\mathcal{U}([-1,1]^d)$;
    \item $\mu_i$ for $i >1$ to be a distribution on a bounded subset of $\mathbb{R}$, for instance, the uniform distribution on the interval $[-1,1]$, i.e. $\mathcal{U}([-1,1])$;
    \item $\theta_n$ to be a distribution on $\mathbb{S}^{d-1}$, for instance, the uniform distribution $\mathcal{U}(\mathbb{S}^{d-1})$.
\end{itemize}

Using the distributions $\mu$'s and $\nu$'s, we get a distribution on the space of all tree systems that can be sampled by this way. We obtain a distribution over the space of all tree systems that can be sampled in this manner. The algorithm for sampling tree systems is summarized in Algorithm~\ref{alg:sampling-seq-of-lines}, and illustrated in Figure~\ref{figure:sampling-tree}.

\begin{minipage}{.5\textwidth}
\begin{algorithm}[H]
\caption{Sampling (chain-like) tree systems.}
\begin{algorithmic}
\label{alg:sampling-seq-of-lines}
\STATE \textbf{Input:} The number of lines in tree systems $k$.
\STATE Sampling $x_1 \sim \mathcal{U}([-1,1]^d)$ and $\theta_1 \sim\mathcal{U}(\mathbb{S}^{d-1})$.
  \FOR{$i=2$ to $k$}
  \STATE Sample $t_i \sim \mathcal{U}([-1,1])$ and $\theta_i  \sim \mathcal{U}(\mathbb{S}^{d-1})$.
  \STATE Compute $x_i = x_{i-1}+t_i\cdot \theta_{i-1}$.
  \ENDFOR
 \STATE \textbf{Return:} $(x_1,\theta_1), (x_2,\theta_2), \ldots, (x_k, \theta_k)$.
\end{algorithmic}
\end{algorithm}
\end{minipage}
\hfill
\begin{minipage}[t]{.41\textwidth}
\centering
\includegraphics[width=0.81\textwidth]{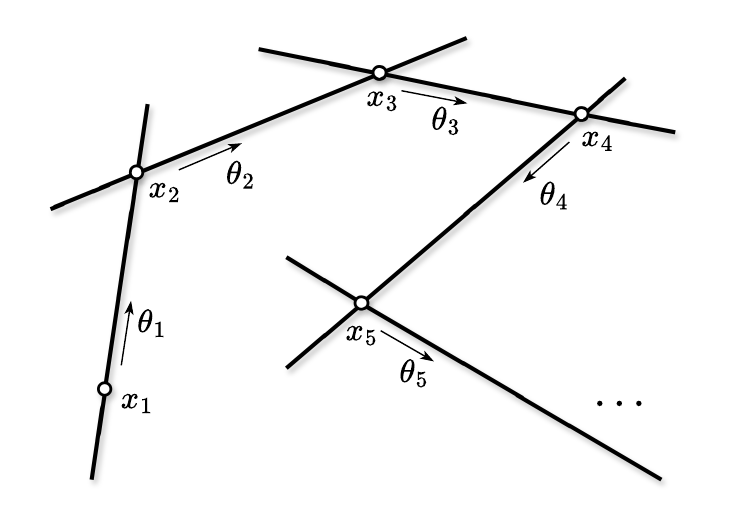}
\captionof{figure}{Illustration of the sampled (chain-like) tree systems in Algorithm~\ref{alg:sampling-seq-of-lines}.}
\label{figure:sampling-tree}
\end{minipage}

\section{Radon Transform on Systems of Lines}
\label{sec:space-of-functions}
In this section, we introduce the notions of the space of Lebesgue integrable functions and the Radon Transform for systems of lines. Let $\mathcal{L} \in \mathbb{L}^d_k$ be a system of $k$ lines. Denote $L^1(\mathbb{R}^d)$ as the space of Lebesgue integrable functions on $\mathbb{R}^d$ with norm $\|\cdot \|_1$, i.e.
\begin{align}
    L^1(\mathbb{R}^d) = \left \{ f \colon \mathbb{R}^d \rightarrow \mathbb{R} ~ \colon ~ \|f\|_1 = \int_{\mathbb{R}^d} |f(x)| \, dx < \infty \right \}.
\end{align}
Two functions $f_1, f_2 \in L^1(\mathbb{R}^d)$ are considered to be identical if $f_1(x) = f_2(x)$ almost everywhere on $\mathbb{R}^d$. As a counterpart, a \textit{Lebesgue integrable function on $\mathcal{L}$} is a function $ f \colon \bar{\mathcal{L}} \rightarrow \mathbb{R}$ such that:
    \begin{align}
        \|f\|_{\mathcal{L}} \coloneqq \sum_{l \in \mathcal{L}} \int_{\mathbb{R}} |f(t_x,l)|  \, dt_x < \infty.
    \end{align}
The \textit{space of Lebesgue integrable functions on $\mathcal{L}$} is denoted by $L^1(\mathcal{L})$. Two functions $f_1, f_2 \in L^1(\mathcal{L})$ are considered to be identical if $f_1(x) = f_2(x)$ almost everywhere on $\bar{\mathcal{L}}$. The space $L^1(\mathcal{L})$ with norm $\|\cdot\|_{\mathcal{L}}$ is a Banach space.

%

\begin{figure*}
  \begin{center}
    \includegraphics[width = \textwidth]{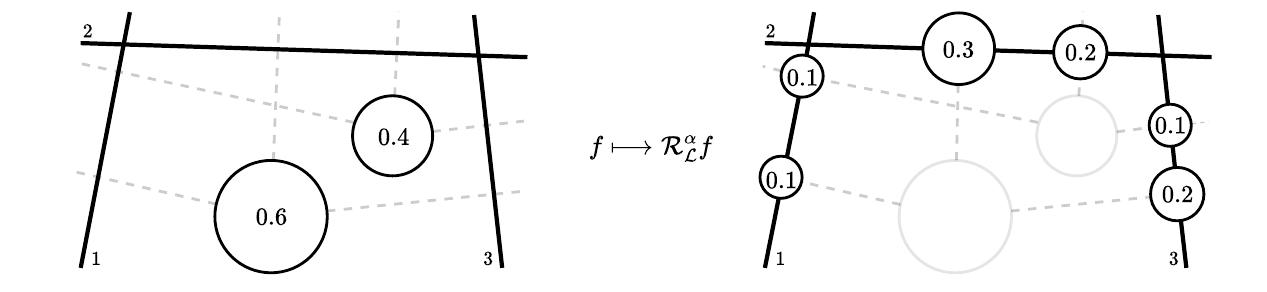}
  \end{center}
  \vskip-0.1in
  \caption{An illustration of Radon Transform on Systems of Lines. Given $f \in L^1(\mathbb{R}^d)$ such that $f(x) = 0.6$, $f(y) = 0.4$, and $\mathcal{L}$ is a system of $3$ lines. For a splitting map $\alpha$ such that $\alpha(x) = (1/6,3/6,2/6)$ and $\alpha(y) = (1/4,2/4,1/4)$, $f$ is transformed to $\mathcal{R}^\alpha_\mathcal{L}f$. By Equation (\ref{eq:Radon-Transform}), for instance, the value of $\mathcal{R}^\alpha_\mathcal{L}f$ at the projection of $x$ onto line $(2)$ of $\mathcal{L}$ is $f(x) \cdot \alpha(x)_2 = 0.3$.}
  \label{figure:old-splitting-map}
\end{figure*}

Recall that $\mathcal{L}$ has $k$ lines. Denote the $(k-1)$-dimensional standard simplex as $\Delta_{k-1} =  \{ (a_l)_{l \in \mathcal{L}} ~ \colon ~ a_l \ge 0 \text{ and } \sum_{l \in \mathcal{L}} a_l = 1  \} \subset \mathbb{R}^k$. Denote $\mathcal{C}(\mathbb{R}^d, \Delta_{k-1})$ as the space of continuous maps from $\mathbb{R}^d$ to $\Delta_{k-1}$. A map in $\mathcal{C}(\mathbb{R}^d, \Delta_{k-1})$ is  referred to as a \textit{splitting map}. Let $\mathcal{L}$ be a system of $k$ lines in  $\mathbb{L}^d_k$, $\alpha$ be a splitting map in $\mathcal{C}(\mathbb{R}^d, \Delta_{k-1})$, we define an operator associated to $\alpha$ that transforms a Lebesgue integrable functions on $\mathbb{R}^d$ to a  Lebesgue integrable functions on $\mathcal{L}$, analogous to the original Radon Transform. 
For $f \in L^1(\mathbb{R}^d)$, define: 
\begin{align}
\label{eq:Radon-Transform}
\mathcal{R}_{\mathcal{L}}^{\alpha}f~ \colon ~~~\bar{\mathcal{L}}~~ &\longrightarrow \mathbb{R} \notag \\(x,l) &\longmapsto \int_{\mathbb{R}^d} \hspace{-0.5em} f(y) \cdot \alpha(y)_l \cdot \delta\left(t_x - \left<y-x_l,\theta_l \right>\right)  ~ dy,
\end{align}
where $\delta$ is the $1$-dimensional Dirac delta function. For $f \in L^1(\mathbb{R}^d)$, we can show that $\mathcal{R}_{\mathcal{L}}^{\alpha}f \in L^1(\mathcal{L})$. Moreover, we have $\|\mathcal{R}_{\mathcal{L}}^{\alpha}f\|_{\mathcal{L}} \le \|f\|_1$. In other words, the operator $\mathcal{R}_{\mathcal{L}}^{\alpha} \colon L^1(\mathbb{R}^d) \to L^1(\mathcal{L})$ is well-defined, and is a linear operator. The proof for these properties is presented in Theorem~\ref{theorem:well-defined-radon-transform-for-system-of-lines}.  We now propose a novel variant of Radon Transform for systems of lines.
 
\begin{definition}[Radon Transform on Systems of lines]
\label{definition:Radon-Transform-on-system-of-lines}
    For $\alpha \in \mathcal{C}(\mathbb{R}^d, \Delta_{k-1})$, the operator $\mathcal{R}^\alpha$:
    \begin{align*}
        \mathcal{R}^{\alpha}~ \colon ~ L^1(\mathbb{R}^d)~  &\longrightarrow ~ \prod_{\mathcal{L} \in \mathbb{L}^d_k} L^1(\mathcal{L}) \\ f ~~~~ &\longmapsto ~ \left(\mathcal{R}_{\mathcal{L}}^{\alpha}f\right)_{\mathcal{L} \in \mathbb{L}^d_k}.
    \end{align*}
    is called the \textit{Radon Transform on Systems of Lines}.
\end{definition}

\begin{remark}
    An illustration of splitting maps and the Radon Transform on Systems of Lines is presented in Figure \ref{figure:old-splitting-map}. Intuitively, splitting map $\alpha$ indicates \textit{how the mass at a given point is distributed across all lines of a system of lines}. In the case $k=1$, there is only one splitting map which is the constant function $1$, and the Radon Transform for $\mathbb{L}^d_1$ is identical to the traditional Radon Transform.
\end{remark}

Many variants of the Radon transform require the transform to be injective. In the case of systems of lines, the injectivity also holds for $\mathcal{R}^\alpha$.

\begin{theorem}
        $\mathcal{R}^\alpha$ is injective for all splitting maps $\alpha \in \mathcal{C}(\mathbb{R}^d, \Delta_{k-1})$.
\end{theorem}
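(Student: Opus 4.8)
The plan is to reduce the statement to the injectivity of the classical Radon transform on $L^1(\mathbb{R}^d)$, which is a standard fact \citep{helgason2011radon}. Since $\mathcal{R}^\alpha$ is linear, it suffices to show that $\mathcal{R}^\alpha f = 0$ forces $f = 0$ almost everywhere. So I would start by assuming $\mathcal{R}^\alpha f = 0$; by the definition of the target product space this means $\mathcal{R}_{\mathcal{L}}^{\alpha}f = 0$ in $L^1(\mathcal{L})$ for \emph{every} system $\mathcal{L} \in \mathbb{L}^d_k$, and in particular $\mathcal{R}_{\mathcal{L}}^{\alpha}f(\cdot, l) = 0$ for each individual line $l$ of each such $\mathcal{L}$.

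The first substantive step is to recognize each slot of $\mathcal{R}_{\mathcal{L}}^{\alpha}f$ as an ordinary Radon transform of a reweighted copy of $f$. Fix a slot index $j \in \{1, \ldots, k\}$ and set $g_j \coloneqq f \cdot \alpha(\cdot)_j$; since $0 \le \alpha(\cdot)_j \le 1$ we have $|g_j| \le |f|$, so $g_j \in L^1(\mathbb{R}^d)$. For a line $l = (x_l, \theta_l)$ occupying slot $j$ in some system $\mathcal{L}$, rewriting $\left<y - x_l, \theta_l\right> = \left<y, \theta_l\right> - \left<x_l, \theta_l\right>$ in the defining formula~(\ref{eq:Radon-Transform}) gives
\begin{equation*}
    \mathcal{R}_{\mathcal{L}}^{\alpha}f(t_x, l) = \mathcal{R}g_j\bigl(t_x + \left<x_l, \theta_l\right>, \theta_l\bigr),
\end{equation*}
i.e. the restriction to slot $j$ is exactly the classical Radon transform of $g_j$ in direction $\theta_l$, up to a reparameterization of the offset. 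The crucial observation is that the coordinate $\alpha(\cdot)_j$ depends only on the slot index $j$ and not on the other lines of $\mathcal{L}$; hence, as $\mathcal{L}$ ranges over all of $\mathbb{L}^d_k$, I may place a line of arbitrary direction $\theta \in \mathbb{S}^{d-1}$ in slot $j$, while letting $t_x$ range over $\mathbb{R}$ sweeps out every offset. The hypothesis therefore yields $\mathcal{R}g_j(s, \theta) = 0$ for all $s \in \mathbb{R}$ and all $\theta \in \mathbb{S}^{d-1}$, and classical injectivity gives $g_j = f \cdot \alpha(\cdot)_j = 0$ almost everywhere.

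The final step reassembles these vanishing statements. Carrying out the argument for every $j \in \{1, \ldots, k\}$ gives $f(y)\alpha(y)_j = 0$ for a.e. $y$ and each $j$; discarding the finite union of the exceptional null sets and summing over $j$ produces $f(y)\sum_{j=1}^{k} \alpha(y)_j = 0$, and because $\alpha(y) \in \Delta_{k-1}$ has coordinates summing to $1$, this is precisely $f(y) = 0$ a.e., proving injectivity. I expect the main conceptual obstacle to be the decoupling in the second step: one must see that the product over the entire space $\mathbb{L}^d_k$ supplies enough freedom to fix the slot index while independently varying the direction over all of $\mathbb{S}^{d-1}$, which is what isolates each weighted function $f \cdot \alpha(\cdot)_j$ and makes the single-line classical theory applicable slot by slot. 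The concluding simplex identity $\sum_{j} \alpha(y)_j = 1$ is then the clean device that glues the $k$ pieces back into $f$ itself.
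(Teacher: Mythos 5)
Your proof is correct and takes essentially the same route as the paper's: fix a slot index, recognize the restriction of $\mathcal{R}^\alpha_{\mathcal{L}}f$ to that slot as the classical Radon transform of the reweighted function $f\cdot\alpha(\cdot)_j$, vary the direction over all of $\mathbb{S}^{d-1}$ to invoke classical injectivity and conclude $f\cdot\alpha(\cdot)_j=0$, then sum over $j$ using $\sum_{j}\alpha(y)_j=1$. Your write-up is in fact a bit more careful than the paper's (explicit offset reparameterization via $\left<y-x_l,\theta_l\right>=\left<y,\theta_l\right>-\left<x_l,\theta_l\right>$, the $L^1$ membership of $g_j$, and the almost-everywhere bookkeeping over the finite union of null sets), but the underlying argument is identical.
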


The proof of this theorem is presented in Theorem~\ref{theorem:injectivity-of-radon-transfer-for-system-of-lines}. Denote $\mathcal{P}(\mathbb{R}^d)$ as the space of all probability distribution on $\mathbb{R}^d$, and define a \textit{probability distribution on $\mathcal{L}$} to be a function $f \in L^1(\mathcal{L})$ such that $f \colon \bar{\mathcal{L}} \rightarrow [0,\infty)$ and $\|f\|_{\mathcal{L}} = 1$. The \textit{space of probability distribution on $\mathcal{L}$} is denoted by $\mathcal{P}(\mathcal{L})$. Then $\mathcal{R}^\alpha_\mathcal{L}$ transforms a distribution in $\mathcal{P}(\mathbb{R}^d)$ to a distribution in $\mathcal{P}(\mathcal{L})$. In other words, the restricted operator $\mathcal{R}_{\mathcal{L}}^{\alpha} \colon \mathcal{P}(\mathbb{R}^d) \to \mathcal{P}(\mathcal{L})$ is also well-defined.

    




\section{Tree-Sliced Wasserstein Distance on Systems of Lines}
\label{sec:TSW-SL}


In this section, we present a novel Tree-Sliced Wasserstein distance on Systems of Lines (TSW-SL). Consider $\mathbb{T}$ as \textit{the space of tree systems} consisting of $k$ lines in $\mathbb{R}^d$ that be sampled by Algorithm~\ref{alg:sampling-seq-of-lines}. By the remark at the end of Subsection~\ref{subsection:construction-of-tree-system}, we have \textit{a distribution $\sigma$ on the space $\mathbb{T}$}. General cases of $\mathbb{T}$, as in Appendix~\ref{appendix:construction-of-tree-systems}, will be handled in a similar manner. For simplicity and convenience, we occasionally use the same notation to represent both a measure and its probability distribution function, provided the context makes the meaning clear.

\subsection{Tree-Sliced Wasserstein Distance on Systems of Lines}
Consider a splitting function $\alpha$ in $\mathcal{C}(\mathbb{R}^d,\Delta_{k-1})$. Given two probability distributions $\mu,\nu$  in $\mathcal{P}(\mathbb{R}^d)$ with their density function $f_\mu,f\nu$, respectively, and a tree system $\mathcal{L} \in \mathbb{T}$. By the Radon Transform $\mathcal{R}^\alpha_\mathcal{L}$ in Definition~\ref{definition:Radon-Transform-on-system-of-lines}, $f_\mu$ and $f_\nu$ are transformed to $\mathcal{R}^\alpha_\mathcal{L} \mu$ and $\mathcal{R}^\alpha_\mathcal{L} \nu$ in $\mathcal{P}(\mathcal{L})$. By Theorem~\ref{thm:topology-of-tree-system}, $\mathcal{L}$ has a tree metric $d_\mathcal{L}$, we compute Wasserstein distance $\text{W}_{d_\mathcal{L},1}(\mathcal{R}^\alpha_\mathcal{L} f_\mu, \mathcal{R}^\alpha_\mathcal{L} f_\nu)$ between $\mathcal{R}^\alpha_\mathcal{L} f_\mu$ and $\mathcal{R}^\alpha_\mathcal{L} f_\nu$ by Equation (\ref{eq:tw-formula}).

\begin{definition}[Tree-Sliced Wasserstein Distance on Systems of Lines]
    The \textit{Tree-Sliced Wasserstein distance on Systems of Lines} between $\mu,\nu$ in $\mathcal{P}(\mathbb{R}^d)$ is defined by:
\begin{equation}
\label{eq:TSW-SL-formula}
    \text{TSW-SL}(\mu,\nu)
 \coloneqq 
 \int_{\mathbb{T}} \text{W}_{d_\mathcal{L},1}(\mathcal{R}^\alpha_\mathcal{L} f_\mu, \mathcal{R}^\alpha_\mathcal{L} f_\nu) ~d\sigma(\mathcal{L}).
\end{equation}
\end{definition}
\begin{remark}
    Note that, the definition of $\text{TSW-SL}$ depends on the space of sampled tree systems $\mathbb{T}$, the distribution $\sigma$ on $\mathbb{T}$, and the splitting function $\alpha$. For simplifying the notation, we omit them.
\end{remark}
TSW-SL is a metric on $\mathcal{P}(\mathbb{R}^d)$. The proof for the below theorem is provided in Appendix~\ref{appendix:proof-theorem:tsw-sl-is-metric}.

\begin{theorem}
    \label{theorem:tsw-sl-is-metric}
     $\textup{TSW-SL}$ is a metric on $\mathcal{P}(\mathbb{R}^d)$.
\end{theorem}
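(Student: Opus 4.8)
The plan is to verify the four metric axioms for $\text{TSW-SL}$, observing that non-negativity, symmetry, and the triangle inequality are inherited essentially for free from the pointwise integrand, while the identity of indiscernibles carries the real content. By Theorem~\ref{thm:topology-of-tree-system}, each $d_\mathcal{L}$ is a genuine tree metric on $\Omega_\mathcal{L}$, so $\text{W}_{d_\mathcal{L},1}(\cdot,\cdot)$ is a bona fide Wasserstein metric on $\mathcal{P}(\mathcal{L})$. Hence for every fixed $\mathcal{L}\in\mathbb{T}$ the integrand $\mathcal{L}\mapsto \text{W}_{d_\mathcal{L},1}(\mathcal{R}^\alpha_\mathcal{L} f_\mu,\mathcal{R}^\alpha_\mathcal{L} f_\nu)$ is non-negative, symmetric in $(\mu,\nu)$, and satisfies the pointwise triangle inequality $\text{W}_{d_\mathcal{L},1}(\mathcal{R}^\alpha_\mathcal{L} f_\mu,\mathcal{R}^\alpha_\mathcal{L} f_\rho)\le \text{W}_{d_\mathcal{L},1}(\mathcal{R}^\alpha_\mathcal{L} f_\mu,\mathcal{R}^\alpha_\mathcal{L} f_\nu)+\text{W}_{d_\mathcal{L},1}(\mathcal{R}^\alpha_\mathcal{L} f_\nu,\mathcal{R}^\alpha_\mathcal{L} f_\rho)$. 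Integrating these three facts against the probability measure $\sigma$ on $\mathbb{T}$, and using only linearity and monotonicity of the integral, yields the corresponding three properties for $\text{TSW-SL}$ (after noting that the integrand is $\sigma$-measurable so the integral is well-defined).

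The substance lies in showing $\text{TSW-SL}(\mu,\nu)=0 \iff \mu=\nu$. The implication $\mu=\nu\Rightarrow\text{TSW-SL}(\mu,\nu)=0$ is immediate, since the integrand then vanishes identically. For the converse, the integrand is non-negative with zero $\sigma$-integral, so it must vanish for $\sigma$-almost every $\mathcal{L}\in\mathbb{T}$; and since $\text{W}_{d_\mathcal{L},1}$ is a metric, this forces $\mathcal{R}^\alpha_\mathcal{L} f_\mu = \mathcal{R}^\alpha_\mathcal{L} f_\nu$ in $\mathcal{P}(\mathcal{L})$ for $\sigma$-almost every $\mathcal{L}$. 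The aim is then to upgrade this family of equalities to $f_\mu=f_\nu$, i.e. $\mu=\nu$, via the injectivity of the Radon Transform on Systems of Lines established above.

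The main obstacle is precisely the gap between ``$\sigma$-almost every $\mathcal{L}$'' and ``every $\mathcal{L}\in\mathbb{L}^d_k$'', which is what injectivity nominally requires. I would close this gap by exploiting the explicit structure of $\sigma$ from Algorithm~\ref{alg:sampling-seq-of-lines}: the restriction of $\mathcal{R}^\alpha_\mathcal{L} f$ to the $l$-th line of $\mathcal{L}$ is exactly the classical one-dimensional Radon transform, in direction $\theta_l$ with offset $x_l$, of the weighted density $f\cdot\alpha(\cdot)_l$. Since each direction $\theta_l$ is sampled uniformly on $\mathbb{S}^{d-1}$, a Fubini/marginalization argument shows that for each index $l$ the equality $\mathcal{R}(f_\mu\,\alpha(\cdot)_l)(\cdot,\theta)=\mathcal{R}(f_\nu\,\alpha(\cdot)_l)(\cdot,\theta)$ holds for Lebesgue-almost every $\theta\in\mathbb{S}^{d-1}$. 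Invoking continuity of the classical Radon transform in the direction variable (equivalently, the Fourier slice theorem, with $\widehat{\mathcal{R}g}(\cdot,\theta)$ depending continuously on $\theta$), this promotes to all directions, and classical injectivity gives $f_\mu\,\alpha(\cdot)_l=f_\nu\,\alpha(\cdot)_l$ almost everywhere. Summing over $l=1,\dots,k$ and using $\sum_{l}\alpha(y)_l=1$ for all $y$ (as $\alpha(y)\in\Delta_{k-1}$) collapses the weights and yields $f_\mu=f_\nu$.

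A more abstract alternative for the same bridging step is to show that $\mathcal{L}\mapsto \text{W}_{d_\mathcal{L},1}(\mathcal{R}^\alpha_\mathcal{L} f_\mu,\mathcal{R}^\alpha_\mathcal{L} f_\nu)$ is continuous on $\mathbb{L}^d_k$ (first for a dense class of regular densities, then by $L^1$-approximation), so that its zero set is closed and of full $\sigma$-measure, hence contains $\operatorname{Supp}(\sigma)$; combined with injectivity this again forces $f_\mu=f_\nu$. Either way, I expect the delicate point to be taming the measure-zero exceptional set of tree systems, and supplying the mild regularity needed to pass from almost-every direction to every direction, rather than any of the algebraic manipulations, which are routine.
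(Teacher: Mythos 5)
Your proposal is correct, and its skeleton coincides with the paper's proof of Theorem~\ref{theorem:tsw-sl-is-metric}: symmetry and the triangle inequality are obtained by integrating the pointwise metric properties of $\text{W}_{d_\mathcal{L},1}$ (guaranteed by Theorem~\ref{thm:topology-of-tree-system}) against $\sigma$, and positive definiteness is reduced to injectivity of the Radon Transform on Systems of Lines. Where you genuinely add something is exactly the step the paper declines to write out: the paper prefaces its appendix proofs with the blanket statement that ``almost-surely-conditions'' are left out, then argues as if $\text{TSW-SL}(\mu,\nu)=0$ gave $\mathcal{R}^\alpha_\mathcal{L}\mu=\mathcal{R}^\alpha_\mathcal{L}\nu$ for \emph{all} $\mathcal{L}\in\mathbb{T}$, invoking Theorem~\ref{appendix:theorem:injectivity-of-tsw-sl} together with the remark that injectivity only needs the direction tuples of the sampled systems to exhaust $(\mathbb{S}^{d-1})^k$. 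You instead derive the vanishing only $\sigma$-almost everywhere, as is all the integral actually yields, and close the gap by marginalizing over $\theta_l$ (correctly noting that equality on the $l$-th line depends only on $\theta_l$, since the offset $x_l$ merely translates the one-dimensional argument) and then upgrading from almost-every to every direction via the Fourier slice theorem and continuity of $\widehat{g}$ for $g\in L^1$. After that, your per-index argument --- classical injectivity applied to $f\cdot\alpha(\cdot)_l$, then summing over $l$ with $\sum_l\alpha(y)_l=1$ --- is precisely the mechanism inside the paper's own injectivity proof, so you have in effect inlined that theorem rather than cited it. The trade-off: the paper's version is shorter but rests on an unverified ``straightforward'' claim about null sets, while yours supplies the missing regularity argument; your alternative route via continuity of $\mathcal{L}\mapsto\text{W}_{d_\mathcal{L},1}(\mathcal{R}^\alpha_\mathcal{L} f_\mu,\mathcal{R}^\alpha_\mathcal{L} f_\nu)$ and the support of $\sigma$ would also work but requires an approximation argument the Fourier-slice route avoids. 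No step of your proposal fails; it is a rigorous completion of the paper's argument rather than a departure from it.
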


\begin{remark}
    If tree systems in $\mathbb{T}$ consist only one line, i.e. $k=1$, then in Definition~\ref{definition:Radon-Transform-on-system-of-lines}, the splitting map $\alpha$ is the constant map $1$, and the Radon Transform $\mathcal{R}^\alpha$ now becomes identical to the original Radon Transform, as pushing forward measures onto lines depends only on their directions. Also, according to the sampling process described in Subsection~\ref{subsection:construction-of-tree-system}, $\sigma$ becomes the distribution of $\theta_1$, which is $\mathcal{U}(\mathbb{S}^{d-1})$. In this case, $\text{TSW-SL}$ in Equation (\ref{eq:TSW-SL-formula}) is identical to $\text{SW}$ in Equation (\ref{eq:SW}).  Furthermore, in Appendix~\ref{appendix:MaxTSW-SL}, we introduce \textit{Max Tree-Sliced Wasserstein Distance on Systems of Lines} (MaxTSW-SL), an analog of MaxSW \citep{deshpande2019max}. 
\end{remark}
\subsection{Computing TSW-SL}

We employ the Monte Carlo method to estimate the intractable integral in Equation (\ref{eq:TSW-SL-formula}) as follows:
\begin{align}
    \widehat{\text{TSW-SL}} (\mu,\nu) = \dfrac{1}{L} \sum_{i=l}^{L} \text{W}_{d_{\mathcal{L}_l},1}(\mathcal{R}^\alpha_{\mathcal{L}_l} f_\mu, \mathcal{R}^\alpha_{\mathcal{L}_l} f_\nu),
\end{align}
where $\mathcal{L}_1,\ldots,\mathcal{L}_L \overset{i.i.d}{\sim} \sigma$ are referred to as projecting tree systems. We now discuss on how to compute $\text{W}_{d_{\mathcal{L}},1}(\mathcal{R}^\alpha_{\mathcal{L}} f_\mu, \mathcal{R}^\alpha_{\mathcal{L}} f_\nu)$ for $\mathcal{L} \in \mathbb{T}$. In applications,  consider $f_\mu, f_\nu \in \mathcal{P}(\mathbb{R}^d)$ given as follows:
\begin{align}
\mu(x) = \sum_{i=1}^{n} u_i \cdot \delta(x-a_i) ~~~ \text{ and } ~~~ \nu(x) = \sum_{i=1}^m v_i \cdot \delta(x-b_i)
\end{align}
$\mathcal{R}^\alpha_\mathcal{L}$ pushes $f_\mu,f_\nu$ on $\mathcal{L}$, resulting in discrete measures $\mathcal{R}^\alpha_{\mathcal{L}} f_\mu, \mathcal{R}^\alpha_{\mathcal{L}} f_\nu$ in $\mathcal{P}(\mathcal{L})$. In details, from definition of $\mathcal{R}^\alpha_{\mathcal{L}} f_\mu$, the support of $\mathcal{R}^\alpha_{\mathcal{L}} f_\mu$ is the set of all projections of support of $\mu$ onto lines of $\mathcal{L}$. Moreover, the value of $\mathcal{R}^\alpha_{\mathcal{L}}f_\mu$ at projections of $a_i$ onto $l$ is equal to $\alpha(a_i)_l \cdot u_i$. Similar for $\mathcal{R}^\alpha_{\mathcal{L}} f_\nu$. From this detailed description of $\mathcal{R}^\alpha_{\mathcal{L}} f_\mu, \mathcal{R}^\alpha_{\mathcal{L}} f_\nu$, together with Equation (\ref{eq:tw-formula}), we derive a \textit{closed-form expression} of $\text{W}_{d_{\mathcal{L}},1}(\mathcal{R}^\alpha_{\mathcal{L}} f_\mu, \mathcal{R}^\alpha_{\mathcal{L}} f_\nu)$ as follows:
\begin{align}
\label{eq:tw-formula-montecarlo}
    &\text{W}_{d_\mathcal{L},1}(\mathcal{R}^\alpha_{\mathcal{L}} f_\mu,\mathcal{R}^\alpha_{\mathcal{L}} f_\nu) \notag \\&~~~~~~~~= \sum_{e \in \mathcal{T}} w_e \cdot \Bigl| \mathcal{R}^\alpha_{\mathcal{L}} f_\mu(\Gamma(v_e)) - \mathcal{R}^\alpha_{\mathcal{L}} f_\nu(\Gamma(v_e)) \Bigr|.
\end{align}
This expression enables an efficient and highly parallelizable implementation of TSW-SL, as it relies on fundamental operations like matrix multiplication and sorting. Appendix~\ref{appendix:An illustration of TSW-SL} provides a detailed illustration of the underlying mechanism of this expression.

\begin{remark}
    Assume $n \ge m$, the time complexity for TSW-SL is $\mathcal{O}(Lkn \log n + Lkdn)$ since it primarily involves projecting onto $L \times k$ lines and sorting $n$ projections on each line. This complexity is equivalent to that of SW when the number of projection directions is the same. Therefore, in our experiments, we ensure a fair comparison by evaluating the performance of TSW-SL against SW or its variants using \textit{the same number of projection directions}.
\end{remark}

We summarize this section with Algorithm \ref{alg:compute-tsw-sl} of computing TSW-SL.

\vskip-0.015in
\begin{algorithm}[H]
\caption{Tree Sliced Wasserstein distance on Systems of Lines.}
\begin{algorithmic}
\label{alg:compute-tsw-sl}
\STATE \textbf{Input:} $\mu$ and $\nu$ in $\mathcal{P}(\mathbb{R}^d)$,  the number of lines in each tree system $k$, the number of tree systems $L$, a splitting map $\alpha \colon \mathbb{R}^d \rightarrow \Delta_{k-1}$.
  \FOR{$l=1$ to $L$}
  \STATE Sample tree system \\ $ \mathcal{L}_i = \bigl((x_{1}^{(l)},\theta_{1}^{(l)}), \ldots, (x_{k}^{(l)},\theta_{k}^{(l)})\bigr)$.
  \STATE Project $\mu$ and $\nu$ onto $\mathcal{L}_l$ to get $\mathcal{R}^{\alpha}_{\mathcal{L}_l} f_\mu$ and $\mathcal{R}^{\alpha}_{\mathcal{L}_l} f_\nu$.
  
    \STATE Compute $\text{W}_{d_{\mathcal{L}_l},1}(\mathcal{R}^\alpha_{\mathcal{L}_l} f_\mu,\mathcal{R}^\alpha_{\mathcal{L}_l} f_\nu)$.
  \ENDFOR
  \STATE Compute $\widehat{\text{TSW-SL}} =\frac{1}{L} \cdot \Sigma_{l=1}^L \text{W}_{d_{\mathcal{L}_l},1}(\mathcal{R}^\alpha_{\mathcal{L}_l} f_\mu,\mathcal{R}^\alpha_{\mathcal{L}_l} f_\nu)$.
 \STATE \textbf{Return:} $\widehat{\text{TSW-SL}} (\mu,\nu)$.
\end{algorithmic}
\end{algorithm}

\begin{table*}[ht]
\centering
\caption{Average Wasserstein distance between source and target distributions of $10$ runs on Swiss Roll and 25 Gaussians datasets. All methods use $100$ projecting directions.}
\label{table:gradientflow}
\medskip
\renewcommand*{\arraystretch}{1}
\begin{adjustbox}{width=1\textwidth}
\begin{tabular}{lcccccccccccc}
\toprule
       & \multicolumn{6}{c}{Swiss Roll} & \multicolumn{6}{c}{25 Gaussians}\\
\cmidrule(lr){2-7} \cmidrule(lr){8-13}
\multirow{2}{*}{Methods}& \multicolumn{5}{c}{Iteration}                                                                     & \multirow{2}{*}{Time/Iter($s$)}& \multicolumn{5}{c}{Iteration}                                                                     & \multirow{2}{*}{Time/Iter($s$)} \\ 
\cmidrule(lr){2-6}\cmidrule(lr){8-12}
&500              & 1000             & 1500             & 2000             & 2500
& & 500              & 1000             & 1500             & 2000             & 2500 &\\
\midrule
SW        & \underline{5.73e-3} & 2.04e-3             & 1.23e-3             & 1.11e-3             & 1.05e-3             & 0.009 & \underline{1.61e-1} & 9.52e-2             & 3.44e-2             & 2.56e-2             & 2.20e-2             & 0.006 \\
MaxSW     & 2.47e-2             & 1.03e-2             & 6.10e-3             & 4.47e-3             & 3.45e-3             & 2.46  & 5.09e-1             & 2.36e-1             & 1.33e-1             & 9.70e-2             & 8.48e-2             & 2.38  \\
SWGG      & 3.84e-2             & 1.53e-2             & 1.02e-2             & 4.49e-3             & 3.57e-5             & 0.011 & 3.10e-1             & 1.17e-1             & 3.38e-2             & 3.58e-3             & 2.54e-4             & 0.009 \\
LCVSW     & 7.28e-3             & 1.40e-3             & 1.38e-3             & 1.38e-3             & 1.36e-3             & 0.010 & 3.38e-1             & \underline{6.64e-2} & 3.06e-2             & 3.06e-2             & 3.02e-2             & 0.009 \\
TSW-SL    & 9.41e-3             & \textbf{2.03e-7}      & \textbf{9.63e-8}    & \textbf{4.44e-8}    & \textbf{3.65e-8}    & 0.014 & 3.49e-1             & 9.06e-2             & \underline{2.96e-2} & \underline{1.20e-2} & \textbf{3.03e-7}      & 0.010 \\
MaxTSW-SL & \textbf{2.75e-6}    & \underline{8.24e-7} & \underline{5.14e-7} & \underline{5.02e-7} & \underline{5.00e-7} & 2.53  & \textbf{1.12e-1}    & \textbf{8.28e-3}    & \textbf{1.61e-6}    & \textbf{7.32e-7}    & \underline{5.19e-7} & 2.49 \\
                              \bottomrule
\end{tabular}
\end{adjustbox}
\end{table*}


\section{Experimental Results}
\label{sec:experimental_results}

In this section, we present empirical results demonstrating the advantages of our TSW-SL distance over traditional SW distance and its variants, and how MaxTSW-SL enhances the original MaxSW \citep{deshpande2019max} through optimized tree construction. The splitting maps $\alpha$ will be selected either as a trainable constant vector or a random vector, while the tree systems will be sampled such that the root is positioned near the mean of the target distribution, i.e. the data mean.
It is worth noting that the paper presents a simple alternative by substituting lines in SW with tree systems, focusing mainly on comparing TSW-SL with the original SW, without expecting TSW-SL to outperform more recent SW variant. Further improvements to TSW-SL could be made by incorporating advanced techniques developed for SW, but we leave this for future research, choosing instead to focus on the fundamental aspects of TSW-SL.

In order to provide a comprehensive assessment, our experiments focus on key tasks where SW has been widely applied in the literature: gradient flows and generative models (including GANs and denoising diffusion models). We first demonstrate TSW-SL's performance on gradient flow tasks using the 25 Gaussians and Swiss Roll datasets, highlighting its ability to capture complex topological properties, with additional higher-dimensional results in the Appendix. Next, we evaluate generative models across various datasets, ranging from simple CIFAR-10 to complex STL-10, assessing scalability and robustness. We refer the reader to Appendix \ref{experimental details} for additional experiments on color transfer and ablation studies on the effect of the number of lines $k$ on the performance of generative adversarial networks.

\subsection{Gradient Flows}
First of all, we conduct experiments to compare the effectiveness of our methods with baselines in the gradient flow task. In this task, we aim to minimize $\text{TSW-SL}(\mu, \nu)$, where $\nu$ is the target distribution and $\mu$ represents the source distribution. The optimization process is carried out iteratively as $\partial_{t} \mu_{t} = -\nabla \text{TSW-SL}(\mu_{t}, \nu)$ with $\mu_{0} = \mathcal{N}(0,1)$, $-\partial_{t} \mu_{t}$ represents the change in the source distribution over time and $\nabla \text{TSW-SL}(\mu_{t}, \nu)$ is the gradient of $\text{TSW-SL}$ with respect to $\mu_{t}$. We initialize with $\mu_{0} = \mathcal{N}(0,1)$ and iteratively update $\mu_{t}$ over 2500 iterations.
To compare the effectiveness of various distance metrics, we employ alternative distances as loss functions (SW \citep{bonneel2015sliced}, MaxSW \citep{deshpande2019max}, SWGG \citep{mahey2023fast} and LCVSW \citep{nguyen2023sliced}) instead of TSW-SL.
Over $2500$ timesteps, we evaluate the Wasserstein distance between source and target distributions at iteration $500$, $1000$, $1500$, $2000$ and $2500$. We use $L=100$ in SW variants and $L=25,k=4$ in TSW-SL for a fair comparison. Detailed training settings are presented in Appendix~\ref{exp: gradient_flow}. 

We first utilize both the Swiss Roll (a non-linear dataset) and 25 Gaussians (a multimodal dataset) as described in \citep{kolouri2019generalized}. In Table~\ref{table:gradientflow}, we present the performance and runtime of various methods on these datasets, emphasizing the reduction of the Wasserstein distance over iterations. Notably, across both datasets, our TSW-SL method demonstrates superior performance by significantly reducing the Wasserstein distance. Moreover, our MaxTSW-SL method shows a significant decrease in the Wasserstein distance compared to MaxSW, highlighting its improved performance and effectiveness. 
\begin{table*}[h]
\centering
\caption{Average Wasserstein distance between source and target distributions of $10$ runs on high-dimensional datasets.}
\label{table:gradient_flow_rebuttal}
\medskip
\renewcommand*{\arraystretch}{1}
\begin{adjustbox}{width=1.\textwidth}
\begin{tabular}{cccccccccccccccc}
\toprule
       & \multicolumn{2}{c}{Iteration 500} & \multicolumn{2}{c}{Iteration 1000} & \multicolumn{2}{c}{Iteration 1500} & \multicolumn{2}{c}{Iteration 2000} & \multicolumn{2}{c}{Iteration 2500} & \multicolumn{2}{c}{Time/Iter(s)} \\
\cmidrule(lr){2-3} \cmidrule(lr){4-5} \cmidrule(lr){6-7} \cmidrule(lr){8-9} \cmidrule(lr){10-11}
Dimension & SW & TSW-SL & SW & TSW-SL & SW & TSW-SL & SW & TSW-SL & SW & TSW-SL & SW & TSW-SL \\
\midrule
10  & 4.32e-3 & \textbf{2.81e-3} & 2.94e-3 & \textbf{2.00e-3} & 2.81e-3 & \textbf{1.55e-3} & 2.23e-3 & \textbf{1.59e-3} & 2.28e-3 & \textbf{1.75e-3} & 0.010 & 0.015 \\
50  & 50.41 & \textbf{39.26} & 45.69 & \textbf{21.91} & 42.56 & \textbf{11.91} & 38.81 & \textbf{4.08} & 35.75 & \textbf{1.72} & 0.014 & 0.018 \\
75  & 92.39 & \textbf{79.71} & 90.79 & \textbf{67.99} & 90.07 & \textbf{53.92} & 86.58 & \textbf{44.91} & 90.31 & \textbf{31.61} & 0.015 & 0.018 \\
100 & 130.12 & \textbf{117.66} & 128.13 & \textbf{103.23} & 128.58 & \textbf{93.41} & 129.80 & \textbf{80.46} & 128.29 & \textbf{75.28} & 0.018 & 0.019 \\
150 & 214.09 & \textbf{203.30} & 213.71 & \textbf{190.62} & 215.05 & \textbf{186.77} & 212.90 & \textbf{183.52} & 216.32 & \textbf{182.63} & 0.020 & 0.022\\
200 & 302.84 & \textbf{289.83} & 301.35 & \textbf{283.34} & 303.07 & \textbf{276.94} & 302.70 & \textbf{279.24} & 301.51 & \textbf{279.08} & 0.020 & 0.021\\
\bottomrule
\end{tabular}
\end{adjustbox}
\end{table*}

Furthermore, we provide additional results from experiments of 10, 50, 75, 100, 150, and 200-dimensional Gaussian distributions, where target distribution supports were sampled from these high-dimensional spaces to showcase the empirical advantages of our TSW-SL in capturing topological properties. In this context, we compare the Tree Sliced Wasserstein distance on a System of Lines (TSW-SL) with Sliced Wasserstein distance (SW) to demonstrate TSW-SL's effectiveness when distribution supports lie in high-dimensional spaces. The results presented in Table \ref{table:gradient_flow_rebuttal} highlight TSW-SL's superior ability to preserve the original data's topological properties compared to SW.

\subsection{Generative Adversarial Network}

\begin{table*}[ht]
  \caption{Average FID and IS score of 3 runs on CelebA and STL-10 of SN-GAN.}
  \label{table:generative-model}
  \centering
\begin{adjustbox}{width=1\textwidth}
\begin{tabular}{lccclccc}
\toprule
& \makecell[c]{CelebA (64x64)} & \multicolumn{2}{c}{\makecell[c]{STL-10 (96x96)}} & & \makecell[c]{CelebA (64x64)} & \multicolumn{2}{c}{\makecell[c]{STL-10 (96x96)}}\\
\cmidrule(lr){2-4}\cmidrule(lr){6-8}
       & FID($\downarrow$) & FID($\downarrow$) & IS($\uparrow$) &  & FID($\downarrow$) & FID($\downarrow$) & IS($\uparrow$) \\
\midrule
SW $(L = 50)$    & 9.97 $\pm$ 1.02 & 69.46 $\pm$ 0.21 & 9.08 $\pm$ 0.06 & SW $(L = 500)$ & 9.62 $\pm$ 0.42 & 53.52 $\pm$ 0.61 & 10.56 $\pm$ 0.05 \\
TSW-SL ($L=10, k = 5$)   & 9.63 $\pm$ 0.46 & \textbf{61.15 $\pm$ 0.37 } & \textbf{10.00 $\pm$ 0.03} & TSW-SL ($L=100, k = 5$) & \textbf{8.90$\pm$ 0.49} & \textbf{51.81 $\pm$ 1.02 } & \textbf{10.74 $\pm$ 0.13} \\
TSW-SL ($L=17,k = 3$)   & \textbf{8.98$\pm$ 0.75} & 65.91 $\pm$ 0.64 & 9.75 $\pm$ 0.10 & TSW-SL ($L=167, k = 3$) & \textbf{8.90 $\pm$ 0.38} & 52.27$\pm$ 0.96 & 10.62$\pm$ 0.18 \\
\bottomrule
\end{tabular}
\end{adjustbox}
\end{table*}
We then explore the capabilities of our proposed TSW-SL framework within the context of generative adversarial networks (GANs). We employ the SNGAN architecture \citep{miyato2018spectral}. In detail, our approach is based on the methodology of the Sliced Wasserstein generator \citep{deshpande2018generative}, with details provided in the Appendix \ref{exp: generative_models}. Specifically, we conduct deep generative modeling experiments on the non-cropped CelebA dataset \citep{Krizhevsky2009LearningML} with image size $64 \times 64$, and on the STL-10 dataset \citep{wang2016unsupervised} with image size $96 \times 96$.

To demonstrate the empirical advantage of our method in enhancing generative adversarial networks, we employ two primary metrics: the Fr\'echet Inception Distance (FID) score \citep{inproceedings} and the Inception Score (IS) \citep{10.5555/3157096.3157346}. We omit to report the IS for the CelebA dataset as it does not effectively capture the perceptual quality of face images \citep{inproceedings}. Table \ref{table:generative-model} presents the results of SW and TSW-SL methodologies on the CelebA and STL-10 datasets, utilizing FID and IS as our metrics. We conduct experiments with two configurations of projecting directions: for 50 projecting directions, we use $L = 50$ in SW compared to $L = 10, k = 5$ and $L = 17, k = 3$ in TSW-SL; for 500 projecting directions, we use $L = 500$ in SW compared to $L = 100, k = 5$ and $L = 167, k = 3$ in TSW-SL. Our results reveal that TSW-SL significantly outperforms SW, demonstrating a considerable performance gap on both datasets in terms of IS and FID. We provide additional qualitative results and ablation study for our methods with respect to the number of lines in Appendix \ref{exp: generative_models}.

    

\subsection{Denoising Diffusion Models}
Finally, we concentrate on denoising diffusion models \citep{sohl2015deep, ho2020denoising}, which are among the most complex generative frameworks for image generation. Diffusion models consist of a forward process that gradually adds Gaussian noise to data and a reverse process that learns to denoise the data. The forward process is defined as a Markov chain of \( T \) steps, where each step adds noise according to a predefined schedule. The reverse process, parameterized by \( \theta \), aims to learn the denoising distribution. Traditionally, these models are trained using maximum likelihood by optimizing the evidence lower bound (ELBO). However, to accelerate generation, denoising diffusion GANs \citep{xiao2021tackling} introduce an implicit denoising model and employ adversarial training. In our work, we build upon the framework in \citep{nguyen2024sliced} and replace the Augmented Generalized Mini-batch Energy distance with our novel TSW-SL distance as the kernel and conducting experiments on the CIFAR-10 dataset \citep{Krizhevsky2009LearningML}. For a detailed description of the model architecture and training loss, we refer readers to Appendix \ref{appendix:diffusion_model}.

Table \ref{table:diffusion} demonstrates that our TSW-SL loss function significantly enhances FID performance compared to conventional SW, which is 22.43\% over DDGAN and 2.4\% over SW-DD. This improvement underscores the efficacy of our approach in generating high-quality samples with improved fidelity.

\begin{table}[h]
    \centering
    \caption{Results for unconditional generation on CIFAR-10 of denoising diffusion models}
    \label{table:diffusion}
    \begin{adjustbox}{width=.48\textwidth} 
        \begin{tabular}{lccc} 
            \toprule 
            \textbf{Model} & FID $\downarrow$ & Time/Epoch(s)$\downarrow$\\ 
            \midrule 
            DDGAN (\cite{xiao2021tackling}) & 3.64 & 136 \\
            SW-DD (\cite{nguyen2024sliced}) & 2.90 & 140 \\
            TSW-SL-DD (Ours) & \textbf{2.83} & 163\\
            \bottomrule 
        \end{tabular}
    \end{adjustbox}
\end{table}

\section{Conclusion}
This paper proposes a novel method called Tree-Sliced Wasserstein on Systems of Lines (TSW-SL), replacing the traditional one-dimensional lines in the Sliced Wasserstein (SW) framework with tree systems, providing a more geometrically meaningful space. This key innovation enables the proposed TSW-SL to capture more detailed structural information and geometric relationships within the data compared to SW while preserving computational efficiency. We rigorously develop the theoretical basis for our approach, verifying the essential properties of the Radon Transform and empirically demonstrating the benefits of TSW-SL across a range of application tasks. As this paper introduces a straightforward alternative by replacing one-dimensional lines in SW with tree systems, our primary comparison is between TSW-SL and the original SW, without anticipating that TSW-SL will surpass more recent SW variants. Future research on adapting recent advance techniques within the SW framework to TSW-SL remains an open area and is anticipated to lead to improved performance for Sliced Optimal Transport overall.







\section*{Acknowledgements}

We thank the area chairs and anonymous reviewers for their comments. TL gratefully acknowledges the support of JSPS KAKENHI Grant number 23K11243, and Mitsui Knowledge Industry Co., Ltd. grant.

\section*{Impact Statement}


This paper presents work whose goal is to advance the field of 
Machine Learning. There are many potential societal consequences 
of our work, none which we feel must be specifically highlighted here.


\bibliography{icml2025}
\bibliographystyle{icml2025}

\newpage
\appendix
\onecolumn

\newpage

\section*{Notation}
\label{section:Notation}

\begin{table}[h]
\renewcommand*{\arraystretch}{1.2}
    \begin{tabularx}{\textwidth}{p{0.40\textwidth}X}
    $\mathbb{R}^d$ & $d$-dimensional Euclidean space \\
    $\|\cdot\|_2$ & Euclidean norm \\
    $\left< \cdot, \cdot \right>$ & standard dot product \\
    $\mathbb{S}^{d-1}$ & $(d-1)$-dimensional hypersphere \\
    $\theta$ & unit vector \\
        $\sqcup$ & disjoint union \\
        $L^1(X)$ & space of Lebesgue integrable functions on $X$ \\
    $\mathcal{P}(X)$ & space of probability distributions on $X$ \\
        $\mu,\nu$ & measures \\
        $\delta(\cdot)$ & $1$-dimensional Dirac delta function \\
        $\mathcal{U}(\mathbb{S}^{d-1})$ & uniform distribution on $\mathbb{S}^{d-1}$ \\
        $\sharp$ & pushforward (measure) \\
    $\mathcal{C}(X,Y)$ & space of continuous maps from $X$ to $Y$ \\
        $d(\cdot,\cdot)$ & metric in metric space \\
    $\text{W}_p$ & $p$-Wasserstein distance \\
    $\text{SW}_p$ & Sliced $p$-Wasserstein distance \\
        $\Gamma$ & (rooted) subtree \\
    $e$ & edge in graph \\
    $w_e$ & weight of edge in graph \\
    $l$ & line, index of line \\
    $\mathcal{L}$ & system of lines, tree system \\
        $\bar{\mathcal{L}}$ & ground set of system of lines, tree system \\
    $\Omega_\mathcal{L}$ & topological space of system of lines \\
    $\mathbb{L}^d_k$ & space of symtems of $k$ lines in $\mathbb{R}^d$ \\
    $\mathcal{T}$ & tree structure in system of lines \\
    $L$ & number of tree systems \\
    $k$ & number of lines in a system of lines or a tree system \\
    $\mathcal{R}$ & original Radon Transform \\
    $\mathcal{R}^\alpha$ & Radon Transform on Systems of Lines \\
    $\Delta_{k-1}$ & $(k-1)$-dimensional standard simplex \\
    $\alpha$ & splitting map \\
    $\mathbb{T}$ & space of tree systems \\
    $\sigma$ & distribution on space of tree systems
     \end{tabularx}
\end{table}

\newpage

\begin{center}
{\bf \Large{Supplement for \\ ``Tree-Sliced Wasserstein Distance: A Geometric Perspective''}}
\end{center}

\section{Tree System}
\label{appendix:tree-system}
In this section, we introduce the notion of a tree system, beginning with a collection of unstructured lines and progressively adding a tree structure to form a well-defined metric space with a tree metric. It is important to note that while some statements here differ slightly from those in the paper, the underlying ideas remain the same.

\subsection{System of Lines}
We have a definition of lines by parameterization. Observe that, a line in $\mathbb{R}^d$ is completely determined by a pair $(x,\theta) \in \mathbb{R}^d \times \mathbb{S}^{d-1}$ via $x+t\cdot\theta, t \in \mathbb{R}$.

\begin{definition}[Line and System of lines in $\mathbb{R}^d$]
A \textit{line in $\mathbb{R}^d$} is an element $(x,\theta)$ of $\mathbb{R}^d \times \mathbb{S}^{d-1}$, and the \textit{image} of a line $(x,\theta)$ is defined by:
\begin{equation}
    \Img(x,\theta) \coloneqq \{x+t\cdot\theta ~ \colon ~ t \in \mathbb{R}\} \subset \mathbb{R}^d.
\end{equation} 
For $k \ge 1$, a \textit{system of $n$ lines in $\mathbb{R}^d$} is a sequence of $k$ lines. 
\end{definition}

\begin{remark}
A line in $\mathbb{R}^d$ is usually denoted, or indexed, by $l = (x_l,\theta_l) \in \mathbb{R}^d \times \mathbb{S}^{d-1}$. Here, $x_l$ and $\theta_l$ are called \textit{source} and \textit{direction} of $l$, respectively. Denote $(\mathbb{R}^d \times \mathbb{S}^{d-1})^k$ by $\mathbb{L}^d_k$, which is the \textit{collection of systems of $k$ lines in $\mathbb{R}^d$}, and an element of $\mathbb{L}^d_k$ is usually denoted by $\mathcal{L}$.
\end{remark}

\begin{definition}[Ground Set]
    The \textit{ground set} of a system of lines $\mathcal{L}$ is defined by:
\begin{align*}
   \bar{\mathcal{L}} &\coloneqq \left \{ (x,l ) \in  \mathbb{R}^d \times \mathcal{L}~ \colon ~ x = x_l+t_x\cdot\theta_l \text{ for some } t_x \in \mathbb{R} \right\}.
\end{align*}
    For each element $(x,l) \in \bar{\mathcal{L}}$, we sometime write $(x,l)$ as $(t_x,l)$, where $t_x \in \mathbb{R}$, which presents the parameterization of $x$ on $l$ by source $x_l$ and direction $\theta_l$, as $x = x_l +  t_x \cdot \theta_l$.
\end{definition}

\begin{remark}
In other words, the ground set $\bar{\mathcal{L}}$ is the disjoint union of images of lines in $\mathcal{L}$:
\begin{align*}
    \bar{\mathcal{L}} = \bigsqcup_{l \in \mathcal{L}} \Img(l).
\end{align*}
This notation seems to be redundant, but will be helpful when we define functions on $\bar{\mathcal{L}}$. 
\end{remark}

\subsection{System of Lines with Tree Structures (Tree System)}
Consider a finite system of lines $\mathcal{L}$ in $\mathbb{R}^d$. Assume that these lines are geometrically distinct, i.e. their images are distinct.
Define the graph $\mathcal{G}_\mathcal{L}$ associated with $\mathcal{L}$, where $\mathcal{L}$ is the set of nodes in $\mathcal{G}_\mathcal{L}$, and two nodes are adjacent if the two corresponding lines intersect each other. Here, saying two lines in $\mathbb{R}^d$ intersect means their images have exactly one point in common.

\begin{definition}[Connected system of lines]
    $\mathcal{L}$ is called \textit{connected} if its associated graph $\mathcal{G}_\mathcal{L}$ is connected.
\end{definition}

\begin{remark}
Intuitively, each edge of $\mathcal{G}_\mathcal{L}$ represents the intersection of its endpoints. If $\mathcal{L}$ is connected, for every two points that each one lies on some lines in $\mathcal{L}$, one can travel to the other through lines in $\mathcal{L}$.     
\end{remark}

From now on, we will only consider the case $\mathcal{L}$ is connected. 
Recall the notion of a spanning tree of a graph $\mathcal{G}$, which is a subgraph of $\mathcal{G}$ that contains all nodes of $\mathcal{G}$, and also is a tree.

\begin{definition}[Tree system of lines]
    Let $\mathcal{L}$ be a connected system of lines. A spanning tree $\mathcal{T}$ of $\mathcal{G}_\mathcal{L}$ is called a \textit{tree structure of $\mathcal{L}$}. A pair $(\mathcal{L}, \mathcal{T})$ consists of a connected system of lines $\mathcal{L}$ and a tree structure $\mathcal{T}$ of $\mathcal{L}$ is called a \textit{tree system of lines}.
\end{definition}

\begin{remark}
    For short, we usually call a tree system of lines as a \textit{tree system}. In a tree system $(\mathcal{L}, \mathcal{T})$, images of two lines of $\mathcal{L}$ can intersect each other even when they are not adjacent in $\mathcal{T}$. 
\end{remark}
Let $r$ be an arbitrary line of $\mathcal{L}$. Denote $\mathcal{T}_{r}$ as the tree $\mathcal{T}$ rooted at $r$, and denote the (rooted) tree system as $(\mathcal{L}, \mathcal{T}_r)$ if we want to specify the root.
\begin{definition}[Depth of lines in a tree system]
Let $(\mathcal{L}, \mathcal{T}_r)$ be a tree system. For each $m \ge 0$, a line $l \in \mathcal{L}$ is called a \textit{line of depth $m$} if the (unique) path from $r$ to $l$ in $\mathcal{T}$ has length $m$. Denote $\mathcal{L}_m$ as the \textit{set of lines of depth $m$}.
\end{definition}

\begin{remark}
    Note that $\mathcal{L}_0 = \{r\}$. Let $T$ be the maximum length of paths in $\mathcal{T}$ start from $r$, which is called the \textit{depth of the line system}. $\mathcal{L}$ has a partition as $\mathcal{L} = \mathcal{L}_0 \sqcup \mathcal{L}_1 \sqcup \ldots \sqcup \mathcal{L}_T$.
\end{remark}
For $l \in \mathcal{L}$ that is not the root, denote $\operatorname{pr}(l) \in \mathcal{L}$ as the \textit{parent of $l$}, i.e. the (unique) node on the unique path from $l$ to $r$ that is adjacent to $l$. Note that, by definition, $l$ and $\operatorname{pr}(l)$ intersect each other. We sometimes omit the root when the context is clear.

\begin{definition}[Canonical tree system]
A tree system $(\mathcal{L}, \mathcal{T})$ is called a \textit{canonical tree system} if for all $l \in \mathcal{L}$ that is not the root, the intersection of $l$ and $\operatorname{pr}(l)$ is the source $x_l$ of $l$.
\end{definition}

\begin{remark}
    In other words, in a canonical tree system, a line that differs from the root will have its source lies on its parent. For the rest of the paper, a tree system $(\mathcal{L},\mathcal{T})$ will be considered to be a canonical tree system.
\end{remark}

\subsection{Topological Properties of Tree Systems}

We will introduce the notion of the topological space of a tree system. Let $(\mathcal{L}, \mathcal{T})$ be a (canonical) tree system. Consider a graph where the nodes are elements of $\bar{\mathcal{L}}$; $(x,l)$ and $(x',l')$ are adjacent if and only if one of the following conditions holds:
\begin{enumerate}
    \item $l = \operatorname{pr}(l')$, $x = x'$, and $x'$ is the source of $l'$.
    \item $l' = \operatorname{pr}(l)$, $x = x'$, and $x$ is the source of $l$.
\end{enumerate}
Let $\sim$ be the relation on $\bar{\mathcal{L}}$ such that $(x,l) \sim (x',l')$ if and only if $(x,l)$ and $(x',l')$ are connected in the above graph. By design, $\sim$ is an equivalence relation on $\bar{\mathcal{L}}$. The set of all equivalence classes in $\bar{\mathcal{L}}$ with respect to the equivalence relation $\sim$ as $\Omega_{\mathcal{L}} = \bar{\mathcal{L}} / \sim$.

\begin{remark}
    In other words, we identify the source of lines to the corresponding point on its parent. 
\end{remark}


We recall the notion of disjoint union topology and quotient topology in \citep{hatcher2005algebraic}. For a line $l$ in $\mathbb{R}^d$, the image $\Img(l)$ is a topological space, moreover, a metric space, that is homeomorphic and isometric to $\mathbb{R}$ via the map $t \mapsto x_l + t \cdot \theta_l$. The metric on $\Img(l)$ is $d_l(x,x') = |t_x-t_{x'}|$ for all $x,x' \in \Img(l)$. For each $l \in \mathcal{L}$, consider the injection map:
\begin{align*}
    f_l ~\colon ~ \Img(l)  &\longrightarrow  \bigsqcup_{l \in \mathcal{L}} \Img(l) = \bar{\mathcal{L}} \\
    x ~~ &\longmapsto  ~~ (x,l).
\end{align*}
$\bar{\mathcal{L}} = \bigsqcup_{l \in \mathcal{L}} \Img(l)$ now becomes a topological space with the disjoint union topology, i.e. the finest topology on $\bar{\mathcal{L}}$ such that the map $f_l$ is continuous for all $l \in \mathcal{L}$. Also, consider the quotient map:
\begin{align*}
    \pi ~\colon ~~ \bar{\mathcal{L}}~~~ &\longrightarrow ~~ \Omega_{\mathcal{L}} \\
    (x,l) ~ &\longmapsto  [(x,l)].
\end{align*}
$\Omega_{\mathcal{L}}$ now becomes a topological space with the quotient topology, i.e. the finest topology on $\Omega_{\mathcal{L}}$ such that the map $\pi$ is continuous.
\begin{definition}[Topological space of a tree system]
    The topological space $\Omega_{\mathcal{L}}$ is called the \textit{topological space of a tree system $(\mathcal{L},\mathcal{T})$}.
\end{definition}

\begin{remark}
    In other words, $\Omega_{\mathcal{L}}$ is formed by gluing all images $\Img(l)$ along the relation $\sim$. 
\end{remark}

We show that the topological space $\Omega_\mathcal{L}$ is metrizable.

\begin{definition}[Paths in $\Omega_\mathcal{L}$]
    For $a$ and $b$ in $\Omega_\mathcal{L}$ with $a \neq b$, a \textit{path from $a$ to $b$ in $\Omega_\mathcal{L}$} is a continuous injective map $\gamma \colon [0,1] \rightarrow \Omega_\mathcal{L}$ where $\gamma(0) = a$ and $\gamma(1) = b$. By convention, for $a$ in $\Omega_\mathcal{L}$, the path from $a$ to $a$ in $\Omega_\mathcal{L}$ is the constant map $\gamma \colon [0,1] \rightarrow \Omega_\mathcal{L}$ such that $\gamma(t) = a$ for all $t \in [0,1]$.  For a path $\gamma$ from $a$ to $b$, the \textit{image of $\gamma$} is defined by:
    \begin{equation}
        \Img(\gamma) \coloneqq \gamma([0,1]) \subset \Omega_{\mathcal{L}}.
    \end{equation}
\end{definition}

\begin{theorem}[Existence and uniqueness of path in $\Omega_\mathcal{L}$]
\label{theorem:existence-and-uniqueness-of-path}
    For all $a$ and $b$ in $\Omega_\mathcal{L}$, there exist a path $\gamma$ from $a$ to $b$ in $\Omega_\mathcal{L}$. Moreover, $\gamma$ is unique up to a re-parameterization, i.e. if $\gamma$ and $\gamma'$ are two path $\gamma$ from $a$ to $b$ in $\Omega_\mathcal{L}$, there exist a homeomorphism $\varphi \colon [0,1] \rightarrow [0,1]$ such that $\gamma = \gamma' \circ \varphi$.
\end{theorem}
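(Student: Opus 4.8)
The plan is to prove \emph{existence} by an explicit construction dictated by the tree $\mathcal{T}$, and \emph{uniqueness} by a cut-point (separation) argument combined with the elementary fact that a continuous injection out of a compact space into a Hausdorff space is a homeomorphism onto its image. First I would treat existence. Given $a = [(x_a,l_a)]$ and $b = [(x_b,l_b)]$, let $l_a = m_0, m_1, \dots, m_s = l_b$ be the unique path in $\mathcal{T}$ joining the lines $l_a$ and $l_b$. Consecutive lines $m_i, m_{i+1}$ are adjacent in $\mathcal{T}$, hence by canonicality they meet at a single gluing point $c_i \in \Omega_\mathcal{L}$, namely the source of whichever of the two is the child. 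I would walk along $\Img(m_0)$ from $a$ to $c_0$, then along $\Img(m_1)$ from $c_0$ to $c_1$, and so on, finishing along $\Img(m_s)$ from $c_{s-1}$ to $b$. Each leg is the image of a monotone segment in a copy of $\mathbb{R}$, hence a continuous injective arc; concatenating and reparameterizing onto $[0,1]$ yields a candidate path $\gamma_0$ whose image I denote $P_{a,b}$. Continuity at the junction parameters follows from the pasting lemma (the legs agree at the points $c_i$), and global injectivity follows from acyclicity of $\mathcal{T}$: distinct lines of $\Omega_\mathcal{L}$ meet only at their tree-gluing points, so two legs share at most the single junction they have in common, while each leg is traversed monotonically.

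For uniqueness I would first record the reduction. Since $[0,1]$ is compact and $\Omega_\mathcal{L}$ is Hausdorff, any continuous injective $\gamma \colon [0,1] \to \Omega_\mathcal{L}$ is a homeomorphism onto $\Img(\gamma)$. Hence if two paths $\gamma, \gamma'$ from $a$ to $b$ satisfy $\Img(\gamma) = \Img(\gamma')$, then $\varphi := (\gamma')^{-1}\circ\gamma$ is a homeomorphism of $[0,1]$ fixing both endpoints and $\gamma = \gamma'\circ\varphi$, which is exactly the claimed reparameterization. It therefore suffices to prove that \emph{every} injective path from $a$ to $b$ has image exactly $P_{a,b}$.

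The heart of the argument is a separation lemma: for this gluing, every $p \in \Omega_\mathcal{L}$ is a cut point, and the connected components of $\Omega_\mathcal{L}\setminus\{p\}$ are open, described explicitly by the two rays of the line through $p$ together with the subtrees hanging off each side. Granting this, I would establish both inclusions. For $P_{a,b}\subseteq\Img(\gamma)$: each interior point $p$ of $P_{a,b}$ separates $a$ from $b$, and since $\Img(\gamma)$ is connected and contains $a,b$, it must contain $p$. For $\Img(\gamma)\subseteq P_{a,b}$: if $\gamma$ visited some $q\notin P_{a,b}$, let $c$ be the branch point at which the component $U$ of $\Omega_\mathcal{L}\setminus\{c\}$ containing $q$ attaches, so that $a,b\notin U$. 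Then $\gamma^{-1}(U)$ is a nonempty open subset of $(0,1)$, and any connected component $(t_1,t_2)$ of it meeting $\gamma^{-1}(q)$ has $\gamma(t_1)=\gamma(t_2)=c\in\partial U$ by continuity, contradicting injectivity.

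The main obstacle is the point-set topology of the quotient: verifying that $\Omega_\mathcal{L}$ is Hausdorff and that $\Omega_\mathcal{L}\setminus\{p\}$ decomposes into open components with the stated branch structure. At this stage the metric $d_\mathcal{L}$ is not yet available — it is defined later precisely through $P_{a,b}$ — so I cannot invoke metrizability and must extract these facts directly from the disjoint-union-then-quotient construction, using the finiteness and acyclicity of $\mathcal{T}$. Once this separation structure is in hand, the remaining steps are routine.
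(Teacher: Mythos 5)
Your proof is correct, but it takes a genuinely different route from the paper's. The paper's own argument is homotopy-flavored and quite compressed: it picks, among all curves from $a$ to $b$, one whose set of traversed line-sources is minimal, asserts via the tree structure that every other curve's source set contains this minimal one, orders those sources $s_1,\dots,s_{k-1}$ by first-hitting parameters, glues the canonical linear segments between consecutive $s_i$'s (which lie on a common line) to build the path, and settles uniqueness in one line by appealing to homotopy of paths in $\mathbb{R}$. You instead construct the path combinatorially from the unique path $l_a = m_0, m_1, \dots, m_s = l_b$ in $\mathcal{T}$ and the canonical gluing points, and prove uniqueness by the compact-to-Hausdorff reduction followed by a cut-point separation lemma; your boundary argument (a component $U$ of $\Omega_\mathcal{L}\setminus\{c\}$ is open with $\partial U \subseteq \{c\}$, so an injective path entering and leaving $U$ would satisfy $\gamma(t_1)=\gamma(t_2)=c$) is a genuine replacement for the paper's homotopy appeal, not a paraphrase of it. What your route buys is rigor exactly where the paper is thinnest: the minimal-source-set selection and the one-line uniqueness claim are the paper's weak points, and your separation lemma makes both precise; what it costs is the point-set work you correctly flag (Hausdorffness and local connectedness of the quotient), which the paper also uses implicitly without verification, and which does follow from finiteness of $\mathcal{L}$ and the discreteness of the gluing classes. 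Two small repairs to record: your definition of the gate point $c$ is circular as phrased ($U$ is defined via $c$ and $c$ via $U$) — define $c$ instead as the first point of $P_{a,b}$ met by the constructed path from $q$ to $a$, then take $U$ to be the component of $\Omega_\mathcal{L}\setminus\{c\}$ containing $q$ and check $a,b \notin U$; and in the existence step you should prune degenerate legs (when consecutive junction classes coincide in $\Omega_\mathcal{L}$, or when $a$ or $b$ is itself a gluing point, so that the choice of $l_a$ is not unique), since monotone concatenation is injective only after such degenerate segments are removed.
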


\begin{proof}
    All previous results we state in this proof can be found in \citep{munkres2018elements, rotman2013introduction, hatcher2005algebraic}. For two point $a,b$ on the real line $\mathbb{R}$, all paths from $a$ to $b$ are homotopic to each other. In other words, all paths from $a$ to $b$ are homotopic to the canonical path:
    \begin{align*}
        \gamma_{a,b} ~ \colon ~~ [0,1] ~ &\longrightarrow ~~~ \mathbb{R} \\
        t ~~~ &\longmapsto (1-t)\cdot a + t \cdot b.
    \end{align*}

    Now consider two point $a,b$ on space $\Omega_\mathcal{L}$. Observe that $\Omega_\mathcal{L}$ is path-connected by design and by the fact that $\mathbb{R}$ is path-connected. Consider a curve from $a$ to $b$ on $\Omega_\mathcal{L}$, i.e. a continuous map $f \colon [0,1] \rightarrow \Omega_\mathcal{L}$, and consider the set consists of sources of lines in $\mathcal{L}$ that lie on the curve $f$, i.e. all the sources that belong to $f([0,1])$. We choose the curve $f$ that has the smallest set of sources. By the tree structure added to $\mathcal{L}$, all curves from $a$ to $b$ have the set of sources that contains the set of sources of $f$. We denote the sources belong to this set of $f$ as $s_1, \ldots, s_{k-1}$, and defined:
    \begin{equation*}
       x_{i} = \inf f^{-1}(s_i) \text{ for all } 1 \le i \le k-1.
    \end{equation*}
We reindex $s_i$ such that:
\begin{align*}
    x_1 \le \ldots \le x_{k-1} 
\end{align*}
For convention, we define $s_0 = a$ and $s_{k} = b$. By design, for $i = 0, \ldots, k-1$, we have $s_i$ and $s_{i+1}$ line on the same line in $\mathcal{L}$. So by the result of paths on $\mathbb{R}$, there exist a path $\gamma_i$ from $s_i$ to $s_{i+1}$ on $\Omega_\mathcal{L}$. Gluing $\gamma_0, \gamma_1,\ldots,\gamma_{k-1}$ to get a path $\gamma$ from $s_0 = a$ to $s_{k} = b$ on $\Omega_\mathcal{L}$ by:
    \begin{align*}
        \gamma ~ \colon ~~ [0,1] ~ &\longrightarrow ~~~ \Omega_\mathcal{L} \\
        t ~~~ &\longmapsto \gamma_i(k\cdot t-i) ~ \text{ if } ~ t \in \left[\dfrac{i}{k},\dfrac{i+1}{k}\right], i = 0 , \ldots, k-1.
    \end{align*}
It is clear to check $\gamma$ is a path from $a$ to $b$ on $\Omega$, and the uniqueness (up to re-parameterization) of $\gamma$ comes from homotopy of paths in $\mathbb{R}$.
\end{proof}

\begin{remark}
    The image of a path from $a$ to $b$ does not depend on the chosen path $\gamma$ by the uniqueness property. Indeed, for a homeomorphism $\varphi \colon [0,1] \rightarrow [0,1]$, we have $\gamma([0,1]) = \gamma \circ \varphi([0,1])$. Denote the image of \textit{any path} from $a$ to $b$ by $P_{a,b}$.
\end{remark}
    Let $\mu$ be the standard Borel measure on $\mathbb{R}$, i.e. $\mu\left((a,b] \right) = b-a$ for every half-open interval $(a,b]$ in $\mathbb{R}$. For $l \in \mathcal{L}$, denote $\mu_l$ as the pushforward of $\mu$ by the map $t \mapsto x_l + t \cdot \theta_l$, which is a Borel measure on $\Img(l)$. Denote the $\sigma$-algebra of Borel sets in $\bar{\mathcal{L}}$ and  $\Omega_\mathcal{L}$ as $\mathcal{B}(\bar{\mathcal{L}})$ and $\mathcal{B}(\Omega_\mathcal{L})$, respectively. 
    
\begin{definition}[Borel measure on $\bar{\mathcal{L}}$ and $\Omega_\mathcal{L}$]
     The map $\mu_{\bar{\mathcal{L}}} \colon \mathcal{B}(\Omega_\mathcal{L}) \rightarrow [0,\infty)$ that is defined by:
    \begin{align*}
        \mu_{\bar{\mathcal{L}}}(B) \coloneqq \sum_{l \in \mathcal{L}} \mu_l\left(f_l^{-1}(B) \right) ~ , ~  \forall B \in \mathcal{B}(\bar{\mathcal{L}}),
    \end{align*}
    is called the \textit{Borel measure on $\bar{\mathcal{L}}$}. Define the \textit{Borel measure on $\Omega_\mathcal{L}$}, denoted by $\mu_{\Omega_\mathcal{L}}$, as the pushforward of $\mu_{\bar{\mathcal{L}}}$ by the map $\pi \colon \bar{\mathcal{L}} \rightarrow \Omega_\mathcal{L}$.
\end{definition}

It is straightforward to show that $\mu_{\bar{\mathcal{L}}}$ is well-defined, and indeed a Borel measure of $\bar{\mathcal{L}}$. As a corollary, $\mu_{\Omega_\mathcal{L}}$ is also a Borel measure of $\Omega_\mathcal{L}$.

\begin{remark}
    By abuse of notation, we sometimes simply denote both of $\mu_{\bar{\mathcal{L}}}$ and $\mu_{\Omega_\mathcal{L}}$ as $\mu_{\mathcal{L}}$.
\end{remark}

\begin{theorem}[$\Omega_\mathcal{L}$ is metrizable by a tree metric]
\label{theorem:metrizable}
    Define the map $d_{\Omega} \colon \Omega_\mathcal{L} \times \Omega_\mathcal{L} \rightarrow [0,\infty)$ by:
    \begin{equation}
        d_{\mathcal{L}}(a,b) \coloneqq \mu_{\mathcal{L}}\left(P_{a,b}\right) ~ , ~ \forall a,b \in \Omega_\mathcal{L}.
    \end{equation}
    Then $d_{\mathcal{L}}$ is a metric on $\Omega_\mathcal{L}$, which makes $(\Omega_\mathcal{L}, d_{\mathcal{L}})$ a metric space. Moreover, $d_{\mathcal{L}}$ is a tree metric, and the topology on $\Omega_\mathcal{L}$ induced by $d_{\mathcal{L}}$ is identical to the topology of $\Omega_\mathcal{L}$.
\end{theorem}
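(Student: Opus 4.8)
The plan is to verify the four metric axioms, then identify $(\Omega_\mathcal{L}, d_\mathcal{L})$ as an $\mathbb{R}$-tree, and finally compare the two topologies. The starting point is a concrete description of the path image $P_{a,b}$. By Theorem~\ref{theorem:existence-and-uniqueness-of-path}, the unique path from $a$ to $b$ traverses a finite sequence of sources and, between consecutive sources, runs along a single line; hence $P_{a,b}$ is a finite concatenation of Euclidean segments meeting only at sources. Since $\mu_\mathcal{L}$ restricts on each $\Img(l)$ to the pushforward of Lebesgue measure and the set of sources is finite (thus $\mu_\mathcal{L}$-null), I would conclude that $\mu_\mathcal{L}(P_{a,b})$ equals the sum of the parameter-lengths of these segments. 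With this formula, non-negativity is immediate, symmetry follows from $P_{a,b} = P_{b,a}$ (reversing a path leaves its image unchanged), and $d_\mathcal{L}(a,b) = 0 \iff a = b$ follows because for $a \ne b$ the path contains at least one nondegenerate segment of positive length.

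The crux is the triangle inequality, which I would obtain together with the tree-metric property from a single structural fact: for any $a,b,c$ there is a \emph{median} point $m$ with $P_{a,b} = P_{a,m} \cup P_{m,b}$, $P_{b,c} = P_{b,m} \cup P_{m,c}$, and $P_{a,c} = P_{a,m} \cup P_{m,c}$, where in each case the two pieces overlap only at $m$. To produce $m$, I would project the configuration onto the combinatorial tree $\mathcal{T}$: the sources crossed by the three paths span a finite subtree, and $m$ is its median vertex (or a point interior to a segment), whose existence and the displayed decompositions follow from uniqueness of paths in $\Omega_\mathcal{L}$ and in $\mathcal{T}$. Given the decomposition, additivity of $\mu_\mathcal{L}$ over the essentially disjoint segments yields $d_\mathcal{L}(a,b) + d_\mathcal{L}(b,c) = d_\mathcal{L}(a,c) + 2\,d_\mathcal{L}(b,m) \ge d_\mathcal{L}(a,c)$, proving the triangle inequality; the same identity is exactly the $\mathbb{R}$-tree/four-point condition, so $d_\mathcal{L}$ is a tree metric and each $P_{a,b}$ is a geodesic. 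I expect the careful justification of the median decomposition to be the main obstacle, since it is where the tree structure $\mathcal{T}$ and the path-uniqueness theorem must be combined rigorously.

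For the topology, I would show the identity map between the quotient topology and the metric topology is a homeomorphism in both directions. That $\tau_{d_\mathcal{L}} \subseteq \tau_{\mathrm{quot}}$ follows from the universal property of the disjoint-union/quotient topology: it suffices that $\pi$ is continuous into $(\Omega_\mathcal{L}, d_\mathcal{L})$, and on each line the bound $d_\mathcal{L}(\pi(x),\pi(x')) \le |t_x - t_{x'}|$ makes $\pi|_{\Img(l)}$ continuous. For the reverse inclusion, I would argue locally: because $\mathcal{L}$ has finitely many lines and sources, for each point $p$ and small $\eps > 0$ the ball $\{q : d_\mathcal{L}(p,q) < \eps\}$ reaches no other source and is therefore exactly the union of the Euclidean $\eps$-intervals along the finitely many lines through $p$ --- a basic open set of the quotient topology; conversely every quotient-neighborhood of $p$ contains such a ball for small $\eps$. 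Hence the two topologies coincide, and this last part is routine local bookkeeping once the finiteness of the line system is invoked.
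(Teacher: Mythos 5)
Your proposal is correct in substance but takes a genuinely different route from the paper's. For the triangle inequality the paper never constructs a median point: it concatenates the paths from $a$ to $b$ and from $b$ to $c$ into a single curve from $a$ to $c$, invokes the source-set analysis from the proof of Theorem~\ref{theorem:existence-and-uniqueness-of-path} to conclude that the image of any curve from $a$ to $c$ contains $P_{a,c}$, hence $P_{a,c} \subseteq P_{a,b} \cup P_{b,c}$, and finishes by monotonicity and subadditivity of $\mu_{\mathcal{L}}$. Your median decomposition is strictly stronger: it yields the exact identity $d_{\mathcal{L}}(a,b) + d_{\mathcal{L}}(b,c) = d_{\mathcal{L}}(a,c) + 2\,d_{\mathcal{L}}(b,m)$, from which the four-point condition, the tree-metric claim, and the geodesic nature of each $P_{a,b}$ all follow --- claims that appear in the theorem statement but that the paper's written proof does not actually argue (it stops after the triangle inequality, with the tree structure on finite point sets only resurfacing in Corollary~\ref{cor:make-a-tree}). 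The same holds for your two-inclusion topology argument: continuity of $\pi$ line-by-line via $d_{\mathcal{L}}(\pi(x),\pi(x')) \le |t_x - t_{x'}|$ for one direction, and small balls being finite stars of parameter intervals (valid since paths are unique and there are finitely many sources, so a small ball avoids all other sources) for the other; the paper asserts the topological equivalence without proof. The cost of your route is the rigorous justification of the median and the three overlap-only-at-$m$ decompositions, which you correctly flag as the main obstacle --- but this uses exactly the path-uniqueness machinery of Theorem~\ref{theorem:existence-and-uniqueness-of-path} that the paper already deploys for its inclusion argument, so it is no harder than what the paper does. In short: the paper's proof is shorter because subadditivity of $\mu_{\mathcal{L}}$ lets it bypass the median entirely, while your proof is longer but establishes more of what the theorem actually claims.
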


\begin{proof}
    It is straightforward to check that $d_{\mathcal{L}}$ is positive definite and symmetry. We show the triangle inequality holds for $d_{\mathcal{L}}$. Let $a,b,c$ be points of $\Omega_\mathcal{L}$. It is enough to show that $P_{a,c}$ is a subset of $P_{a,b} \cup P_{b,c}$. Let $\gamma_0, \gamma_1$ be paths on $\Omega$ from $a$ to $b$ and from $b$ to $c$, respectively. Consider the curve from $a$ to $c$ on $\Omega$ defined by:
        \begin{align*}
        \gamma ~ \colon ~~ [0,1] ~ &\longrightarrow ~~~ \Omega_\mathcal{L} \\
        t ~~~ &\longmapsto \gamma_i(2\cdot t-i) ~ \text{ if } ~ t \in \left[\dfrac{i}{2},\dfrac{i+1}{2}\right], i =0,1.
    \end{align*}
    It is clear that $\gamma$ is a curve from $a$ to $c$. We have $\gamma([(0,1)]$ is exactly the union of $P_{a,b}$ and $P_{b,c}$. As in the proof of Theorem~\ref{theorem:existence-and-uniqueness-of-path}, the set of sources of $\gamma$ contains the set of sources lying on the path from $a$ to $c$. So $\gamma([0,1])$ contains $P_{a,c}$.
\end{proof}
We have the below corollary says that: If we take finite points on $\Omega_{\mathcal{L}}$, together with the sources of lines, it induces a tree (as a graph) with nodes are these points; Moreover, we have a tree metric on this tree which is $d_\mathcal{L}$.
\begin{corollary}
\label{cor:make-a-tree}
    Let $y_1, y_2, \ldots, y_m$ be points on $\Omega_{\mathcal{L}}$. Consider the graph, where $\{y_1, \ldots, y_m\} \cup \{x_l ~ \colon ~ l \in \mathcal{L}\}$ is the node set, and two nodes are adjacent if the (unique) path between this two nodes on $\Omega_{\mathcal{L}}$ does not contain any node, except them. Then this graph is a rooted tree at $x_r$, with an induced tree metric from $d_{\mathcal{L}}$.
\end{corollary}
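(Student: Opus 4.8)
The plan is to exploit two facts already established for $\Omega_\mathcal{L}$: that it is \emph{uniquely arcwise connected} (Theorem~\ref{theorem:existence-and-uniqueness-of-path}), and that $d_\mathcal{L}(a,b) = \mu_\mathcal{L}(P_{a,b})$ is a tree metric (Theorem~\ref{theorem:metrizable}). Write $G$ for the graph in the statement and $V = \{y_1,\dots,y_m\} \cup \{x_l : l \in \mathcal{L}\}$ for its vertex set. From the metric I would first record the additivity relation $d_\mathcal{L}(a,c) = d_\mathcal{L}(a,b) + d_\mathcal{L}(b,c)$ whenever $b \in P_{a,c}$, which holds because $P_{a,c} = P_{a,b}\cup P_{b,c}$ with the two arcs meeting only at $b$. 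The decisive structural observation is that every branch point of $\Omega_\mathcal{L}$, i.e.\ every point from which more than two arc-directions emanate, is a source $x_l$, and all sources belong to $V$; this is precisely what forces the combinatorial object to close up into a tree. I would assign each edge $\{u,v\}$ of $G$ the weight $d_\mathcal{L}(u,v)$.

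To prove connectedness, given vertices $u,w$ I would take the unique arc $P_{u,w}$, look at the finite set $V \cap P_{u,w}$, order its points as $u = z_0, z_1, \dots, z_q = w$ along the arc, and observe that each sub-arc $P_{z_i,z_{i+1}}$ contains no further vertex in its interior. Hence $z_i$ and $z_{i+1}$ are adjacent in $G$, and $z_0 z_1 \cdots z_q$ is a walk in $G$ joining $u$ to $w$.

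Acyclicity is where the real work lies, and this is the step I expect to be the main obstacle. I would argue that a simple cycle $v_0 v_1 \cdots v_{p-1} v_0$ in $G$ produces a topologically embedded circle in $\Omega_\mathcal{L}$, contradicting unique arc-connectedness. The subtlety is to verify that the arcs $P_{v_i,v_{i+1}}$ realizing the edges meet only at their shared endpoints: if two of them overlapped along a nondegenerate sub-arc, or crossed, the extreme point of the shared part would be a branch point, hence a source, hence a vertex of $V$ lying in the interior of an edge's arc, contradicting adjacency. Establishing this disjointness cleanly, equivalently that the intersection of any two arcs in $\Omega_\mathcal{L}$ is a sub-arc whose endpoints are branch points, is the delicate part; an alternative I would keep in reserve is to derive acyclicity purely metrically from the four-point condition satisfied by the tree metric $d_\mathcal{L}$.

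Once $G$ is connected and acyclic it is a tree; since $r \in \mathcal{L}$ its source $x_r$ lies in $V$, so we may designate $x_r$ as the root. Finally, to see that $d_\mathcal{L}$ restricted to $V$ is the tree metric induced by $G$, I would take any two vertices $u,w$, use uniqueness of the path in the tree $G$ together with uniqueness of the arc $P_{u,w}$ in $\Omega_\mathcal{L}$ to identify the $G$-path with the ordered sequence $z_0,\dots,z_q$ from the connectedness step, and then telescope the additivity relation to obtain $\sum_{i} d_\mathcal{L}(z_i,z_{i+1}) = d_\mathcal{L}(u,w)$. Thus the sum of edge weights along the unique $G$-path equals $d_\mathcal{L}(u,w)$, which is exactly the assertion that $d_\mathcal{L}$ is the tree metric of the rooted tree.
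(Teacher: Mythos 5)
Your proposal is correct, but there is nothing in the paper to compare it against line by line: the paper states Corollary~\ref{cor:make-a-tree} with no proof at all, presenting it as an immediate consequence of Theorem~\ref{theorem:existence-and-uniqueness-of-path} (existence and uniqueness of paths) and Theorem~\ref{theorem:metrizable} (that $d_{\mathcal{L}}$ is a tree metric). What you have done is supply the missing argument, and your route is essentially the one the authors implicitly intend. Your two load-bearing observations are exactly right: (i) every branch point of $\Omega_{\mathcal{L}}$ is the source of some line (in a canonical tree system, gluing happens only at sources of non-root lines), and all sources are vertices of $G$, so no edge-arc can pass through a branch point in its interior; (ii) a simple cycle in $G$ would concatenate to an embedded circle in $\Omega_{\mathcal{L}}$, contradicting uniqueness of paths. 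Your disjointness analysis for (ii) goes through: if two edge-arcs met in more than the designated shared endpoint, the extreme point of the overlap would either be a branch point (hence a source, hence a vertex interior to an edge-arc, contradicting the adjacency definition) or a vertex of the cycle sitting in the interior of another edge-arc, again contradicting adjacency. The connectedness step (ordering the finitely many points of $V \cap P_{u,w}$ along the arc) and the telescoping of the additivity relation $d_{\mathcal{L}}(a,c) = d_{\mathcal{L}}(a,b) + d_{\mathcal{L}}(b,c)$ for $b \in P_{a,c}$ are both sound, the latter because singletons are $\mu_{\mathcal{L}}$-null so the shared endpoint does not double-count. One remark on your reserve plan: deriving acyclicity from the four-point condition alone would be weaker than what you need, since the four-point condition guarantees that $(V, d_{\mathcal{L}})$ embeds in \emph{some} weighted tree but not that the specific graph $G$ defined by the adjacency rule is that tree; your topological argument is the right primary route. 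Two cosmetic points you may wish to note explicitly: distinct labels $y_i$, $x_l$ may coincide as points of $\Omega_{\mathcal{L}}$, in which case they must be merged into a single node for the statement to parse, and the simple path $z_0 z_1 \cdots z_q$ you build is automatically the unique $G$-path once acyclicity is established, which is the identification your final step uses.
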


\subsection{Construction of Tree Systems} 
\label{appendix:construction-of-tree-systems}

We present a way to construct a tree system in $\mathbb{R}^d$. First, we have a way to describe the structure of a rooted tree by a sequence of vectors. 

\begin{definition}[Tree representation]
    Let $T$ be a nonnegative integer, and $n_1, \ldots, n_T$ be $T$ positive integer. A sequence $s = \{x_i\}_{i=0}^T$, where $x_i$ is a vector of $n_i$ nonnegative numbers, is called a \textit{tree representation} if $x_0 = [1]$, and for all $ 1 \le i \le T$, $n_i$ is equal to the sum of all entries in vector $x_{i-1}$.
\end{definition}

\begin{example}
    For $T = 5$ and  $\{n_i\}_{i=1}^{5} = \{1,3,4,2,3\}$, the sequence:
    \begin{align*}
        s ~ \colon ~ &x_0 = [1] \\
        \rightarrow &x_1 = [3] \\
        \rightarrow &x_2 = [2,1,1]  \\
        \rightarrow &x_3 = [1,0,2,0] \\
        \rightarrow &x_4 = [1,2]\\
        \rightarrow &x_5 = [0,0,1].
    \end{align*}
    is a tree representation.
\end{example}

For a tree representation $s = \{x_i\}_{i=0}^T$, a \textit{tree system of type $s$} is a tree system that is inductively constructed step-by-step as follows:
\begin{enumerate}
    \item[Step 0.] Sample a point $x_r \in \mathbb{R}^d$ and a direction $\theta_r \in \mathbb{S}^{d-1}$. Define $r$ as the line that passes through $x_r$ with direction $\theta_r$. We call $r$ as the line of depth $0$.
    \item[Step i.] On the $j$-th line of depth $(i-1)$, sample $(x_i)_j$ points where $(x_i)_j$ is the $j$-th entry of vector $x_i$. For each of these points, denoted as $x_l$, sample a direction $\theta_l \in \mathbb{S}^{d-1}$, and define $l$ is the line that passes through $x_l$ with direction $\theta_l$. We call the set of all lines sampled at this step as the set of lines of depth $i$ and order them by the order that they are sampled.
\end{enumerate}

By this construction, we will get a system of lines $\mathcal{L}$ in $\mathbb{R}^d$, together with a tree structure $\mathcal{T}_r$. The pair $(\mathcal{L},\mathcal{T}_r)$ forms a tree system, which is canonical by design, and is said to be of type $s$. Denote $\mathbb{T}_s$ as the \textit{set of all tree systems of type $s$}. 

\begin{remark}
    A tree system in of type $s$ has $k = \sum_{i=0}^T \sum_{j=1}^{n_i} (x_i)_j$ lines. Observe that constructing a tree system of type $s$ only depends on sampling $k$ points and $k$ directions, so by some assumptions on the probability distribution of these points and directions, we will have a probability distribution on $\mathbb{T}_s$. Note that:

    \begin{enumerate}
        \item $x_r$ is sampled from a distribution on $\mathbb{R}^d$;
        \item For all $l \neq r$, $x_l$ is sampled from a distribution on $\mathbb{R}$;
        \item For all $l$, $\theta_l$ is sampled from a distribution on $\mathbb{S}^{d-1}$.
    \end{enumerate}
\end{remark}

We have some examples of tree presentations $s$ and distribution on $\mathbb{T}_s$.

\begin{example}[Lines pass through origin]
    Consider the tree representation $s$:
    \begin{equation}
        s ~ \colon ~ [1],
    \end{equation}
    and the distributions on $\mathbb{T}_s$ is determined by:
    \begin{enumerate}
        \item $x_r=0 \in \mathbb{R}^d$;
        \item $\theta_r \sim \mathcal{U}(\mathbb{S}^{d-1})$.
    \end{enumerate}
    In this case, $\mathbb{T}_s$ is identical to the set of lines that pass through the origin $0$.
\end{example}

\begin{example}[Concurrent lines]
    Consider the tree representation $s$:
    \begin{equation}
        s ~ \colon ~ [1] \rightarrow [k-1],
    \end{equation}
    and the distributions on $\mathbb{T}_s$ is determined by:
    \begin{enumerate}
        \item $x_r \sim \mu_r$ for an $\mu_r \in \mathcal{P}(\mathbb{R}^d)$;
        \item For all $l \neq r$, $x_l = x_r$;
        \item For all $l$, $\theta_{l} \sim \mathcal{U}(\mathbb{S}^{d-1})$;
        \item $x_r$ and all $\theta_l$'s are pairwise independent.
    \end{enumerate}
    In this case, $\mathbb{T}_s$ is identical to the set of all tuples of $n$ concurrent lines.
\end{example}

\begin{example}[Series of lines]
    Consider the tree representation $s$:
    \begin{equation}
        s ~ \colon ~ [1] \rightarrow [1] \rightarrow \ldots \rightarrow [1],
    \end{equation}
    and the distributions on $\mathbb{T}_s$ is determined by:
    \begin{enumerate}
        \item $x_r \sim \mu_r$ for an $\mu_r \in \mathcal{P}(\mathbb{R}^d)$;
        \item For all $l \neq r$, $x_{l} \sim \mu_l$ for an $\mu_l \in \mathcal{P}(\mathbb{R})$;
        \item For all $l$, $\theta_{l} \sim \mathcal{U}(\mathbb{S}^{d-1})$;
        \item All $x_l$'s and all $\theta_l$'s are pairwise independent.
    \end{enumerate}
    In this case, we call $\mathbb{T}_s$ as the set of all series of $k$ lines. This is the same as the sampling process in Subsection~\ref{subsection:construction-of-tree-system} and Algorithm~\ref{alg:sampling-seq-of-lines}.
\end{example}

\begin{example}
For an arbitrary tree representation $s$, the distributions on $\mathbb{T}_s$ is determined by:
\begin{enumerate}
    \item $x_r$ is sampled from the uniform distribution on a bounded subset of $\mathbb{R}^d$, for instance, $\mu_r \sim \mathcal{U}([0,1]^d)$;
    \item For $l \neq r$, $x_l$ will be sampled from the uniform distribution on a bounded subset of $\mathbb{R}$, for instance, $\mu_l \sim \mathcal{U}([0,1])$;
    \item For all $l$, $\theta_l$ will be sampled from the uniform distribution on $\mathbb{S}^{d-1}$, i.e $\theta_l \sim \mathcal{U}(\mathbb{S}^{d-1})$;
    \item Together with assumptions on independence between all $x_r$'s and all $\theta_l$'s.
\end{enumerate}
\end{example}

\section{Radon Transform on Systems of Lines}

We introduce the notions of the space of Lebesgue integrable functions and the space of probability distributions on a system of lines. Let $\mathcal{L}$ be a system of $k$ lines.

\subsection{Space of Lebesgue integrable functions on a system of lines}

Denote $L^1(\mathbb{R}^d)$ as the space of Lebesgue integrable functions on $\mathbb{R}^d$ with norm $\|\cdot \|_1$:
\begin{align}
    L^1(\mathbb{R}^d) = \left \{ f \colon \mathbb{R}^d \rightarrow \mathbb{R} ~ \colon ~ \|f\|_1 = \int_{\mathbb{R}^d} |f(x)| \, dx < \infty \right \}.
\end{align}
As usual, two functions $f_1, f_2 \in L^1(\mathbb{R}^d)$ are considered to be identical if $f_1(x) = f_2(x)$ almost everywhere on $\mathbb{R}^d$.

\begin{definition}[Lebesgue integrable function on a system of lines]
    A \textit{Lebesgue integrable function on $\mathcal{L}$} is a function $ f \colon \bar{L} \rightarrow \mathbb{R}$ such that:
    \begin{align}
        \|f\|_{\mathcal{L}} \coloneqq \sum_{l \in \mathcal{L}} \int_{\mathbb{R}} |f(t_x,l)|  \, dt_x < \infty.
    \end{align}
    
    The \textit{space of Lebesgue integrable functions on $\mathcal{L}$} is denoted by:
    \begin{align}
        L^1(\mathcal{L}) \coloneqq \left\{ f \colon \bar{\mathcal{L}} \rightarrow \mathbb{R} ~ \colon ~ \|f\|_{\mathcal{L}} =  \sum_{l \in \mathcal{L}} \int_{\mathbb{R}} |f(t_x,l)|  \, dt_x < \infty   \right\}.
    \end{align}
\end{definition}

\begin{remark}
As an analog of integrable functions on $\mathbb{R}^d$, two functions $f_1, f_2 \in L^1(\mathcal{L})$ are considered to be identical if $f_1(x) = f_2(x)$ almost everywhere on $\bar{\mathcal{L}}$. The space $L^1(\mathcal{L})$ with norm $\|\cdot\|_{\mathcal{L}}$ is a Banach space.
\end{remark}

%

Recall that $\mathcal{L}$ has $k$ lines, we denote the $(k-1)$-dimensional standard simplex as $\Delta_{k-1} = \left \{ \left(a_l\right)_{l \in \mathcal{L}} ~ \colon ~ a_l \ge 0 \text{ and } \sum_{l \in \mathcal{L}} a_l = 1 \right \} \subset \mathbb{R}^k$. Let $\mathcal{C}(\mathbb{R}^d, \Delta_{k-1})$ be the space of continuous function from $\mathbb{R}^d$ to $\Delta_{k-1}$. A map in $\mathcal{C}(\mathbb{R}^d, \Delta_{k-1})$ is called a \textit{splitting map}.  Let $\mathcal{L}$ be a system of lines in  $\mathbb{L}^d_k$, $\alpha$ be a map in $\mathcal{C}(\mathbb{R}^d, \Delta_{k-1})$, we define an operator associated to $\alpha$ that transforms a Lebesgue integrable functions on $\mathbb{R}^d$ to a  Lebesgue integrable functions on $\mathcal{L}$. 
For $f \in L^1(\mathbb{R}^d)$, define: \begin{align*}
\mathcal{R}_{\mathcal{L}}^{\alpha}f~ \colon ~~~\bar{\mathcal{L}}~~ &\longrightarrow \mathbb{R} \\(x,l) &\longmapsto \int_{\mathbb{R}^d} f(y) \cdot \alpha(y)_l \cdot \delta\left(t_x - \left<y-x_l,\theta_l \right>\right)  ~ dy,
\end{align*}
where $\delta$ is the $1$-dimensional Dirac delta function.

\begin{theorem}
\label{theorem:well-defined-radon-transform-for-system-of-lines}
 For $f \in L^1(\mathbb{R}^d)$, we have $\mathcal{R}_{\mathcal{L}}^{\alpha}f \in L^1(\mathcal{L})$. Moreover, we have $\|\mathcal{R}_{\mathcal{L}}^{\alpha}f\|_{\mathcal{L}} \le \|f\|_1$. In other words, the operator:
 \begin{equation}
     \mathcal{R}_{\mathcal{L}}^{\alpha} \colon L^1(\mathbb{R}^d) \to L^1(\mathcal{L}),
 \end{equation}
 is well-defined, and is a linear operator.
\end{theorem}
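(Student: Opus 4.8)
The plan is to prove the norm inequality $\|\mathcal{R}_{\mathcal{L}}^{\alpha}f\|_{\mathcal{L}} \le \|f\|_1$ directly; this simultaneously establishes that $\mathcal{R}_{\mathcal{L}}^{\alpha}f$ lies in $L^1(\mathcal{L})$ (via finiteness of the norm) and that the operator is bounded, hence well-defined. The entire argument rests on two facts built into the construction: the Dirac delta integrates to one over each line, and the coordinates $\alpha(y)_l$ of a splitting map sum to one over $l\in\mathcal{L}$ because $\alpha(y)\in\Delta_{k-1}$.

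First I would unfold the definition of the target norm and pull the absolute value inside the defining integral. Since $\alpha(y)_l\ge 0$ for every $l$ and the Dirac delta is a nonnegative kernel, the triangle inequality for integrals gives, for each fixed line $l$,
\begin{align*}
\int_{\mathbb{R}} \left| \mathcal{R}_{\mathcal{L}}^{\alpha}f(t_x,l)\right| dt_x \le \int_{\mathbb{R}} \int_{\mathbb{R}^d} |f(y)|\, \alpha(y)_l\, \delta\!\left(t_x - \langle y-x_l,\theta_l\rangle\right) dy\, dt_x.
\end{align*}
Every integrand on the right is nonnegative, so Tonelli's theorem permits swapping the $dt_x$ and $dy$ integrations. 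The inner integral $\int_{\mathbb{R}} \delta(t_x - c)\, dt_x = 1$ for the constant $c = \langle y - x_l, \theta_l\rangle$, leaving $\int_{\mathbb{R}^d} |f(y)|\,\alpha(y)_l\, dy$. Summing over the $k$ lines and interchanging the finite sum with the integral, I would then invoke $\sum_{l\in\mathcal{L}} \alpha(y)_l = 1$ to collapse the sum, obtaining $\|\mathcal{R}_{\mathcal{L}}^{\alpha}f\|_{\mathcal{L}} \le \int_{\mathbb{R}^d} |f(y)|\, dy = \|f\|_1 < \infty$. Linearity of $\mathcal{R}_{\mathcal{L}}^{\alpha}$ is then immediate, since the defining integral is linear in $f$.

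The main obstacle is making the manipulations with the Dirac delta rigorous, since $\delta$ is not an honest $L^1$ kernel and the pointwise formula for $\mathcal{R}_{\mathcal{L}}^{\alpha}f(t_x,l)$ is only meaningful for almost every $t_x$. The cleanest way I would handle this is to reinterpret each slice as a pushforward: for a fixed line $l$, let $P_l\colon \mathbb{R}^d \to \mathbb{R}$, $P_l(y) = \langle y - x_l, \theta_l\rangle$ be the coordinate along $l$, and recognize $\mathcal{R}_{\mathcal{L}}^{\alpha}f(\cdot, l)$ as the Lebesgue density on $l$ of the pushforward measure $(P_l)_\#\big(\alpha(\cdot)_l\, f\, d\mathrm{Leb}\big)$. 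Under this reading, the $dt_x$-integral of the delta is literally the statement that a pushforward preserves total mass, the Tonelli step becomes the disintegration identity for the projection $P_l$, and the displayed bound becomes the elementary fact that the total variation of a pushforward measure is at most that of the measure it is pushed forward from. Alongside this, I would record the routine measurability check that $(t_x,l)\mapsto \mathcal{R}_{\mathcal{L}}^{\alpha}f(t_x,l)$ defines a genuine element of $L^1(\mathcal{L})$ on the ground set $\bar{\mathcal{L}}$, which follows once each slice is identified with a bona fide $L^1(\mathbb{R})$ density and one uses that $\mathcal{L}$ has only finitely many lines.
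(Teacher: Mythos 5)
Your proposal is correct and follows essentially the same route as the paper's proof: pull the absolute value inside the defining integral, swap the $dt_x$ and $dy$ integrations, use $\int_{\mathbb{R}} \delta\left(t_x - \left<y-x_l,\theta_l\right>\right) dt_x = 1$, and collapse the sum over lines via $\sum_{l \in \mathcal{L}} \alpha(y)_l = 1$ to obtain $\|\mathcal{R}_{\mathcal{L}}^{\alpha}f\|_{\mathcal{L}} \le \|f\|_1$. Your closing paragraph reinterpreting each slice as the density of the pushforward $(P_l)_\#\bigl(\alpha(\cdot)_l\, f \, d\mathrm{Leb}\bigr)$ is a welcome extra layer of rigor that the paper's formal Dirac-delta computation leaves implicit (the paper invokes Fubini/Tonelli and the delta identity without justification), but it is a justification of the same argument rather than a different one.
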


\begin{proof}
    Let $f \in L^1(\mathbb{R}^d)$. We show that $\|\mathcal{R}_{\mathcal{L}}^{\alpha}f\|_{\mathcal{L}} \leq \|f\|_1$. Indeed,
    \begin{align}
    \|\mathcal{R}_{\mathcal{L}}^{\alpha}f\|_{\mathcal{L}} &= \sum_{l \in \mathcal{L}} \int_{\mathbb{R}} \left | \mathcal{R}_{\mathcal{L}}^{\alpha}f(t_x,l) \right |  \, dt_x \\
        &= \sum_{l \in \mathcal{L}} \int_{\mathbb{R}} \left |\int_{\mathbb{R}^d} f(y) \cdot \alpha(y)_l \cdot \delta\left(t_x - \left<y-x_l,\theta_l \right>\right) ~ dy \right| \, dt_x \\
        &\leq \sum_{l \in \mathcal{L}} \int_{\mathbb{R}^d} \left (\int_{\mathbb{R}} \left |f(y) \right| \cdot \alpha(y)_l \cdot \delta\left(t_x - \left<y-x_l,\theta_l \right>\right) \cdot  ~ dt_x \right) \, dy \\
        &= \sum_{l \in \mathcal{L}} \int_{\mathbb{R}^d}  \left |f(y) \right| \cdot \alpha(y)_l \cdot \left(\int_{\mathbb{R}}  \delta\left(t_x - \left<y-x_l,\theta_l \right>\right) ~ dt_x \right) \, dy \\
        &= \sum_{l \in \mathcal{L}} \int_{\mathbb{R}^d}  \left |f(y) \right| \cdot \alpha(y)_l \, dy \\
        &= \int_{\mathbb{R}^d}  \left |f(y) \right| \cdot  \sum_{l \in \mathcal{L}} \alpha(y)_l \, dy \\
        &= \int_{\mathbb{R}^d}  \left |f(y) \right| \, dy \\
        &= \|f\|_1 < \infty.
    \end{align}
    So the operator $\mathcal{R}_{\mathcal{L}}^{\alpha}$ is well-defined, and is clearly a linear operator.
\end{proof}

\begin{definition}[Radon transform on system of lines]
    For $\alpha \in \mathcal{C}(\mathbb{R}^d, \Delta_{k-1})$, the operator $\mathcal{R}^\alpha$:
    \begin{align*}
        \mathcal{R}^{\alpha}~ \colon ~ L^1(\mathbb{R}^d)~  &\longrightarrow ~ \prod_{\mathcal{L} \in \mathbb{L}^d_k} L^1(\mathcal{L}) \\ f ~~~~ &\longmapsto ~ \left(\mathcal{R}_{\mathcal{L}}^{\alpha}f\right)_{\mathcal{L} \in \mathbb{L}^d_k}.
    \end{align*}
    is called the \textit{Radon transform on a system of lines}.
\end{definition}

Many variants of Radon transform require the transforms to be injective. We show that the injectivity holds in the Radon transform on a system of lines.

\begin{theorem}
\label{appendix:theorem:injectivity-of-tsw-sl}
        $\mathcal{R}^\alpha$ is injective.
\end{theorem}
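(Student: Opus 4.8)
The plan is to reduce the injectivity of $\mathcal{R}^\alpha$ to the injectivity of the \emph{classical} Radon transform $\mathcal{R}$ on $L^1(\mathbb{R}^d)$, which is a standard fact \citep{helgason2011radon}. Suppose $f \in L^1(\mathbb{R}^d)$ satisfies $\mathcal{R}^\alpha f = 0$, that is, $\mathcal{R}^\alpha_\mathcal{L} f = 0$ for every system $\mathcal{L} \in \mathbb{L}^d_k$. I want to conclude that $f = 0$ almost everywhere.

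First I would rewrite the action of $\mathcal{R}^\alpha_\mathcal{L}$ on a single line. Writing $\left<y - x_l, \theta_l\right> = \left<y, \theta_l\right> - \left<x_l, \theta_l\right>$ inside the Dirac delta, and setting $\alpha_j(\cdot) := \alpha(\cdot)_j$ for the $j$-th coordinate of the splitting map, the defining integral collapses to a classical Radon transform of the weighted density:
\[
\mathcal{R}^\alpha_\mathcal{L} f(t_x, l) = \mathcal{R}\big(f\cdot \alpha_l\big)\big(t_x + \left<x_l, \theta_l\right>,\, \theta_l\big).
\]
Since $0 \le \alpha_j \le 1$ pointwise, each weighted function $f\cdot\alpha_j$ lies in $L^1(\mathbb{R}^d)$, so its classical Radon transform is well-defined. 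Note that $\alpha_j$ depends only on the argument $y$, never on the ambient system, while the index $j$ records the position of a line within the $k$-tuple.

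Next I would exploit that $\mathbb{L}^d_k = (\mathbb{R}^d \times \mathbb{S}^{d-1})^k$ consists of \emph{all} $k$-tuples of lines, with no constraint linking the coordinates. Fix an index $j$ and an arbitrary pair $(x,\theta) \in \mathbb{R}^d \times \mathbb{S}^{d-1}$; place the line $(x,\theta)$ in the $j$-th slot of a system and fill the remaining slots arbitrarily. The hypothesis $\mathcal{R}^\alpha_\mathcal{L} f = 0$ forces the restriction to this $j$-th line to vanish, and by the identity above this reads $\mathcal{R}(f\alpha_j)(t_x + \left<x, \theta\right>, \theta) = 0$ for all $t_x$. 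As $t_x$ ranges over $\mathbb{R}$ the shifted first argument covers all of $\mathbb{R}$, and as $\theta$ ranges over $\mathbb{S}^{d-1}$ the direction is unrestricted; hence $\mathcal{R}(f\alpha_j) \equiv 0$ on $\mathbb{R}\times\mathbb{S}^{d-1}$. Injectivity of the classical Radon transform then yields $f\cdot\alpha_j = 0$ almost everywhere, for every $j \in \{1, \ldots, k\}$.

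Finally, summing over $j$ and using the simplex constraint $\sum_{j=1}^{k} \alpha_j(y) = 1$ for every $y$, I obtain $f = f\cdot \sum_{j=1}^k \alpha_j = \sum_{j=1}^k f\alpha_j = 0$ almost everywhere, so $f = 0$ in $L^1(\mathbb{R}^d)$, proving injectivity. The only point requiring genuine care is the first step: justifying that the Dirac-delta layer integral defining $\mathcal{R}^\alpha_\mathcal{L}$ coincides with the classical $\mathcal{R}(f\alpha_l)$, together with the bookkeeping that $f\alpha_j \in L^1$ so the cited injectivity applies. The combinatorial freedom to position each coordinate line independently is what makes the reduction to a single coordinate clean, and I expect no obstruction beyond these measure-theoretic details.
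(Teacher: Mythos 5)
Your proposal is correct and takes essentially the same route as the paper's own proof: fix a line index $l$, recognize the restriction of $\mathcal{R}^\alpha_\mathcal{L} f$ to that line as the classical Radon transform of the weighted density $f \cdot \alpha(\cdot)_l$ (up to the shift $\left<x_l,\theta_l\right>$ in the first argument), invoke the classical injectivity of $\mathcal{R}$ to conclude $f\cdot\alpha(\cdot)_l = 0$ almost everywhere, and sum over $l$ using the simplex constraint $\sum_{l}\alpha(y)_l = 1$. Your write-up is in fact slightly more explicit than the paper's about the change of variables and the $L^1$ membership of $f\alpha_l$, but these are bookkeeping details within the identical argument.
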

\label{theorem:injectivity-of-radon-transfer-for-system-of-lines}

\begin{proof}
     Since $\mathcal{R}^\alpha$ is linear, we show that if $\mathcal{R}^\alpha f = 0$, then $f = 0$. Let $f \in L^1(\mathbb{R}^d)$ such that $\mathcal{R}^\alpha f = 0$, which means $\mathcal{R}^\alpha_{\mathcal{L}} = 0$ for all $\mathcal{L} \in \mathbb{L}^d_n$. Fix a line index $l$, consider the operator:
     
     \begin{equation}
         \int_{\mathbb{R}^d} f(y) \cdot \alpha(y)_l \cdot \delta\left(t_x - \left<y-x_l,\theta_l \right>\right)~ dy = 0 ~ , ~ \forall t_x \in \mathbb{R}, (x_l, \theta_l) \in \mathbb{R}^d \times \mathbb{S}^{d-1}.
     \end{equation}
     Note that for index $l$, $f(y) \cdot \alpha(y)_l$ is a function of $y$. Let $x_l$ be fixed and $\theta_l$ varies in $\mathbb{R}^d$. By the injectivity of the usual Radon transform \citep{helgason2011radon}, we have $f(x) \cdot \alpha(x)_l = 0$ for all $x \in \mathbb{R}^d$. This holds for all line index $l$, so $ f(x) =  \sum_l f(x) \cdot \alpha(x)_l = 0$. So $\mathcal{R}^\alpha$ is injective.
\end{proof}

\begin{remark}
    By the proof, we can show a stronger result as follows: Let $A$ be a subset of $\mathbb{L}^d_k$ such that for every index $l$ and $\theta \in \mathbb{S}^{d-1}$, there exists $\mathcal{L} \in A$ such that $\theta_l = \theta$, where $\theta_l$ is the direction of line with index $l$ in $\mathcal{L}$. Roughly speaking, the set of directions in $\mathcal{L}$ is $(\mathbb{S}^{d-1})^k$.
\end{remark}

\begin{remark}
Observations in~\cite{tran2024tree,tran2025distancebased,tran2025nonlinear} suggest that imposing equivariance conditions on the splitting maps may enhance the performance of the distance. Nonetheless, since the primary focus of these works is the geometric formulation of the tree-sliced distance, this aspect is not explored in detail. More broadly, equivariance and invariance are prevalent themes in machine learning, with applications across various domains, including equivariant models~\cite{tran2024clifford} and equivariant metanetworks~\cite{vo2024equivariant,tran2024equivariant,tran2024monomial,tran2024equivariant}.
\end{remark}

\subsection{Probability distributions on a system of lines}

Denote $\mathcal{P}(\mathbb{R}^d)$ as the space of all probability distribution on $\mathbb{R}^d$:
\begin{align*}
    \mathcal{P}(\mathbb{R}^d) = \left \{ f \colon \mathbb{R}^d \rightarrow [0,\infty) ~ \colon ~ \|f\|_1 = 1 \right \} \subset L^1(\mathbb{R}^d).
\end{align*}

\begin{definition}[Probability distribution on a system of lines]
    Let $\mathcal{L}$ be a system of lines. A \textit{probability distribution on $\mathcal{L}$} is a function $f \in L^1(\mathcal{L})$ such that $f \colon \bar{\mathcal{L}} \rightarrow [0,\infty)$ and $\|f\|_{\mathcal{L}} = 1$. The \textit{space of probability distribution on $\mathcal{L}$} is defined by:
    \begin{align}
        \mathcal{P}(\mathcal{L}) \coloneqq \left\{ f \colon \bar{L} \rightarrow [0,\infty) ~ \colon ~ \|f\|_{\mathcal{L}} = 1   \right\} \subset L^1(\mathcal{L}).
    \end{align}
\end{definition}

\begin{corollary}
For $f \in \mathcal{P}^1(\mathbb{R}^d)$, we have $\mathcal{R}_{\mathcal{L}}^{\alpha}f \in \mathcal{P}(\mathcal{L})$. In other words, the restricted of Radon Transform:
 \begin{equation}
     \mathcal{R}_{\mathcal{L}}^{\alpha} \colon \mathcal{P}(\mathbb{R}^d) \to \mathcal{P}(\mathcal{L}),
 \end{equation}
 is well-defined.
\end{corollary}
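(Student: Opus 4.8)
The plan is to verify directly the two conditions that define membership in $\mathcal{P}(\mathcal{L})$: that $\mathcal{R}_{\mathcal{L}}^{\alpha}f$ takes values in $[0,\infty)$, and that $\|\mathcal{R}_{\mathcal{L}}^{\alpha}f\|_{\mathcal{L}} = 1$. The key observation is that specializing to a nonnegative density turns the single inequality used in the proof of Theorem~\ref{theorem:well-defined-radon-transform-for-system-of-lines} into an equality, so almost all of the work is already done there.

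For nonnegativity, I would note that for each $(x,l) \in \bar{\mathcal{L}}$ the integrand defining $\mathcal{R}_{\mathcal{L}}^{\alpha}f(x,l)$ is a product of nonnegative factors: $f(y) \ge 0$ because $f \in \mathcal{P}(\mathbb{R}^d)$, $\alpha(y)_l \ge 0$ because $\alpha(y) \in \Delta_{k-1}$, and the Dirac delta acts as a nonnegative measure. Hence the integral is nonnegative and $\mathcal{R}_{\mathcal{L}}^{\alpha}f \colon \bar{\mathcal{L}} \to [0,\infty)$, which is the first defining property.

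For the unit-mass condition, I would retrace the chain of (in)equalities in the proof of Theorem~\ref{theorem:well-defined-radon-transform-for-system-of-lines}. The only step there that was an inequality rather than an equality is the triangle inequality $\left| \int \cdots \, dy\right| \le \int \left| \cdots \right| \, dy$. Since we have just shown $\mathcal{R}_{\mathcal{L}}^{\alpha}f \ge 0$, we have $|\mathcal{R}_{\mathcal{L}}^{\alpha}f(t_x,l)| = \mathcal{R}_{\mathcal{L}}^{\alpha}f(t_x,l)$, and the integrand $f(y)\cdot\alpha(y)_l\cdot\delta(\cdots)$ is already nonnegative, so this step becomes an equality. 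Every other step — the use of Tonelli's theorem to interchange $\int_{\mathbb{R}} \, dt_x$ with $\int_{\mathbb{R}^d} \, dy$, the evaluation $\int_{\mathbb{R}} \delta(t_x - \left< y - x_l, \theta_l\right>)\, dt_x = 1$, and the identity $\sum_{l \in \mathcal{L}} \alpha(y)_l = 1$ coming from $\alpha(y) \in \Delta_{k-1}$ — is already an equality. Chaining these gives $\|\mathcal{R}_{\mathcal{L}}^{\alpha}f\|_{\mathcal{L}} = \int_{\mathbb{R}^d} |f(y)|\, dy = \|f\|_1 = 1$.

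Combining the two conditions yields $\mathcal{R}_{\mathcal{L}}^{\alpha}f \in \mathcal{P}(\mathcal{L})$, so the restricted operator is well-defined. There is no real obstacle here: the corollary is exactly the equality case of Theorem~\ref{theorem:well-defined-radon-transform-for-system-of-lines}, and the only point deserving a word of justification is that Tonelli's theorem legitimately applies to interchange the sum over $l$ and the two integrals, which it does because all integrands are nonnegative and $f$ is integrable.
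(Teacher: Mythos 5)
Your proposal is correct and matches the paper's proof in substance: the paper likewise dismisses nonnegativity as immediate from the nonnegative integrand and then establishes $\|\mathcal{R}_{\mathcal{L}}^{\alpha}f\|_{\mathcal{L}} = 1$ by the same chain of equalities (Tonelli swap, $\int_{\mathbb{R}} \delta\left(t_x - \left<y-x_l,\theta_l\right>\right) dt_x = 1$, and $\sum_{l \in \mathcal{L}} \alpha(y)_l = 1$), which is exactly the equality case of Theorem~\ref{theorem:well-defined-radon-transform-for-system-of-lines} that you identify. Your explicit remarks on why each step is an equality and on the applicability of Tonelli are slightly more careful than the paper's terser computation, but the route is the same.
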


\begin{proof}
    Let $f \in \mathcal{P}^1(\mathbb{R}^d)$. It is clear that $\mathcal{R}_{\mathcal{L}}^{\alpha}f \colon \bar{L} \rightarrow [0,\infty)$. We show that $\|\mathcal{R}_{\mathcal{L}}^{\alpha}f\|_{\mathcal{L}} = 1$. Indeed,
    \begin{align}
    \|\mathcal{R}_{\mathcal{L}}^{\alpha}f\|_{\mathcal{L}} &= \sum_{l \in \mathcal{L}} \int_{\mathbb{R}} \mathcal{R}_{\mathcal{L}}^{\alpha}f(t_x,l)  \, dt_x \\
        &= \sum_{l \in \mathcal{L}} \int_{\mathbb{R}} \left (\int_{\mathbb{R}^d} f(y) \cdot \alpha(y)_l \cdot \delta\left(t_x - \left<y-x_l,\theta_l \right>\right) ~ dy \right) \, dt_x \\
        &= \int_{\mathbb{R}^d}  f(y) \, dy = 1.
    \end{align}
    So $\mathcal{R}_{\mathcal{L}}^{\alpha}f \in \mathcal{P}(\mathcal{L})$, and $\mathcal{R}^\alpha_{\mathcal{L}}$ is well-defined.
\end{proof}

\subsection{An illustration of TSW-SL}
\label{appendix:An illustration of TSW-SL}

In this section, we provide a visual illustration of TSW-SL through four figures: Figures~\ref{fig:R1}, \ref{fig:R2}, \ref{fig:R3}, and \ref{fig:R4}. Each figure includes an explanatory caption; however, Figure~\ref{fig:R3} is accompanied by a separate, detailed explanation below.

The explanation for Figure~\ref{fig:R3} is as follows:

The tree consists of 9 nodes. Of these, 6 correspond to projections of the original points, while the remaining 3 nodes—\( x_1 \), \( x_2 \), and \( x_3 \)—are structural nodes introduced during the construction of the sampled tree. Their roles are defined as follows:
\begin{itemize}
  \item \( x_1 \) is the root of the tree.
  \item \( x_2 \) is the source of line 2 and lies on line 1, representing the intersection of lines 1 and 2.
  \item \( x_3 \) is the source of line 3 and lies on line 2, representing the intersection of lines 2 and 3.
\end{itemize}

We compute the aggregated mass \( \mathcal{R}_\mathcal{L}^\alpha f_\mu(\Gamma(e_i)) \) for selected edges \( e_i \), where \( \Gamma(e_i) \) denotes the subtree rooted at the endpoint of \( e_i \) that is farther from the root \( x_1 \). Two examples are provided below:

\textbf{For edge \( e_3 \):}
\[
\begin{aligned}
\Gamma(e_3) &= \{x_2, a_2, b_2, x_3, b_3, a_3\}, \\
\mathcal{R}_\mathcal{L}^\alpha f_\mu(\Gamma(e_3)) &= f_\mu(x_2) + f_\mu(a_2) + f_\mu(b_2) + f_\mu(x_3) + f_\mu(b_3) + f_\mu(a_3) \\
&= 0 + \frac{9}{30} + \frac{6}{30} + 0 + \frac{4}{30} + \frac{6}{30} = \frac{25}{30}.
\end{aligned}
\]

\textbf{For edge \( e_5 \):}
\[
\begin{aligned}
\Gamma(e_5) &= \{b_2, x_3, b_3, a_3\}, \\
\mathcal{R}_\mathcal{L}^\alpha f_\mu(\Gamma(e_5)) &= f_\mu(b_2) + f_\mu(x_3) + f_\mu(b_3) + f_\mu(a_3) \\
&= \frac{6}{30} + 0 + \frac{4}{30} + \frac{6}{30} = \frac{16}{30}.
\end{aligned}
\]

Similar computations apply to the remaining edges in the tree.

\begin{figure*}[h]
  \begin{center}
    \includegraphics[width=\textwidth]{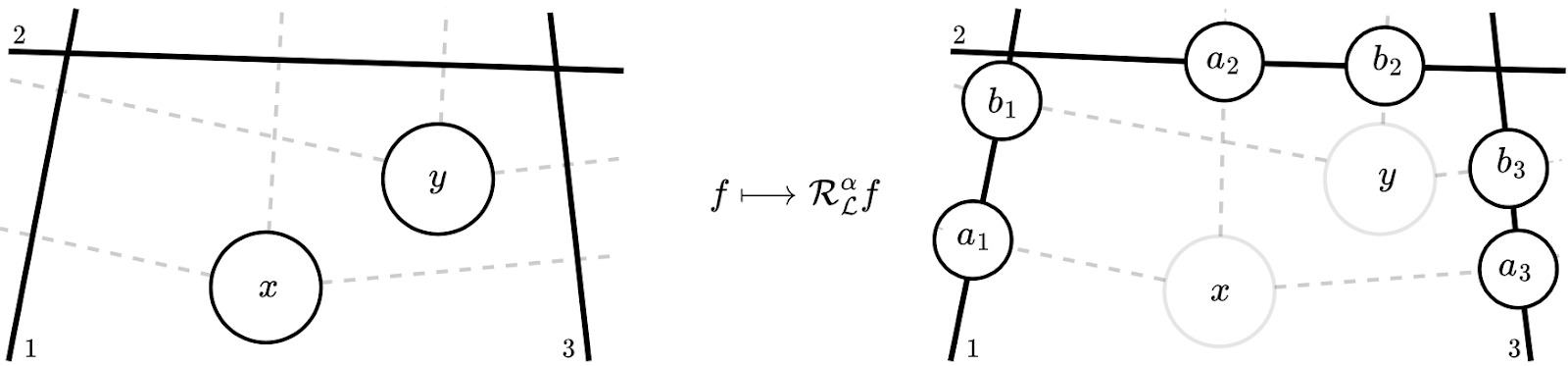}
  \end{center}
  \caption{This figure presents the projections of two points, \( x \) and \( y \), onto the three lines labeled 1, 2, and 3. The projections of \( x \) are denoted by \( a_i \), and those of \( y \) by \( b_i \), for \( i = 1, 2, 3 \).}
  \label{fig:R1}
\end{figure*}

\begin{figure*}[h]
  \begin{center}
    \includegraphics[width=\textwidth]{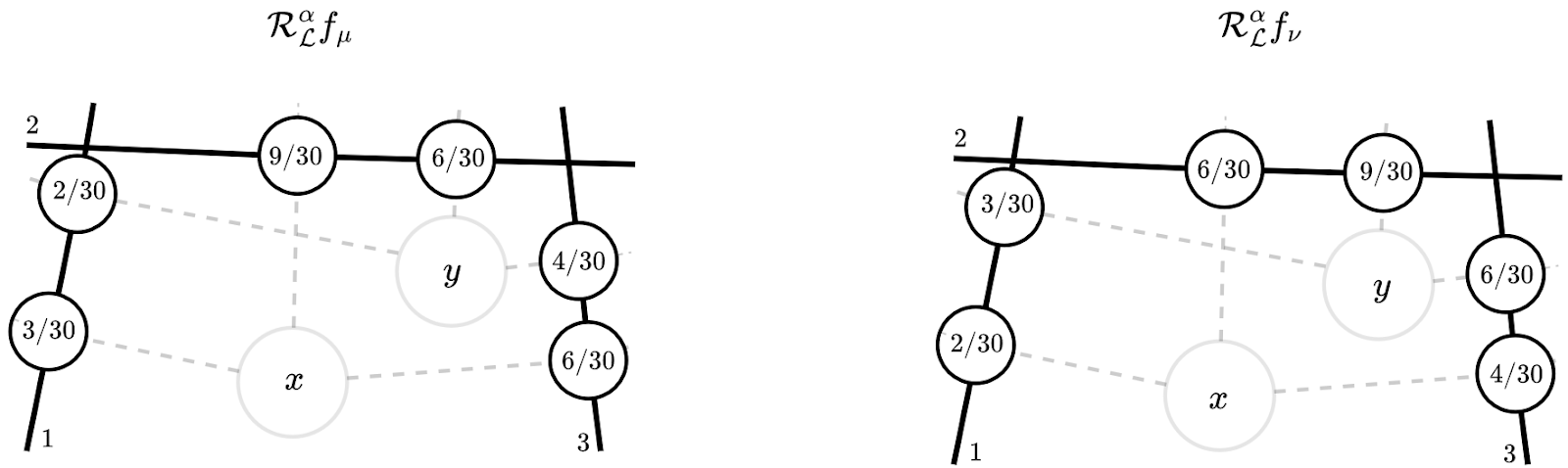}
  \end{center}
  \caption{This figure presents the mass at each projection of \( x \) and \( y \). 
    For example, the mass at \( a_1 \) is given by 
    \( \mathcal{R}_\mathcal{L}^\alpha f_\mu(a_1) = f_\mu(x) \cdot \alpha(x)_1 = 0.6 \times \frac{1}{6} = \frac{3}{30} \).}
  \label{fig:R2}
\end{figure*}

\begin{figure*}[h]
  \begin{center}
    \includegraphics[width=0.5\textwidth]{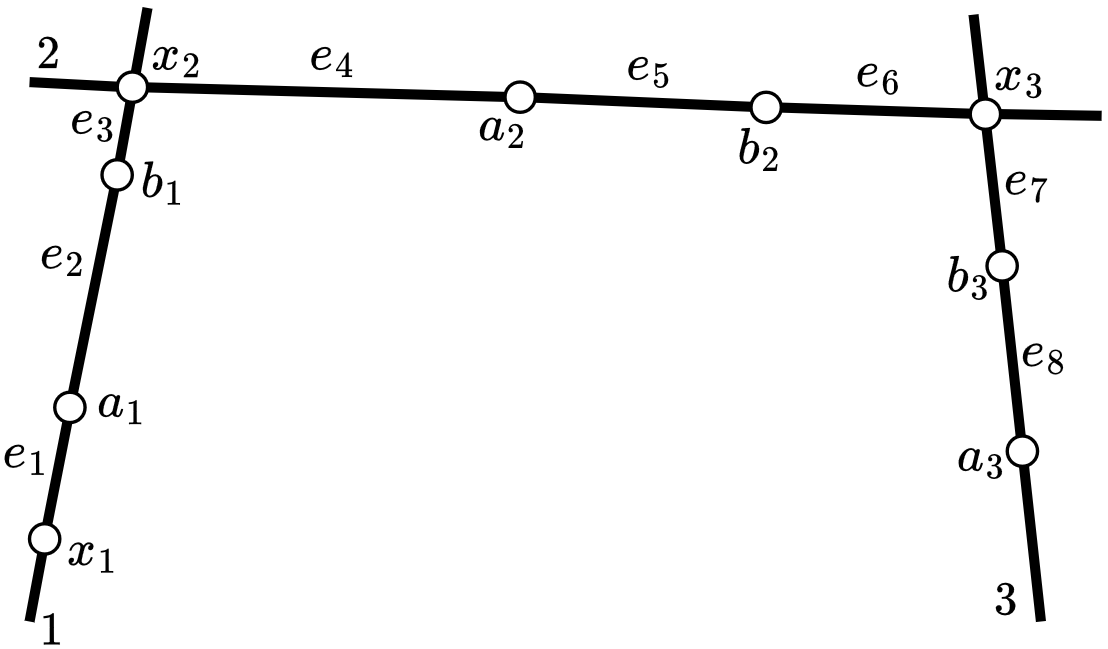}
  \end{center}
  \caption{Resulting tree after projection with 9 nodes, including projection nodes and sampled tree nodes. The detailed explanation for this Figure is provided is Section~\ref{appendix:An illustration of TSW-SL}}
  \label{fig:R3}
\end{figure*}

\begin{figure*}[h]
  \begin{center}
    \includegraphics[width=1.0\textwidth]{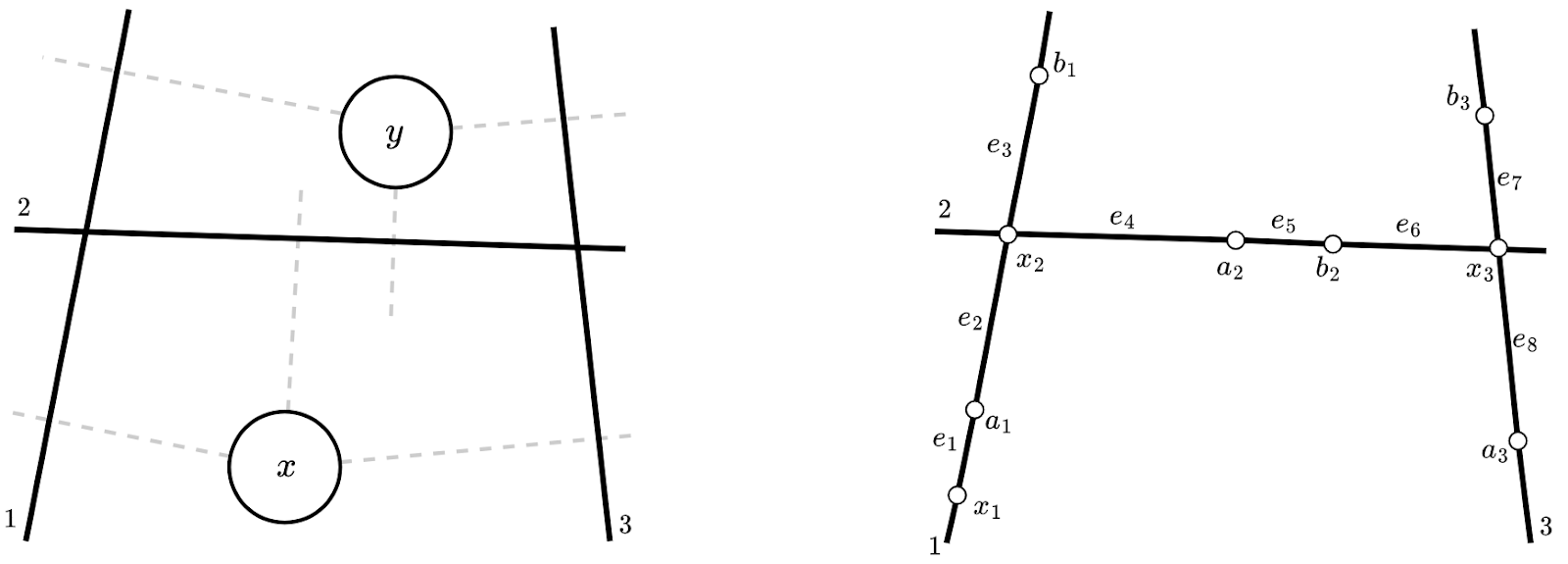}
  \end{center}
  \caption{This figure presents the outcome when \( x \) and \( y \) are placed differently in space compared to Figure~\ref{fig:R1}. The projection and mass computation steps remain unchanged (though \( \alpha(y) \) may differ due to the new location of \( y \)). However, the resulting tree structure changes—for example, the subtree \( \Gamma(e_3) \) now contains only the point \( b_1 \), while \( \Gamma(e_5) \) includes \( b_2 \), \( x_3 \), \( b_3 \), and \( a_3 \).}
  \label{fig:R4}
\end{figure*}


\section{Max Tree-Sliced Wasserstein Distance on Systems of Lines.}
\label{appendix:MaxTSW-SL}

\paragraph{Max Sliced Wasserstein distance.} Max Sliced Wasserstein (MaxSW) distance \citep{deshpande2019max} uses only one maximal projecting direction instead of multiple projecting directions as SW.
\begin{align}
\label{eq:MaxSW}
    \text{MaxSW}_p(\mu,\nu)  \coloneqq \underset{ \theta \in \mathcal{U}(\mathbb{S}^{d-1})}{\max} \Bigl[\text{W}_p (\mathcal{R}f_{\mu}(\cdot, \theta), \mathcal{R}f_{\nu}(\cdot, \theta))\Bigr] ,
\end{align}
MaxSW requires an optimization procedure to find the projecting direction. It is a metric on space of probability distributions on $\mathbb{R}^d$.

We define the Max Tree-Sliced Wasserstein distance on System of Lines (MaxTSW-SL) as follows.
\begin{definition}[Max Tree-Sliced Wasserstein Distance on Systems of Lines] 
\label{definition:MaxTSW-SL}
The \textit{Max Tree-Sliced Wasserstein Distance on Systems of Lines} between two probability distributions $\mu,\nu$ in $\mathcal{P}(\mathbb{R}^d)$ is defined by:
\begin{equation}
    \text{MaxTSW-SL} (\mu,\nu) \coloneqq 
 \underset{\mathcal{L} \in \mathbb{T}}{\max} \Bigl[\text{W}_{d_\mathcal{L},1}(\mathcal{R}^\alpha_\mathcal{L} \mu, \mathcal{R}^\alpha_\mathcal{L} \nu) \Bigr] ,
\end{equation}    
\end{definition}
$\text{MaxTSW-SL}$ is a metric on $\mathcal{P}(\mathbb{R}^d)$. The proof of the below theorem is in Appendix~\ref{appendix:proof-theorem:max-tsw-sl-is-metric}.
\begin{theorem}
    \label{theorem:max-tsw-sl-is-metric}
    $\textup{MaxTSW-SL}$ distance is a metric on $\mathcal{P}(\mathbb{R}^d)$.
\end{theorem}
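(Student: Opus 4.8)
The plan is to verify the four metric axioms for $\text{MaxTSW-SL}$ on $\mathcal{P}(\mathbb{R}^d)$, treating non-negativity, symmetry, and the triangle inequality as routine consequences of the corresponding properties of each tree-Wasserstein term, and reserving the real work for the identity of indiscernibles. The starting observation is that, for every fixed $\mathcal{L} \in \mathbb{T}$, the quantity $\text{W}_{d_\mathcal{L},1}(\mathcal{R}^\alpha_\mathcal{L}\mu, \mathcal{R}^\alpha_\mathcal{L}\nu)$ is a genuine $1$-Wasserstein distance on the metric space $(\Omega_\mathcal{L}, d_\mathcal{L})$: by Theorem~\ref{theorem:metrizable}, $d_\mathcal{L}$ is a tree metric, so the standard theory of optimal transport makes $\text{W}_{d_\mathcal{L},1}$ a metric on $\mathcal{P}(\mathcal{L})$. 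In particular each term is non-negative, symmetric in $(\mu,\nu)$, satisfies the triangle inequality, and vanishes if and only if $\mathcal{R}^\alpha_\mathcal{L}\mu = \mathcal{R}^\alpha_\mathcal{L}\nu$.

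Non-negativity and symmetry of $\text{MaxTSW-SL}$ are then immediate, since a maximum of non-negative (resp.\ symmetric) quantities is non-negative (resp.\ symmetric). For the triangle inequality, I would fix any third measure $\rho \in \mathcal{P}(\mathbb{R}^d)$ and note that for each $\mathcal{L}$,
\begin{align*}
\text{W}_{d_\mathcal{L},1}(\mathcal{R}^\alpha_\mathcal{L}\mu, \mathcal{R}^\alpha_\mathcal{L}\nu)
&\le \text{W}_{d_\mathcal{L},1}(\mathcal{R}^\alpha_\mathcal{L}\mu, \mathcal{R}^\alpha_\mathcal{L}\rho) + \text{W}_{d_\mathcal{L},1}(\mathcal{R}^\alpha_\mathcal{L}\rho, \mathcal{R}^\alpha_\mathcal{L}\nu) \\
&\le \text{MaxTSW-SL}(\mu,\rho) + \text{MaxTSW-SL}(\rho,\nu),
\end{align*}
using the triangle inequality of $\text{W}_{d_\mathcal{L},1}$ and then bounding each summand by its maximum over $\mathbb{T}$. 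Since the right-hand side is independent of $\mathcal{L}$, taking the maximum over $\mathcal{L} \in \mathbb{T}$ on the left yields the claim; this is simply subadditivity of the supremum.

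The crux is the identity of indiscernibles. One direction is trivial: $\mu = \nu$ forces every term, hence the maximum, to be $0$. For the converse, suppose $\text{MaxTSW-SL}(\mu,\nu) = 0$. Because the maximum of non-negative terms is zero, every term must vanish, so $\text{W}_{d_\mathcal{L},1}(\mathcal{R}^\alpha_\mathcal{L}\mu, \mathcal{R}^\alpha_\mathcal{L}\nu) = 0$ for all $\mathcal{L} \in \mathbb{T}$; since each $\text{W}_{d_\mathcal{L},1}$ is a metric, this gives $\mathcal{R}^\alpha_\mathcal{L}\mu = \mathcal{R}^\alpha_\mathcal{L}\nu$ for every $\mathcal{L} \in \mathbb{T}$, and I would invoke injectivity of the Radon Transform on Systems of Lines (Theorem~\ref{theorem:injectivity-of-radon-transfer-for-system-of-lines}) to conclude $\mu = \nu$. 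The main obstacle is precisely that injectivity is stated for the full family $(\mathcal{R}^\alpha_\mathcal{L})_{\mathcal{L} \in \mathbb{L}^d_k}$, whereas the maximum here ranges only over the sampled family $\mathbb{T}$; I would resolve this by appealing to the strengthened injectivity noted in the remark following Theorem~\ref{theorem:injectivity-of-radon-transfer-for-system-of-lines}, which requires only that for each line index $l$ the directions $\theta_l$ occurring in $\mathbb{T}$ exhaust $\mathbb{S}^{d-1}$. Since the sampling scheme draws every $\theta_l$ from $\mathcal{U}(\mathbb{S}^{d-1})$, the support of $\sigma$ meets this covering condition and injectivity applies. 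This parallels the argument used for $\text{TSW-SL}$ in Theorem~\ref{theorem:tsw-sl-is-metric}, with the simplification that the maximum makes the vanishing of every term automatic, avoiding the almost-everywhere reasoning needed for the integral version.
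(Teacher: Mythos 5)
Your proposal is correct and takes essentially the same route as the paper's own proof: symmetry and the triangle inequality follow from the corresponding properties of each tree-Wasserstein term together with subadditivity of the maximum, exactly as in Appendix~\ref{appendix:proof-theorem:max-tsw-sl-is-metric}. Your handling of identity of indiscernibles also matches the paper's, including the key step of invoking the strengthened injectivity remark following Theorem~\ref{theorem:injectivity-of-radon-transfer-for-system-of-lines} to bridge the gap between the full family $\mathbb{L}^d_k$ and the sampled family $\mathbb{T}$, justified by the directions in $\mathbb{T}$ exhausting $\mathbb{S}^{d-1}$.
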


We provide an algorithm to compute the MaxTSW-SL in 
 Algorithm~\ref{alg:compute-max-tsw-sl}.

  

\begin{algorithm}[H]
\caption{Max Tree-Sliced Wasserstein distance on Systems of lines.}
\begin{algorithmic}
\label{alg:compute-max-tsw-sl}
\STATE \textbf{Input:} Probability measures $\mu$ and $\nu$, the number of lines in tree system $k$, a splitting function $\alpha \colon \mathbb{R}^d \rightarrow \Delta_{k-1}$, learning rate $\eta$, max number of iterations $T$.
\STATE Initialize $x_1 \in \mathbb{R}^d$, $t_2, \ldots, t_k \in \mathbb{R}$, $\theta_1,\ldots,\theta_k \in \mathbb{S}^{d-1}$. 
\STATE Compute $\mathcal{L}$ corresponded to $(x_1,t_2,\ldots,t_k,\theta_1,\ldots,\theta_k)$.
  \WHILE{$\mathcal{L}$ not converge or reach $T$}
    \STATE $x_1 = x_1 + \eta \cdot \nabla_{x_1} \text{W}_{d_\mathcal{L},1}(\mathcal{R}^\alpha_\mathcal{L} \mu, \mathcal{R}^\alpha_\mathcal{L} \nu)$.
    \FOR{$i=2$ to $k$}
        \STATE $t_i = T_i + \eta \cdot \nabla_{t_i} \text{W}_{d_\mathcal{L},1}(\mathcal{R}^\alpha_\mathcal{L} \mu, \mathcal{R}^\alpha_\mathcal{L} \nu)$.
    \ENDFOR
    \FOR{$i=1$ to $k$}
        \STATE $\theta_i = \theta_i + \eta \cdot \nabla_{\theta_i} \text{W}_{d_\mathcal{L},1}(\mathcal{R}^\alpha_\mathcal{L} \mu, \mathcal{R}^\alpha_\mathcal{L} \nu)$.
        \STATE Normalize $\theta_i = \theta_i / \|\theta_i \|_2$. 
    \ENDFOR
  \ENDWHILE
\STATE Compute $\mathcal{L}$ corresponded to $(x_1,t_2,\ldots,t_k,\theta_1,\ldots,\theta_k)$.
\STATE Compute $\text{W}_{d_\mathcal{L},1}(\mathcal{R}^\alpha_\mathcal{L} \mu, \mathcal{R}^\alpha_\mathcal{L} \nu)$.
\STATE \textbf{Return:} $\mathcal{L}, \text{W}_{d_\mathcal{L},1}(\mathcal{R}^\alpha_\mathcal{L} \mu, \mathcal{R}^\alpha_\mathcal{L} \nu)$.
\end{algorithmic}
\end{algorithm}

\section{Theoretical Proof for injectivity of TSW-SL}
We will leave out ``almost-surely-conditions" in the proofs, as they are straightforward to verify, and including them would unnecessarily complicate the proofs.
\subsection{Proof of Theorem~\ref{theorem:tsw-sl-is-metric}}
\label{appendix:proof-theorem:tsw-sl-is-metric}
\begin{proof}
    Need to show that:
    \begin{equation}
    \text{TSW-SL}(\mu,\nu)
 \coloneqq 
 \int_{\mathbb{T}} \text{W}_{d_\mathcal{L},1}(\mathcal{R}^\alpha_\mathcal{L} \mu, \mathcal{R}^\alpha_\mathcal{L} \nu) ~d\sigma(\mathcal{L}).
\end{equation}
is a metric on $\mathcal{P}(\mathbb{R}^d)$. 
\paragraph{Positive definiteness.} For $\mu,\nu \in \mathcal{P}(\mathbb{R}^n)$, one has $\text{TSW-SL}(\mu,\mu) = 0$ and  $\text{TSW-SL}(\mu,\nu) \ge 0$. If $\text{TSW-SL}(\mu,\nu) = 0$, then $\text{W}_{d_\mathcal{L},1}(\mathcal{R}^\alpha_\mathcal{L} \mu, \mathcal{R}^\alpha_\mathcal{L} \nu) = 0$ for all $\mathcal{L} \in \mathbb{T}$. Since $\text{W}_{d_\mathcal{L},1}$ is a metric on $\mathcal{P}(\mathcal{L})$, we have $\mathcal{R}^\alpha_\mathcal{L} \mu = \mathcal{R}^\alpha_\mathcal{L} \nu$ for all $\mathcal{L} \in \mathbb{T}$. Since  $\mathbb{T}$ is a subset of $\mathbb{L}^d_k$ that satisfies the condition in the remark at the end of the proof of Theorem~\ref{appendix:theorem:injectivity-of-tsw-sl}, we conclude that $\mu =\nu$.

\paragraph{Symmetry.} For $\mu,\nu \in \mathcal{P}(\mathbb{R}^n)$, we have:
\begin{align}
    \text{TSW-SL}(\mu,\nu) &=    
 \int_{\mathbb{T}} \text{W}_{d_\mathcal{L},1}(\mathcal{R}^\alpha_\mathcal{L} \mu, \mathcal{R}^\alpha_\mathcal{L} \nu) ~d\sigma(\mathcal{L}) \\
    &=    
 \int_{\mathbb{T}} \text{W}_{d_\mathcal{L},1}(\mathcal{R}^\alpha_\mathcal{L} \nu, \mathcal{R}^\alpha_\mathcal{L} \mu) ~d\sigma(\mathcal{L}) \\
    &= \text{TSW-SL}(\nu,\mu).
\end{align}

So  $\text{TSW-SL}(\mu,\nu)  = \text{TSW-SL}(\nu,\mu)$. 

\paragraph{Triangle inequality.} For $\mu_1, \mu_2, \mu_3 \in \mathcal{P}(\mathbb{R}^n)$, we have:
\begin{align}
    &\text{TSW-SL}(\mu_1,\mu_2) + \text{TSW-SL}(\mu_2,\mu_3) \\
    &=  \int_{\mathbb{T}} \text{W}_{d_\mathcal{L},1}(\mathcal{R}^\alpha_\mathcal{L} \mu_1, \mathcal{R}^\alpha_\mathcal{L} \mu_2) ~d\sigma(\mathcal{L}) + \int_{\mathbb{T}} \text{W}_{d_\mathcal{L},1}(\mathcal{R}^\alpha_\mathcal{L} \mu_2, \mathcal{R}^\alpha_\mathcal{L} \mu_3) ~d\sigma(\mathcal{L}) \\
    & = \int_{\mathbb{T}} 
 \biggl(\text{W}_{d_\mathcal{L},1}(\mathcal{R}^\alpha_\mathcal{L} \mu_1, \mathcal{R}^\alpha_\mathcal{L} \mu_2) + \text{W}_{d_\mathcal{L},1}(\mathcal{R}^\alpha_\mathcal{L} \mu_2, \mathcal{R}^\alpha_\mathcal{L} \mu_3) \biggr) ~d\sigma(\mathcal{L}) \\
 &\ge \int_{\mathbb{T}} \text{W}_{d_\mathcal{L},1}(\mathcal{R}^\alpha_\mathcal{L} \mu_1, \mathcal{R}^\alpha_\mathcal{L} \mu_3) ~d\sigma(\mathcal{L}) \\
    &=\text{TSW-SL}(\mu_1,\mu_3).
\end{align}
The triangle inequality holds for $\text{TSW-SL}$. We conclude that TSW-SL is a metric on $\mathcal{P}(\mathbb{R}^d)$.
\end{proof}

\subsection{Proof of Theorem~\ref{theorem:max-tsw-sl-is-metric}}
\label{appendix:proof-theorem:max-tsw-sl-is-metric}
\begin{proof}
   Need to show that:
    \begin{equation}
    \text{MaxTSW-SL} (\mu,\nu) = 
 \underset{\mathcal{L} \in \mathbb{T}}{\max} \Bigl[\text{W}_{d_\mathcal{L},1}(\mathcal{R}^\alpha_\mathcal{L} \mu, \mathcal{R}^\alpha_\mathcal{L} \nu) \Bigr]
\end{equation}
is a metric on $\mathcal{P}(\mathbb{R}^d)$. 
\paragraph{Positive definiteness.} For $\mu,\nu \in \mathcal{P}(\mathbb{R}^n)$, one has $\text{MaxTSW-SL}(\mu,\mu) = 0$ and  $\text{MaxTSW-SL}(\mu,\nu) \ge 0$. If $\text{MaxTSW-SL}(\mu,\nu) = 0$, then $\text{W}_{d_\mathcal{L},1}(\mathcal{R}^\alpha_\mathcal{L} \mu, \mathcal{R}^\alpha_\mathcal{L} \nu) = 0$ for all $\mathcal{L} \in \mathbb{T}$. Since $\text{W}_{d_\mathcal{L},p}$ is a metric, we have $\mathcal{R}^\alpha_\mathcal{L} \mu = \mathcal{R}^\alpha_\mathcal{L} \nu$ for all $\mathcal{L} \in \mathbb{T}$. Since  $\mathbb{T}$ is a subset of $\mathbb{L}^d_k$ that satisfies the condition in the remark at the end of the proof of Theorem~\ref{appendix:theorem:injectivity-of-tsw-sl}, we conclude that $\mu =\nu$.

\paragraph{Symmetry.} For $\mu,\nu \in \mathcal{P}(\mathbb{R}^n)$, we have:
\begin{align}
    \text{MaxTSW-SL}(\mu,\nu) &= \underset{\mathcal{L} \in \mathbb{T}}{\max} \Bigl[\text{W}_{d_\mathcal{L},1}(\mathcal{R}^\alpha_\mathcal{L} \mu, \mathcal{R}^\alpha_\mathcal{L} \nu) \Bigr] \\
    &= \underset{\mathcal{L} \in \mathbb{T}}{\max} \Bigl[\text{W}_{d_\mathcal{L},1}(\mathcal{R}^\alpha_\mathcal{L} \nu, \mathcal{R}^\alpha_\mathcal{L} \mu) \Bigr] \\
    &= \text{MaxTSW-SL}(\nu,\mu).
\end{align}

So  $\text{MaxTSW-SL}(\mu,\nu)  = \text{MaxTSW-SL}(\nu,\mu)$. 

\paragraph{Triangle inequality.} For $\mu_1, \mu_2, \mu_3 \in \mathcal{P}(\mathbb{R}^n)$, we have:
\begin{align}
    &\text{MaxTSW-SL}(\mu_1,\mu_2) + \text{TSW-SL}(\mu_2,\mu_3) \\
    &=  \underset{\mathcal{L} \in \mathbb{T}}{\max} \Bigl[\text{W}_{d_\mathcal{L},1}(\mathcal{R}^\alpha_\mathcal{L} \mu_1, \mathcal{R}^\alpha_\mathcal{L} \mu_2) \Bigr] +  \underset{\mathcal{L}' \in \mathbb{T}}{\max} \Bigl[\text{W}_{d_{\mathcal{L}'},1}(\mathcal{R}^\alpha_{\mathcal{L}'} \mu_2, \mathcal{R}^\alpha_{\mathcal{L}'} \mu_3) \Bigr] \\
    &\ge \underset{\mathcal{L} \in \mathbb{T}}{\max} \Bigl[ \text{W}_{d_\mathcal{L},1}(\mathcal{R}^\alpha_\mathcal{L} \mu_1, \mathcal{R}^\alpha_\mathcal{L} \mu_2) + \text{W}_{d_\mathcal{L},1}(\mathcal{R}^\alpha_\mathcal{L} \mu_2, \mathcal{R}^\alpha_\mathcal{L} \mu_3) \Bigr]  \\
    &\ge \underset{\mathcal{L} \in \mathbb{T}}{\max} \Bigl[\text{W}_{d_\mathcal{L},1}(\mathcal{R}^\alpha_\mathcal{L} \mu_1, \mathcal{R}^\alpha_\mathcal{L} \mu_3) \Bigr] \\
    &=\text{MaxTSW-SL}(\mu_1,\mu_3).
\end{align}
The triangle inequality holds for $\text{MaxTSW-SL}$. We conclude that MaxTSW-SL is a metric on $\mathcal{P}(\mathbb{R}^d)$.
\end{proof}

\section{Experimental details and additional results}
\label{experimental details}
\subsection{Gradient flows}
\label{exp: gradient_flow}
\label{appendix:gradient_flow}
Gradient flow is a concept in differential geometry and dynamical systems that describes the evolution of a point or a curve under a given vector field. In the field of Sliced Wasserstein distance, this is a synthetic task that is used to evaluate the evolution of Wasserstein distance between $2$ distributions (source and target distributions) while minimizing different distances \citep{mahey2023fast, kolouri2019generalized} as a loss function. 
\paragraph{Datasets.} We use Swiss Roll, $25$-Gaussians and high-dimensional Gaussian datasets as the target distribution as in \citep{kolouri2019generalized}. The details of these datasets can be described as follows.
\begin{itemize}
\item The \textbf{Swiss Roll dataset} is a popular synthetic dataset used in machine learning, particularly for visualizing and testing dimensionality reduction techniques. It is generated using the $make\_swiss\_roll$ function of Pytorch, which creates a non-linear, three-dimensional dataset resembling a swiss roll or spiral shape. In the original version, it is a three-dimensional dataset with each dimension representing a coordinate in the $1$ axis of the points. In order to simplify it, we follow \citep{kolouri2019generalized} to just consider the two-dimensional Swiss Roll dataset by reducing the second coordinates and retaining only the first and the third coordinates. We set the number of samples to equal $100$. 

\item The \textbf{25-Gaussians dataset} is obtained by first create a grid of $25$ points spaced evenly in a $5\times5$ arrangement. For each grid point, we generate a cluster by sampling points from a Gaussian distribution centered at that grid point, with a small standard deviation. All the points from the $25$ clusters are combined, shuffled randomly, and scaled to form the final dataset.
\item The \textbf{High-dimensional Gaussian datasets} are generated by initializing a mean vector, $\mu_s$, consisting of ones across all dimensions. Each element of this mean vector is scaled by a random value to introduce variability. The covariance matrix, $\Sigma$, is created as an identity matrix scaled by a constant, ensuring independence among dimensions. Points are then sampled from the multivariate normal distribution using these parameters, resulting in a dataset of $N$ points in the specified high-dimensional space.
\end{itemize}
The Swiss Roll and 25-Gaussians datasets are presented in Figure \ref{fig:gradient_flow_data}.

\begin{figure*}[t]
\centering
\begin{tabular}{cc}
\includegraphics[width=0.5\textwidth]{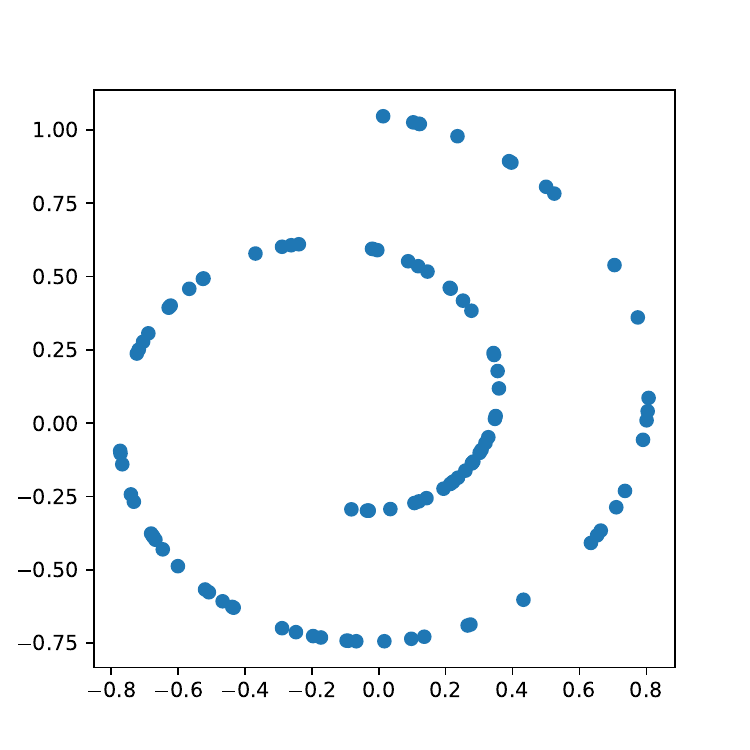} &
\includegraphics[width=0.5\textwidth]{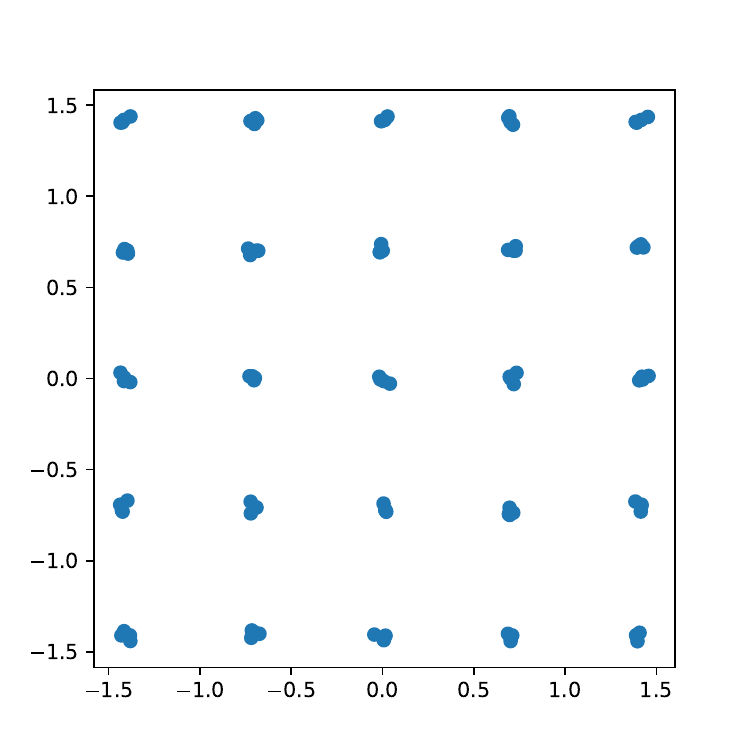} 
\\
Swiss Roll dataset & 25-Gaussians datasets
\end{tabular}
\caption{Swiss Roll and 25-Gaussians datasets for Gradient Flows task}
\label{fig:gradient_flow_data}
\end{figure*}

\paragraph{Hyperparameters.} For TSW-SL, we use $L = 25$, $k = 4$ in all experiments, while $L = 100$ is set for SW and SW-variants, with 100 points generated per distribution across datasets. Following \citep{mahey2023fast}, the global learning rate for all baselines is $5 \times 10^{-3}$. For our methods, we use $5 \times 10^{-3}$ for the 25-Gaussians and Swiss Roll datasets, and $5 \times 10^{-2}$ for the high-dimensional Gaussian datasets. We also follow \citep{mahey2023fast} in setting 100 iterations for both MaxSW and MaxTSW-SL, using a learning rate of $1 \times 10^{-4}$ for both methods.

\paragraph{Evaluation metrics.} We use the Wasserstein distance as a neutral metric to evaluate how close the model distribution $\mu(t)$ is to the target distribution $\nu$. Over $2500$ timesteps, we evaluate the Wasserstein distance between source and target distributions at iteration $500$, $1000$, $1500$, $2000$ and $2500$.

We utilize the source code adapted from~\cite {mahey2023fast} for this task.

\subsection{Color Transfer}
\label{exp:image_style_transfer}
In addition to the experimental results presented in the main text, we extend our results to show the emprical advantages of TSW-SL over SW for transferring color between images to produce results that closely match the color distributions of the target images. Given a source image and a target image, we represent their respective color palettes as matrices $X$ and $Y$, each with dimensions $n \times 3$ (where $n$ denotes the number of pixels). 

We follow \citet{nguyen2024quasimonte} to first define the curve $\dot{Z}(t)=$ $-n \nabla_{Z(t)}\left[\mathrm{SW}_{2}\left(P_{Z(t)}, P_{Y}\right)\right]$ where $P_{X}$ and $P_{Y}$ are empirical distributions over $X$ and $Y$ in turn. Here, the curve starts from $Z(0)=X$ and ends at $Y$.

We then reduce the number of colors in the images to $1000$ using K-means clustering. After that, we iterate through the curve between the empirical distribution of colors in the source image $P_{X}$ and the empirical distribution of colors in the target image $P_{Y}$ using the approximate Euler method. However, owing to the color palette values (RGB) lying within the set $\{0, \ldots, 255\}$, an additional rounding step is necessary during the final Euler iterations. We increase the number of iterations to $2000$ and utilize a step size of $1$ as in \citep{nguyen2024quasimonte} for baselines and a step size of $16$ for our experiments. We use $L=100$ in SW variants and $L=25,k=4$ in TSW-SL for a fair comparison.
We traverse along the curve connecting $P_{X}$ and $P_{Y}$, where $P_{X}$ and $P_{Y}$ denote the empirical distribution of the source and the target images respectively. More specifically, this curve (denonted as $Z(t)$) starts from $Z(0)=X$ and ends at $Y$. During optimization, we minimize the loss $\text{Loss}(Z(t), Y)$ to make the color distribution of the obtained image close to that of the target image $Y$.

We evaluate the color-transferred images obtained by various loss, including SW \citep{bonneel2015sliced}, MaxSW \citep{deshpande2019max}, and SW variants proposed in \citep{nguyen2024quasimonte} to compare with our TSW-SL and MaxTSW-SL approaches. For consistency, we set \(L=100\) for the SW variants and \(L=25, k=4\) for TSW-SL in our comparisons. We report the Wasserstein distances at the final time step along with the corresponding transferred images from various baselines in Figure \ref{fig:styletransfer_nips}. TSW-SL produces images that most closely resemble the target, demonstrating a significant reduction compared to SW and its variants mentioned above with the same number of lines. In addition, MaxTSW-SL improves upon the original MaxSW, as highlighted by both qualitative and quantitative results.
\begin{figure}[ht]
    \centering
    \includegraphics[width=1\linewidth]{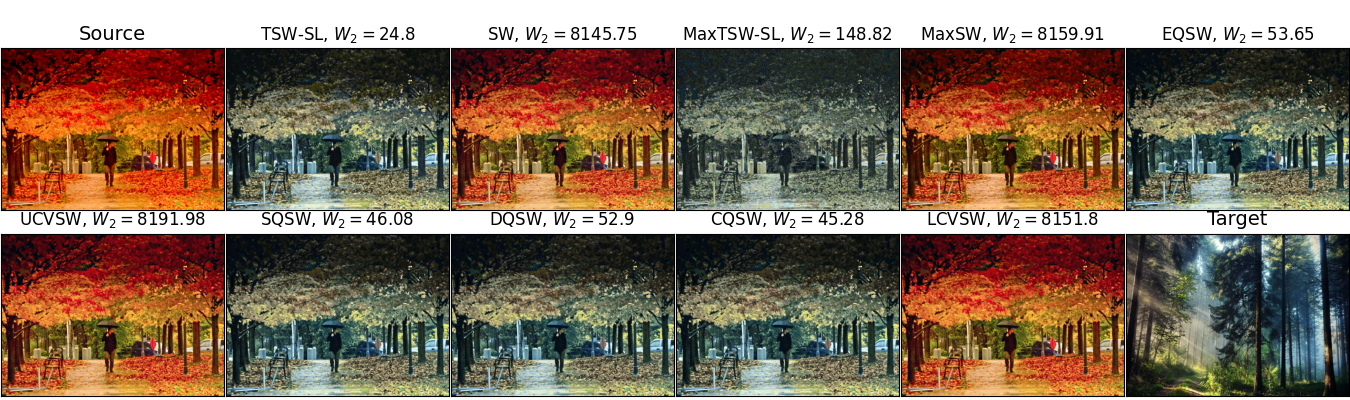}
    \caption{Color-transferred images from various baselines with $100$ projecting directions.}
    \label{fig:styletransfer_nips}
\end{figure}

\vspace{-0.5cm}
\paragraph{Evaluation metrics.} We present the Wasserstein distances at the final time step alongside the corresponding transferred images to evaluate the performance of different methods.

We utilize the source code adapted from~\cite {nguyen2021point} for this task.

\subsection{Generative adversarial network}
\label{exp: generative_models}

\paragraph{Architectures.} We denote $\mu$ as our data distribution. Subsequently, we formulate the model distribution $\nu_{\phi}$ as a resultant probability measure generated by applying a neural network $G_{\phi}$ to transform a unit multivariate Gaussian ($\epsilon$) into the data space. Additionally, we employ another neural network $T_{\beta}$ to map from the data space to a singular scalar value. More specifically, $T_{\beta_{1}}$ represents the subset neural network of $T_{\beta}$ that maps from the data space to a feature space, specifically the output of the last ResNet block, while $T_{\beta_{2}}$ maps from the feature space (the image of $T_{\beta_{1}}$) to a single scalar. Formally, $T_{\beta}=T_{\beta_{2}} \circ T_{\beta_{1}}$. We utilize the subsequent neural network architectures for $G_{\phi}$ and $T_{\beta}$:
 
\begin{itemize}
\item{\textbf{CIFAR10:}}

- $G_{\phi}: z \in \mathbb{R}^{128}(\sim \epsilon: \mathcal{N}(0,1)) \rightarrow 4 \times 4 \times 256$ (Dense, Linear) $\rightarrow$ ResBlock up $256 \rightarrow$ ResBlock up $256 \rightarrow$ ResBlock up $256 \rightarrow$ BN, ReLU, $\rightarrow 3 \times 3$ conv, 3 Tanh .

$-T_{\beta_{1}}: x \in[-1,1]^{32 \times 32 \times 3} \rightarrow$ ResBlock down $128 \rightarrow$ ResBlock down $128 \rightarrow$ ResBlock down $128 \rightarrow$ ResBlock $128 \rightarrow$ ResBlock 128.

$-T_{\beta_{2}}: \quad x \quad \in \mathbb{R}^{128 \times 8 \times 8} \rightarrow \quad$ ReLU $\quad \rightarrow \quad$ Global sum pooling $(128) \quad \rightarrow$ $1($ Spectral normalization $)$.

$-T_{\beta}(x)=T_{\beta_{2}}\left(T_{\beta_{1}}(x)\right)$.

\item{\textbf{CelebA:}}

- $G_{\phi}: z \in \mathbb{R}^{128}(\sim \epsilon: \mathcal{N}(0,1)) \rightarrow 4 \times 4 \times 256$ (Dense, Linear) $\rightarrow$ ResBlock up $256 \rightarrow$ ResBlock up $256 \rightarrow$ ResBlock up $256 \rightarrow$ ResBlock up $256 \rightarrow$ BN, ReLU, $\rightarrow$ $3 \times 3$ conv, 3 Tanh .

- $T_{\beta_{1}}: \boldsymbol{x} \in[-1,1]^{32 \times 32 \times 3} \rightarrow$ ResBlock down $128 \rightarrow$ ResBlock down $128 \rightarrow$ ResBlock down $128 \rightarrow$ ResBlock down $128 \rightarrow$ ResBlock $128 \rightarrow$ ResBlock 128.

$-T_{\beta_{2}}: \quad x \quad \in \mathbb{R}^{128 \times 8 \times 8} \rightarrow \quad$ ReLU $\quad \rightarrow \quad$ Global sum pooling(128) $\rightarrow$ $1($ Spectral normalization $)$.

$-T_{\beta}(x)=T_{\beta_{2}}\left(T_{\beta_{1}}(x)\right)$.

\item{\textbf{STL-10:}}

- $G_{\phi}: z \in \mathbb{R}^{128}(\sim \epsilon: \mathcal{N}(0,1)) \rightarrow 3 \times 3 \times 256$ (Dense, Linear) $\rightarrow$ ResBlock up $256 \rightarrow$ ResBlock up $256 \rightarrow$ ResBlock up $256 \rightarrow$ ResBlock up $256 \rightarrow$ ResBlock up $256 \rightarrow$ BN, ReLU, $\rightarrow$ $3 \times 3$ conv, 3 Tanh .

- $T_{\beta_{1}}: \boldsymbol{x} \in[-1,1]^{32 \times 32 \times 3} \rightarrow$ ResBlock down $128 \rightarrow$ ResBlock down $128 \rightarrow$ ResBlock down $128 \rightarrow$ ResBlock down $128 \rightarrow$ ResBlock down $128 \rightarrow$ ResBlock $128 \rightarrow$ ResBlock 128.

$-T_{\beta_{2}}: \quad x \quad \in \mathbb{R}^{128 \times 8 \times 8} \rightarrow \quad$ ReLU $\quad \rightarrow \quad$ Global sum pooling(128) $\rightarrow$ $1($ Spectral normalization $)$.

$-T_{\beta}(x)=T_{\beta_{2}}\left(T_{\beta_{1}}(x)\right)$.

We use the following bi-optimization problem to train our neural networks:

$$
\begin{aligned}
& \min _{\beta_{1}, \beta_{2}}\left(\mathbb{E}_{x \sim \mu}\left[\min \left(0,-1+T_{\beta}(x)\right)\right]+\mathbb{E}_{z \sim \epsilon}\left[\min \left(0,-1-T_{\beta}\left(G_{\phi}(z)\right)\right)\right]\right), \\
& \min _{\phi} \mathbb{E}_{{X \sim \mu ^{\otimes m}}, Z \sim \epsilon^{\otimes m}}\left[\mathcal{S}\left(\tilde{T}_{\beta_{1}, \beta_{2}} \sharp P_{X}, \tilde{T}_{\beta_{1}, \beta_{2}} \sharp G_{\phi} \sharp P_{Z}\right)\right],
\end{aligned}
$$

where the function $\tilde{T}_{\beta_{1}, \beta_{2}}=\left[T_{\beta_{1}}(x), T_{\beta_{2}}\left(T_{\beta_{1}}(x)\right)\right]$ which is the concatenation vector of $T_{\beta_{1}}(x)$ and $T_{\beta_{2}}\left(T_{\beta_{1}}(x)\right), \mathcal{S}$ is an estimator of the sliced Wasserstein distance. 
\end{itemize}

\paragraph{Training setup.} In our experiments, we configured the number of training iterations to $100000$ for CIFAR10, STL-10 and $50000$ for CelebA. The generator $G_{\phi}$ is updated every $5$ iteration, while the feature function $T_{\beta}$ undergoes an update each iteration. Across all datasets, we maintain a consistent mini-batch size of $128$. We leverage the Adam optimizer \citep{kingma2014adam} with parameters $\left(\beta_{1}, \beta_{2}\right)=(0,0.9)$ for both $G_{\phi}$ and $T_{\beta}$ with the learning rate $0.0002$. Furthermore, we use $50000$ random samples generated from the generator to compute the FID and Inception scores. For FID score evaluation, the statistics of datasets are computed using all training samples.

\paragraph{Results.} For qualitative analysis, Figure \ref{fig:gen_images} displays a selection of randomly generated images produced by the trained models. It is evident that utilizing our TSW-SL as the generator loss significantly enhances the quality of the generated images. Additionally, increasing the number of projections further improves the visual quality of images across all estimators. This improvement in visual quality is corroborated by the FID and IS scores presented in Table \ref{table:generative-model}.
\begin{figure*}[t]
\centering
\begin{tabular}{cccc}
\includegraphics[width=0.22\textwidth]{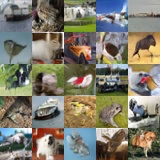} &
\includegraphics[width=0.22\textwidth]{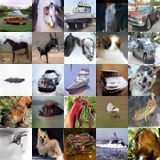} &
\includegraphics[width=0.22\textwidth]{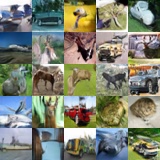} &
\includegraphics[width=0.22\textwidth]{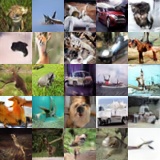} \\
\includegraphics[width=0.22\textwidth]{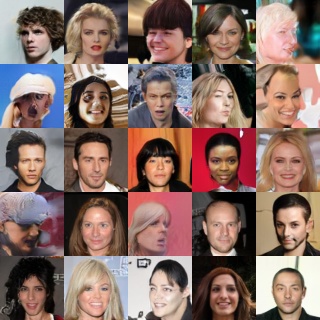} &
\includegraphics[width=0.22\textwidth]{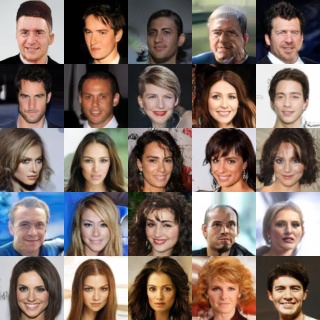} &
\includegraphics[width=0.22\textwidth]{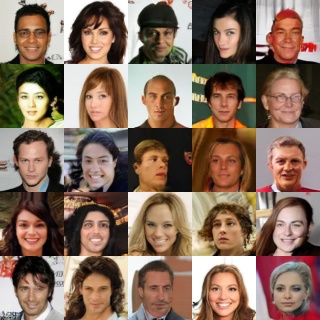} &
\includegraphics[width=0.22\textwidth]{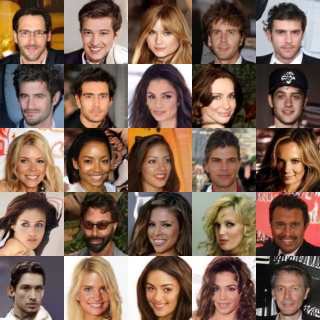}
\\
SW $L=50$ & TSW-SL ($50$ lines) & SW $L=500$ & TSW-SL ($500$ lines)
\end{tabular}
\caption{
\footnotesize{Randomly generated images of different methods on CIFAR10 and CelebA of SN-GAN}
} 
\label{fig:gen_images}
\end{figure*}

We utilize the source code adapted from~\citep {miyato2018spectral} for this task.

\paragraph{Additional results.} To fully show the empirical advantage of our methods, we conducted additional experiments on Adversarial Neural Networks on the CIFAR-10 dataset and STL-10 dataset. First of all, Table \ref{table:cifar10} presents the average FID and IS scores for different methods on the CIFAR-10 dataset. For 50 projecting directions, our TSW-SL method with 10 trees and 5 lines each ($L=10$, $k=5$) achieves the best performance, outperforming the standard SW method. Similarly, for 500 projecting directions, TSW-SL ($L=100$, $k=5$) shows superior results compared to SW. This demonstrates the consistent effectiveness of our approach across different numbers of projecting directions. Additionally, Table \ref{table:generative_model_rebuttal} illustrates the performance of generative models on the STL-10 ($96\times96$) dataset with different numbers of trees and lines compared with SW and orthogonal-SW. In our experiments, we utilize the SN-GAN architecture \citep{miyato2018spectral} for STL-10. For SW and Orthogonal-SW, we conduct experiments using 50 projecting directions. Our TSW-SL method is tested with three distinct configurations: 10 trees with 5 lines each, 13 trees with 4 lines each, and 17 trees with 3 lines each. All hyperparameters remain consistent with those used in our main paper. To evaluate the models, we generate $50000$ random images.



\begin{table}[t]
\centering
\caption{Performance of different methods on STL-10 dataset on SN-GAN architecture}
\label{table:generative_model_rebuttal}
\begin{adjustbox}{width=0.7\textwidth}

\renewcommand*{\arraystretch}{1.3}

\begin{tabular}{lccccc}
\hline
Methods & \begin{tabular}[c]{@{}c@{}}Total line\end{tabular} & \begin{tabular}[c]{@{}c@{}}No. of lines \\ per tree\end{tabular} & No. of trees & FID & IS \\ \hline
SW & 50 & - & - & 69.46 & 9.08  \\ 
Orthogonal-SW & 50 & - & - & 63.61 & 9.63 \\ 
TSW-SL (ours) & 51 & 3 & 17 & 65.93 & 9.75  \\ 
TSW-SL (ours) & 52 & 4 & 13 & \underline{62.91} & \underline{9.95}  \\ 
TSW-SL (ours) & 50 & 5 & 10 & \textbf{61.15} & \textbf{10.00}  \\ \hline
\end{tabular}
\end{adjustbox}
\end{table}

\begin{table}[ht]
\vskip 0.1in
  \centering
  \caption{Average FID and IS score of 3 runs on CIFAR-10 of SN-GAN}
  \label{table:cifar10}
\begin{adjustbox}{width=1\textwidth}
\begin{tabular}{lccclcc}
\toprule
& \multicolumn{2}{c}{\makecell[c]{}} & & \multicolumn{2}{c}{\makecell[c]{}} \\
& \multicolumn{2}{c}{\makecell[c]{50 projecting directions}} & & & \multicolumn{2}{c}{\makecell[c]{500 projecting directions}} \\
\cmidrule(lr){2-3}\cmidrule(lr){6-7}
       & FID($\downarrow$)  & IS($\uparrow$) &  &  & FID($\downarrow$)  & IS($\uparrow$) \\
\midrule
SW $(L = 50)$    & 16.80 $\pm$ 0.45 & 7.97 $\pm$ 0.05 &  & SW $(L = 500)$ & 14.23 $\pm$ 0.84 & 8.25 $\pm$ 0.05 \\
TSW-SL ($L=10, k = 5$)   & \textbf{15.44 $\pm$ 0.42 } & \textbf{8.14 $\pm$ 0.05} &  & TSW-SL ($L=100, k = 5$) & \textbf{13.27 $\pm$ 0.23} & \textbf{8.30 $\pm$ 0.01} \\
TSW-SL ($L=17, k = 3$)   & 15.9 $\pm$ 0.35 & 8.10 $\pm$ 0.04 &  & TSW-SL ($L=167, k = 3$) & 14.18 $\pm$ 0.38 & 8.28 $\pm$ 0.07 \\
\bottomrule
\end{tabular}
\end{adjustbox}
\vskip-0.1in
\end{table}

\subsection{Denoising diffusion models}
\label{appendix:diffusion_model}
In this section, we provide details about denoising diffusion models, a class of generative models that have shown remarkable success in producing high-quality samples. We first describe the forward and reverse processes that form the foundation of these models. Then, we introduce the concept of denoising diffusion GANs, which aims to accelerate the generation process. Finally, we explain how our proposed TSW-SL distance can be integrated into this framework.

The process in diffusion models is typically divided into two main parts: the forward process and the reverse process.

The forward process is defined as:
\[
q(x_{1:T} | x_0) = \prod_{t=1}^{T} q(x_t | x_{t-1}), \quad q(x_t | x_{t-1}) = \mathcal{N}(x_t; \sqrt{1 - \beta_t} x_{t-1}, \beta_t I)
\]

where the variance schedule \( \beta_1, \dots, \beta_T \) can be constant or learned hyperparameters.
The reverse process is defined as:
\[
p_\theta(x_{0:T}) = p(x_T) \prod_{t=1}^{T} p_\theta(x_{t-1} | x_t), \quad p_\theta(x_{t-1} | x_t) = \mathcal{N}(x_{t-1}; \mu_\theta(x_t, t), \Sigma_\theta(x_t, t)),
\]

where \( \mu_\theta(x_t, t) \) and \( \Sigma_\theta(x_t, t) \) are functions that provide the mean and covariance for
the Gaussian and are defined using MLPs.

The model is trained by maximizing the variational lower bound on the negative
log-likelihood:

\[
\mathbb{E}_q[- \log p_\theta(x_0)] \leq \mathbb{E}_q\left[ - \log \frac{p_\theta(x_{0:T})}{q(x_{1:T} | x_0)} \right] = L,
\]

While traditional models have successfully generated high-quality images without the need for adversarial training. However, their sampling process involves simulating a Markov chain for multiple steps, which can be time-consuming. To accelerate the generation process by reducing the number of steps \( T \), denoising diffusion GANs \citep{xiao2021tackling} propose utilizing an implicit denoising model:
\[
p_{\theta}(x_{t-1} | x_t) = \int p_{\theta}(x_{t-1} | x_t, \epsilon) G_{\theta}(x_t, \epsilon) d\epsilon, \quad \text{with} \quad \epsilon \sim \mathcal{N}(0, I).
\]
Subsequently, adversarial training is employed \citep{xiao2021tackling} to optimize the model parameters
\[
\min_{\phi} \sum_{t=1}^{T} \mathbb{E}_{q(x_t)}[D_{adv}(q(x_{t-1} | x_t) || p_{\phi}(x_{t-1} | x_t))],
\]
where \( D_{adv} \) refers to either the GAN objective or the Jensen-Shannon divergence \citep{goodfellow2020generative}. We follow the proposed Augmented Generalized Mini-batch Energy distances of \cite{nguyen2024sliced} leverage our TSW-SL distance for $D_{adv}$.

More specifically, as described by \citet{nguyen2024sliced}, the adversarial loss is replaced by the augmented generalized Mini-batch Energy (AGME) distance. For two distributions \( \mu \) and \( \nu \), with a mini-batch size \( n \geq 1 \), the AGME distance using a Sliced Wasserstein (SW) kernel is defined as:

\begin{align*}
\text{AGME}^2_{b}(\mu, \nu; g) &= \text{GME}^2_{b}(\tilde{\mu}, \tilde{\nu}),
\end{align*}

where \( \tilde{\mu} = f_{\#} \mu \) and \( \tilde{\nu} = f_{\#} \nu \), with the mapping \( f(x) = (x, g(x)) \) for a nonlinear function \( g: \mathbb{R}^d \to \mathbb{R} \). The GME is the generalized Mini-batch Energy distance \cite{salimans2018improving}, given by:

\begin{align*}
\text{GME}^{2}_b(\mu, \nu) &= 2 \mathbb{E}[D(P_X, P_Y)] - \mathbb{E}[D(P_X, P_X')] - \mathbb{E}[D(P_Y, P_Y')],
\end{align*}

where \( X, X' \overset{i.i.d.}{\sim} \mu^{\otimes m} \), \( Y, Y' \overset{i.i.d.}{\sim} \nu^{\otimes m} \), and

\begin{align*}
P_X &= \frac{1}{m} \sum_{i=1}^{m} \delta_{x_i}, \quad X = (x_1, \ldots, x_m).
\end{align*}

In the equation above, \( D \) denotes a distance function that can calculate the distance between two probability measures. To evaluate how well TSW-SL compares to other SW variants in capturing topological information, particularly when the supports lie in high-dimensional spaces, we replace \( D \) with both TSW and SW variants. We then train the generative model to assess which distance metric better quantifies the divergence between two probability distributions. A lower FID score indicates a more effective distance measure.

\paragraph{Experimental setup.} For our experiments, we adopted the architecture and hyperparameters from \citet{nguyen2024sliced}, training our models for 1800 epochs. For TSW, we employed the following hyperparameters: $L = 2500$, $k = 4$. For the vanilla SW and its variants, we adhered to the approach outlined in \citet{nguyen2024sliced}, using $L = 10000$. This consistent setup allowed us to effectively compare the performance of our proposed methods against existing approaches while maintaining experimental integrity.
\paragraph{FID plot.}  Figure \ref{fig:fid} illustrates the FID scores of SW-DD and TSW-SL-DD across epochs. Due to the wide range of FID values, from over 400 in the initial epoch to less than 3.0 in the final epochs, we present the results on a logarithmic scale for improved visualization. The plot shows that TSW-SL-DD achieves a greater reduction in FID scores compared to SW-DD during the final 300 epochs.

\begin{figure}[ht]
    \centering
    \vspace{0.2cm}
    \includegraphics[width=0.9\linewidth]{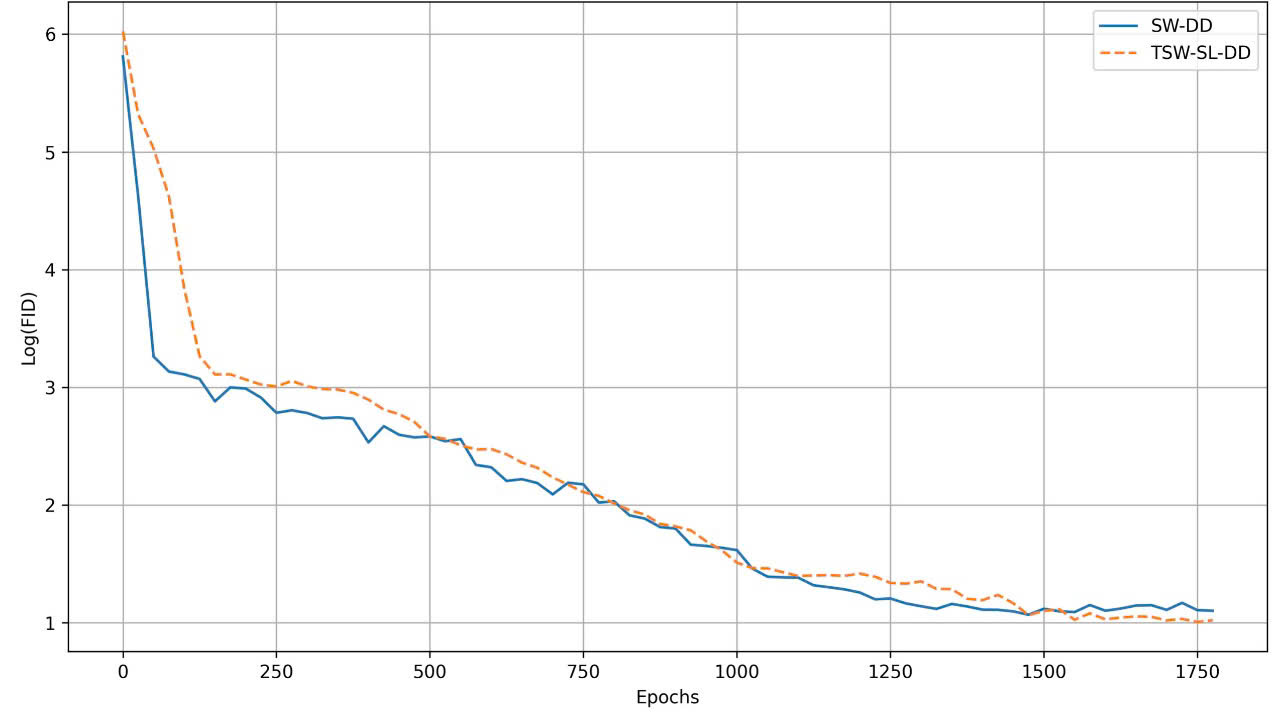}
    \caption{FID score over epochs between SW-DD and TSW-SL-DD}
    \label{fig:fid}
\end{figure}

\subsection{Computational infrastructure}
\label{exp:resources}
The experiments on gradient flow, color transfer and generative models using generative adversarial networks are conducted on a single NVIDIA A100 GPU. Training generative adversarial networks on CIFAR10 requires $14$ hours, while CelebA training takes $22$ hours. Regarding gradient flows, each dataset's experiments take approximately $3.5$ hours. For color transfer, the runtime is $15$ minutes. 

The denoising diffusion experiments were conducted parallelly on 2 NVIDIA A100 GPUs and each run takes us $81.5$ hours.

\section{Further Discussions}
\label{appendix:further_discussions}
The proposed approach, tree-sliced-Wasserstein on tree-structured systems of lines (TSW-SL), lays a foundation for further research for TSW on tree systems, especially, for its applications with dynamic-support measures. For examples,~\citet{tran2025distancebased} propose a novel splitting map for TSW-SL which takes into account the relative position between supports and lines in the tree systems.~\citep{tran2025nonlinear} derive the non-linear Randon transform for the TSW-SL. Additionally,~\citet{tran2025spherical} extend the TSW-SL for probability measures supported on a sphere.

Notice that the QuadTree (i.e., partition-based) and clustering-based tree metric sampling for tree-Wasserstein~\citep{le2019tree, indyk2003fast, lin2025treewasserstein} may not be applicable for applications with dynamic-support measures, e.g., diffusion models and generative models, since the constructed/sampled trees are bounded, i.e., from the root to leaves, limited to a finite number of nodes. Additionally, these approaches may not include the Radon transforms and guarantee its injectivity which play the key role for applications with dynamic-support measures. 

\textbf{For $p$-order Wasserstein with $p>1$.} The tree-Wasserstein (TW) is the $1$-order Wasserstein for probability measures on a tree. For more general settings, e.g., with $p>1$, or for probability  measures on a graph, one may consider a scalable variant---Sobolev transport~\citep{le2022sobolev} which also yields a closed-form expression for a fast computation, and generalizes the TW (i.e., for $p=1$, Sobolev transport is equal to TW).

\section{Broader Impact}
\label{appendix:broader_impact}
The novel Tree-Sliced Wasserstein distance on a System of Lines (TSW-SL) introduced in this paper holds significant potential for societal advancement. By refining optimal transport methodologies, TSW-SL enhances their accuracy and versatility across diverse practical domains. This approach, which synthesizes elements from both Sliced Wasserstein (SW) and Tree-Sliced Wasserstein (TSW), offers enhanced resilience and adaptability, particularly in dynamic scenarios. The resulting improvements in gradient flows, color manipulation, and generative modeling yield more potent computational tools. These advancements promise to catalyze progress across multiple sectors. In healthcare, for instance, refined image processing could elevate the precision of medical diagnostics. The creative industries stand to benefit from more sophisticated generative models, potentially revolutionizing artistic expression. Moreover, TSW-SL's proficiency in handling dynamic environments opens new avenues for real-time analytics and decision-making in fields ranging from finance to environmental monitoring. By expanding the applicability of advanced computational techniques to a wider array of real-world challenges, TSW-SL contributes to technological innovation and, consequently, to the enhancement of societal welfare.


\end{document}